\title{A Unifying Theory of Thompson Sampling\\ for Continuous Risk-Averse Bandits}
\author{
    Joel Q. L. Chang\textsuperscript{\rm 1},
    Vincent Y. F. Tan\textsuperscript{\rm 1, 2}
}
\definecolor{blue1}{RGB}{0, 0, 200}
\definecolor{blue2}{RGB}{0, 0, 220}
\definecolor{blue3}{RGB}{0, 0, 130}
\definecolor{red2}{RGB}{180,0,0}
\definecolor{purp1}{RGB}{200,50,0}
\definecolor{green1}{RGB}{0,130,0}
\definecolor{lightgray}{RGB}{190,190,190}
\newcommand{\xmark}{\text{\ding{55}}}
\newtheorem{theorem}{Theorem}
\newtheorem{corollary}{Corollary}
\newtheorem{lemma}{Lemma}
\newtheorem{proposition}{Proposition}
\newtheorem{definition}{Definition}
\newtheorem{remark}{Remark}
\newtheorem{example}{Example}
\newcommand{\paren}[1]{\left({#1}\right)}
\newcommand{\parenb}[1]{\left[{#1}\right]}
\newcommand{\parenl}[1]{\left|{#1}\right|}
\newcommand{\RR}{\mathbb{R}}
\newcommand{\NN}{\mathbb{N}}
\newcommand{\eeps}{\varepsilon}
\newcommand{\To}{\Rightarrow}
\newcommand{\norm}[1]{\parenl{\myvec{{#1}}}}
\newcommand{\del}{\partial}
\newcommand{\inv}[1]{#1^{-1}}
\renewcommand{\iff}{\Leftrightarrow}
\newcommand{\disp}{}
\newcommand{\rint}[4]{\disp \int_{#1}^{#2} {#3}\ \mathrm{d}{#4}}
\newcommand{\Fol}{\sim}
\newcommand{\seq}[1]{({#1})}
\renewcommand{\norm}[1]{\left\|{#1}\right\|}
\newcommand{\PP}{\mathbb{P}}
\newcommand{\EE}{\mathbb{E}}
\newcommand{\II}{\mathbb{I}}
\newcommand{\argmax}{\mathrm{arg\ max}}
\newcommand{\ccal}[1]{\mathcal{#1}}
\newcommand{\Dir}{\mathrm{Dir}}
\newcommand{\KL}{\mathrm{KL}}
\newcommand{\vocab}[1]{{\textit{#1}}}
\renewcommand{\leq}{\leqslant}
\renewcommand{\geq}{\geqslant}
\begin{document}

\maketitle
\begin{abstract}
\label{abstract}
This paper unifies the design and the analysis of risk-averse Thompson sampling algorithms for the multi-armed bandit problem for a class of risk functionals $\rho$ that are \vocab{continuous and dominant}.
We prove generalised concentration bounds for these continuous and dominant risk functionals and show that a wide class of popular risk functionals belong to this class.
Using our newly developed analytical toolkits, we analyse the algorithm $\rho$-MTS (for multinomial distributions) and prove that they admit asymptotically optimal regret bounds of risk-averse algorithms under the CVaR, proportional hazard, and other ubiquitous risk measures.
More generally, we prove the asymptotic optimality of $\rho$-MTS for Bernoulli distributions for a class of risk measures known as empirical distribution performance measures (EDPMs); this includes the well-known mean-variance.
Numerical simulations show that the regret bounds incurred by our algorithms are reasonably tight vis-à-vis algorithm-independent lower bounds.
\end{abstract}

\section{Introduction}
\label{submission}
Consider a $K$-armed multi-armed bandit (MAB) with unknown distributions $\nu = {(\nu_k)}_{k \in [K]}$ called \vocab{arms} and a time horizon $n$. At each time step $t \in [n]$, a learner chooses an arm $A_t \in [K]$ and obtains a random reward $X_{A_t}$ from the corresponding distribution $\nu_{A_t}$. In the vanilla MAB setting, the learner aims to maximise their expected total reward after $n$ selections, requiring a strategic balance of exploration and exploitation of the arms.
Much work has been developed in this field for L/UCB-based algorithms, and in recent developments, more Thompson sampling-based algorithms have been designed and proven to attain the theoretical asymptotic lower bounds that outperform their L/UCB-based counterparts.
However, many real-world settings include the presence of risk, which precludes the adoption of the mean-maximisation objective. Risk-averse bandits address this issue for bandit models by replacing the expected value by some measure of risk.

Recent work has incorporated risk into the analysis, with different works working with different risk measures that satisfy various properties.
In the existing literature, the more popular risk measures being considered are mean-variance \citep{sani2013riskaversion, zhu2020thompson} and conditional value-at-risk (CVaR) \citep{tamkin2020dist, kyn2020cvar, baudry2020thompson, chang2021riskconstrained}. 
In particular, CVaR is an instance of a general class of risk functionals, called \vocab{coherent risk functionals} \citep{Artzner}.
\citet{huang2021offpolicy} observed that for nonnegative rewards, coherent risk functionals are subsumed in broader class of functionals called \vocab{distorted risk fuctionals}.
Common distorted risk functionals, such as the expected value and CVaR, satisfy theoretically convenient continuity properties.

However, not much work has been done to \emph{unify} these various risk-averse algorithms to elucidate the common machinery that underlie them.
In this paper, we provide one way to unify these risk-averse Thompson sampling algorithms through \emph{continuous and dominant risk functionals}, which we denote by $\rho$.
We design two Thompson sampling-based algorithms---$\rho$-MTS and $\rho$-NPTS---to solve the modified MABs, provably achieving asymptotic optimality for $\rho$-MTS under a variety of risk functionals and empirically doing so for $\rho$-NPTS.
Therefore, we unify much of the progress made in analysing Thompson sampling-based solutions to risk-averse MABs.

\subsection{Related Work}
\label{related}
\citet{thompson1933likelihood} proposed the first Bayesian algorithm for MABs known as Thompson sampling.
\citet{LAI19854} proved a lower bound on the regret for any instance-dependent bandit algorithm for the vanilla MAB.
\citet{kaufmann2012thompson, pmlr-v23-agrawal12} analysed the Thompson sampling algorithm to solve the $K$-armed MAB for Bernoulli and Gaussian reward distributions respectively, and proved the asymptotic optimality in the Bernoulli setting relative to the lower bound given by \citet{LAI19854}.
\citet{4724951} proposed the Bayesian learning automaton that is self-correcting and converges to only pulling the optimal arm with probability $1$.
\citet{pmlr-v117-riou20a} designed and proved the asymptotic optimality of Thompson sampling on bandits which firstly follow multinomial distributions, followed by general bandits that are bounded in $[0,1]$ by discretising $[0,1]$ and using suitable approximations on each sub-interval.

Many variants of the MAB which factor risk have been considered. One popular risk measure is mean-variance. \citet{sani2013riskaversion} proposed the first U/LCB-based algorithm called MV-UCB to solve the mean-variance MAB problem.
\citet{valiki15} tightened the regret analysis of MV-UCB, establishing the order optimality of MV-UCB.
\citet{zhu2020thompson} designed and analysed the first risk-averse mean-variance bandits based on Thompson sampling which follow Gaussian distributions, providing novel tail upper bounds and a unifying framework to consider Thompson samples with various means and variances.
\citet{du2021continuous_neurips} further generalised this problem, considering continuous mean-covariance linear bandits, which specialises into the stochastic mean-variance MAB in the $1$-dimensional setting.

Another popular risk measure is \vocab{Conditional Value-at-Risk} (abbreviated as CVaR).
\citet{Galichet13} designed the L/UCB-based Multi-Armed Risk-Aware Bandit (\textsc{MaRaB}) algorithm to solve the CVaR MAB problem.
\citet{chang2021riskconstrained} and \citet{baudry2020thompson} contemporaneously designed and analysed Thompson sampling algorithms for the risk measure CVaR. The former proved near-asymptotically optimal regret bounds for Gaussian bandits, and the latter proved asymptotically optimal regret bounds for rewards in $[0,1]$ by judiciously analysing the compact spaces induced by CVaR and designing and proving new concentration bounds.

Other generalised frameworks of risk functionals have also been studied.
\citet{wang_1996} studied distorted risk functionals that generalise the expectation and CVaR, characterising the risk functionals by their distortion functions that are non-decreasing on $[0,1]$.
\citet{pmlr-v75-cassel18a} analysed 
empirical distribution performance measures (EDPMs), which are by definition continuous on the (Banach) space of bounded random variables under the uniform norm. In Table 1 therein, these EDPMs provide the interface for many instances of other popular risk functionals, such as second moment, entropic risk, and Sharpe ratio.
\citet{NEURIPS2020_9f60ab2b} studied risk-sensitive learning schemes by rejuvenating the notion of optimized certainty equivalents (OCE), which subsumes common risk functionals like expectation, entropic risk, mean-variance, and CVaR. 
\citet{huang2021offpolicy} defined  Lipschitz risk functionals which subsume many of these common risk measures under suitable smoothness assumptions; these include variance, mean-variance, distorted risk functionals, and Cumulative Prospect Theory-inspired (CPT) risk functionals.
\subsection{Contributions}
\label{contributions}
\begin{itemize}
\item We   present the key properties that any continuous and dominant risk functional (Definition~\ref{def: continuous_risk_functional}) $\rho$ possesses that are then exploited in the regret analysis of the Thompson sampling algorithms. This provides the theoretical underpinnings for our proposed Thompson sampling-based algorithms to solve any $\rho$-MAB problem.
\item We state and prove new upper and lower tail bounds for $\rho$ on multinomial distributions, generalising and unifying the underlying theory for the upper and lower bounds obtained in \citet{pmlr-v117-riou20a} and \citet{baudry2020thompson}. These new tail bounds generalise the risk functional beyond expected value \citep{pmlr-v117-riou20a} and CVaR \citep{baudry2020thompson}, to apply to continuous and dominant risk functionals.
\item We also design two Thompson sampling-based algorithms: $\rho$-MTS for bandits on multinomial distributions and $\rho$-NPTS for bandits on distributions whose rewards are bounded in any compact subset $C \subset \RR$. 
We show that for many continuous and dominant risk functionals $\rho$, $\rho$-MTS is asymptotically optimal.
Setting $\rho$ to common risk measures, we recover asymptotically optimal algorithms for the respective $\rho$-MAB problems \citep{pmlr-v117-riou20a, zhu2020thompson, baudry2020thompson}, and significantly improve on the regret bounds for Bernoulli-MVTS in \citet{zhu2020thompson}; see Remark~\ref{rmk:compare}.
\end{itemize}

\subsection{Preliminaries}
Let $\NN$ be the set of positive integers.
For any $M \in \NN$, define $[M] = \{1,\dots,M\}$ and ${[M]}_0 = [M] \cup \{0\}$.
For any $M \in \NN$, denote the $M$-probability simplex as $\Delta^M := \{p \in {[0,1]}^{M+1} : \sum_{i \in {[M]}_0} p_i=1\}$. 
For any $p,q \in \Delta^M$, we denote the $\ell_\infty$ distance between them as $$d_\infty(p,q) := \max_{i \in {[M]}_0} |p_i-q_i|.$$

Before formally stating the problem, we need to introduce some measure-theoretic and topological notions which will be essential in the analysis.

Fix a compact subset $C \subseteq \RR$.
Then $(C,|\cdot|)$ is a separable metric space with Borel $\sigma$-algebra denoted by ${\frak B}(C)$, constituting the measurable space $(C, {\frak B}(C))$.
For each $c \in C$, let $\delta_{c} := \II\{c \in \cdot\}$ denote the Dirac measure at $c$.

Let ${\ccal P}$ denote the collection of probability measures on $(C,{\frak B}(C))$.
Each $\mu \in {\ccal P}$ admits a cumulative distribution function (CDF) $F_\mu = \mu((-\infty,\cdot]) : C \to [0,1]$. Hence,  on~${\ccal P}$, we can define the \vocab{Kolmogorov-Smirnov metric} $$D_{\infty} : (\mu,\eta) \mapsto \sup_{t \in C} |F_\mu(t) - F_{\eta}(t)|.$$ We can also define the \vocab{L\'evy-Prokhorov metric} 
\begin{align*}
D_{\mathrm{L}} : (\mu,\eta)& \mapsto \inf \{\eeps > 0: \\
&\hspace{-.4in} F_\mu(x-\eeps)-\eeps \leq F_{\eta}(x) \leq F_\mu(x+\eeps) + \eeps,\forall\,  x \in \RR\}
\end{align*}on ${\ccal P}$.
Thus, $({\ccal P},d)$ is a metric space in either metric $d \in \{D_\infty,D_{\mathrm{L}}\}$.
For any $\mu,\eta \in {\ccal P}$, let $\KL(\mu,\eta) := \rint{C}{}{\log(\mathrm{d}\mu/\mathrm{d}\eta)}{\mu}$ denote the relative entropy or  Kullback-Leibler (KL) divergence between $\mu$ and $\eta$.

We will now provide three examples of compact metric subspaces $(\ccal C,d)$ of $(\ccal P,d)$ which we will utilise in our algorithms and lemmas therein.
\begin{example}[$({\ccal P}_{\ccal S}, D_\infty)$]
	{\em We first consider $({\ccal P}_{\ccal S}, D_\infty)$---the set of probability mass functions on a finite alphabet $\ccal S = \{s_0,\dots,s_M\} \subset C$ under the $D_\infty$ metric. For each $p \in \Delta^M$, define $\mu_p = \sum_{i=0}^M p_i \delta_{s_i}$, and ${\frak D}_{\ccal S} : \Delta^M \to {\ccal P}$ by $p \mapsto \mu_p$. Then ${\frak D}_{\ccal S}$ is an imbedding into ${\ccal P}$ due to  the inequality $d_\infty(p,q) \leq 2 D_{\infty}({\frak D}_{\ccal S}(p),{\frak D}_{\ccal S}(q)) \leq 2M d_\infty(p,q).$ This implies that $({\ccal C},d) := ({\frak D}_{\ccal S}(\Delta^M), D_\infty)$ is a compact metric space. For brevity, we denote ${\ccal P}_{\ccal S} :={\frak D}_{\ccal S}(\Delta^M)$.}
\end{example}
\begin{example}[$({\ccal P}, D_{\mathrm{L}})$]
	{\em By \citet[Theorem 1.12]{bams/1183523140}, ${\ccal P}$ is a compact set in the topology of weak convergence, which is metrized by the L\'evy-Prokhorov metric $D_{\mathrm{L}}$ on ${\ccal P}$. This implies that $({\ccal P}, D_{\mathrm{L}})$ is a compact metric space. Furthermore, by \citet{1055416}, $\KL(\cdot,\cdot)$ is jointly lower-semicontinuous in both arguments.
	}	
\end{example}
\begin{example}[$({\ccal P}_{\mathrm{c}}^{(B)}, D_{\mathrm{L}})$]
{\em This is the set of probability measures whose CDFs have continuous derivatives that are uniformly bounded by $B$, i.e., ${\ccal P}_{\mathrm{c}}^{(B)}:= \{\mu \in {\ccal P}: \text{$F_\mu'$ is cts on $C$ and $\sup_{c\in C}|F_\mu'(c)| \leq B$}\}$. By the Arzel\`a-Ascoli Theorem, $({\ccal P}_{\mathrm{c}}^{(B)},D_\infty) \subseteq ({\ccal P},D_\infty)$ is compact and thus as topological spaces $({\ccal P}_{\mathrm{c}}^{(B)}, D_{\mathrm{L}})=({\ccal P}_{\mathrm{c}}^{(B)},D_\infty)$ is a compact metric space.
}
\end{example}
Thus, we let $({\ccal C},d)$ denote any compact metric subspace of $(\ccal P,d)$, of which includes $({\ccal P}_{\ccal S}, D_\infty)$, $({\ccal P}, D_{\mathrm{L}})$, and $({\ccal P}_{\mathrm{c}}^{(B)}, D_{\mathrm{L}})$. Since $C$ is closed and bounded, we can assume without loss of generality that $C \subseteq [0,1]$ by rescaling.

Let ${\ccal L}_\infty$ denote the space of $C$-valued bounded random variables.
In particular, we do not place restrictions on the probability space that each $X \in {\ccal L}_\infty$ is defined on.
\begin{definition}\label{def: risk functional}
	{\em A \vocab{risk functional} is an $\RR$-valued map $\rho : {\ccal P} \to \RR$ on ${\ccal P}$. A \vocab{conventional risk functional} $\varrho : {\ccal L}_\infty \to \RR$ is an $\RR$-valued map on ${\ccal L}_\infty$.}
\end{definition}
A conventional risk functional $\varrho : {\ccal L}_\infty \to \RR$ is said to be \vocab{law-invariant} \citep{huang2021offpolicy} if for any pair of $C$-valued random variables $X_i : (\Omega_i,{\ccal F}_i, \PP_i) \to (C,{\frak B}(C))$ with probability measures $\mu_{i} := \PP_i \circ \inv X_i \in {\ccal P}$, $i=1,2$, $$\mu_{1} = \mu_{2} \To \varrho(X_1) = \varrho(X_2).$$
\begin{remark}\label{rmk: well_defined_rho}
{\em We demonstrate in the first section of the supplementary material that $\rho$ is indeed well-defined. That is, for any random variable $X$ sampled from a probability measure $\mu$ and law-invariant conventional risk functional $\varrho$, we can write $\rho(\mu) = \varrho(X)$ without ambiguity. However, we consider it more useful to assume $\rho$ whose domain is a metric space $(\ccal P,d)$, since we can apply the topological results of $(\ccal P,d)$ in the formulation of our concentration bounds.}
\end{remark}

\subsection{Paper Outline}
In the following, we first define continuous and dominant risk functionals, and state some essential properties and crucial concentration bounds that guarantee the asymptotic optimality guarantee for $\rho$-MTS.
We also provide examples of many popular risk functionals that satisfy the proposed notion of a continuous and dominant risk functional.
We then formally define the risk-averse $\rho$-MAB problem, and design two Thompson sampling-based algorithms $\rho$-MTS and $\rho$-NPTS to solve this problem.
Finally, we state our derived regret bound for $\rho$-MTS and provide a proof outline of the key ideas involved therein, thus demonstrating the asymptotic optimality of $\rho$-MTS.
This significantly generalises existing work on Thompson sampling for MABs with finite alphabets to many popular risk functionals used in practice.

%

\section{Continuous Risk Functionals}
\label{continuous_risk_functionals}
In this section, we define continuous risk functionals, which are the risk measures of interest in our Thompson sampling algorithms. We demonstrate that when $\rho$ is continuous and dominant~(see Definition~\ref{def: dominant_risk_functional}), its corresponding $\rho$-MTS and $\rho$-NPTS algorithms achieve the asymptotically optimal regret bound, the former provably and the latter empirically.
\begin{table*}[t]
	\begin{center}
	\begin{tabular}{|c||c|c|c|}
		\hline
		&&&\\[-.6em]
		Distorted risk functional & Definition of $\rho_g(\mu) = \varrho_g(X)$ & $g(x)$ & Continuity of $\rho_g$\\[.5em]\hline
		&&&\\[-.6em]
		Expectation ($\EE$) & $\EE[X]$ & $x$ & $\checkmark$\\[.2em]
		CVaR ($\mathrm{CVaR}_\alpha$) & $-\frac{1}{\alpha}\rint{0}{\alpha}{\mathrm{VaR}_\gamma(X)}{\gamma}$ & $\min \{x/(1-\alpha),1\}$ & $\checkmark$\\[.2em]
		Proportional hazard ($\mathrm{Prop}_p$) & $\rint{0}{\infty}{{(S_X(t))}^p}{t}$ & $x^p$ & $\checkmark$\\[.2em]
		Lookback ($\mathrm{LB}_q$)  & $\rint{0}{\infty}{({S_X(t))}^q(1-q\log {S_X(t)})}{t}$ & $x^q(1-q\log x)$ & $\checkmark$\\[.2em]
		VaR ($\mathrm{VaR}_\alpha$) & $-\inf \{x \in \RR : F_X(x) > \alpha\}$ & $\II\{x \geq 1-\alpha\}$ & $\xmark$\\[.2em]\hline
	\end{tabular}
	\caption{A table of common distorted risk functionals, where $S_X(t) := 1 - F_X(t)$ denotes the \vocab{decumulative} distribution function \citep{wang_1996}. The risk functionals indicated by $\checkmark$ admit the asymptotically optimal result in Theorem~\ref{thm: rho_mts_upper_bound}.
	}
	\label{table: common_drf}
	\end{center}
\end{table*}
\begin{table*}[t]
	\begin{center}
	\begin{tabular}{|c||c|c|}
		\hline
		&&\\[-.6em]
		EDPM & $U : \ccal P \to \RR$ & Convexity\\[.5em]\hline
		&&\\[-.6em]
		Expectation & $U^{\mathrm{ave}}(\nu) := \EE[X]$ & $\checkmark$\\[.2em]
		Second moment & $U^{\EE^2}(\nu) := \EE[X^2]$ & $\checkmark$\\[.2em]
		Below target semi-variance & $U^{-\mathrm{TSV}_r}(\nu) := -\EE[{(X-r)}^2\II\{X\leq r\}]$ & $\checkmark$\\[.2em]
		Entropic risk & $U^{\mathrm{ent}_\theta}(\nu) := -\frac{1}{\theta} \log(\EE[-\theta X])$ & $\checkmark$\\[.2em]
		Negative variance & $U^{-\sigma^2}(\nu) := -(U^{\EE^2}(\nu) - {(U^{\mathrm{ave}}(\nu))}^2)$ & $\checkmark$\\[.2em]
		Mean-variance & $U^{\mathrm{MV}_\rho}(\nu) := \gamma\, U^{\mathrm{ave}}(\nu) + U^{-\sigma^2}(\nu)$ & $\checkmark$\\[.2em]
		Sharpe ratio & $U^{\mathrm{Sh}_r}(\nu) := \frac{U^{\mathrm{ave}}(\nu)-r}{\sqrt{\eeps_\sigma - U^{-\sigma^2}(\nu)}}$ & $\xmark$\\[.2em]
		Sortino ratio & $U^{\mathrm{So}_r}(\nu) := \frac{U^{\mathrm{ave}}(\nu)-r}{\sqrt{\eeps_\sigma - U^{-\mathrm{TSV}_r}(\nu)}}$ & $\xmark$\\[.2em]\hline
	\end{tabular}
	\caption{A table of EDPMs, where $U : \ccal P \to \RR$ characterises each risk functional \citep{pmlr-v75-cassel18a}.
	When $M=1$ (i.e. in the case of Bernoulli bandits), the non-asymptotic lower bound in Lemma~\ref{lem: lower_bound} and the asymptotically optimal result in Theorem~\ref{thm: rho_mts_upper_bound} hold for risk functionals indicated by $\checkmark$; see Remark~\ref{rmk:compare}.}
	\label{table: common_EDPM}
	\end{center}
\end{table*}
\begin{definition}[Continuous Risk Functional]\label{def: continuous_risk_functional}
{\em Let ${\ccal P}$ be equipped with the metric $d$. A risk functional $\rho$ is said to be
\vocab{continuous at $\mu \in {\ccal P}$} if for any $\eeps > 0$, there exists $\delta > 0$, which may depend on $\mu \in {\ccal P}$, such that
\begin{equation}\label{eqn: continuity_of_rho}
d(\mu,\eta) < \delta \To \parenl{\rho(\mu)-\rho(\eta)} < \eeps.	
\end{equation}
We say that $\rho$ is \vocab{continuous} on a subset $\ccal Q \subseteq \ccal P$ if it is continuous at every $\mu \in \ccal Q$. We say that $\rho$ is \vocab{uniformly continuous} on ${\ccal Q}$ if for any $\eeps > 0$, there exists $\delta > 0$ that does not depend on $\mu \in {\ccal Q}$, such that (\ref{eqn: continuity_of_rho}) holds.}
\end{definition}
We also remark that for any compact metric subspace $({\ccal C},d) \subseteq ({\ccal P},d)$ and continuous risk functional $\rho$, $\rho|_{{\ccal C}}$ is uniformly continuous on $({\ccal C},d)$.

Let $({{\ccal C}},d)$ be any of the three compact metric spaces $({\ccal P}_{\ccal S},D_\infty)$, $ (\ccal P,D_{\mathrm{L}})$, $({\ccal P}_{\mathrm{c}}^{(B)},D_{\mathrm{L}})$.
For any risk functional $\rho: \ccal{P} \to \RR$, define 
\begin{align*}
{\ccal K}_{\inf}^\rho(\mu,r) &:= \inf_{\eta \in {{\ccal C}}}\{\KL(\mu,\eta) : \rho(\eta) \geq r\}.
\end{align*}
In the case $(\ccal C, d) = ({\ccal P}_{\ccal S}, D_\infty)$ for some fixed alphabet $\ccal S$, define $\sigma_{\rho,\ccal S} := \rho \circ {\frak D}_{\ccal S} : \Delta^M \to \RR$. We note that $\sigma_{\rho,\ccal S}$ is continuous on $\Delta^M$ if $\rho$ is continuous on $({\ccal P}_{\ccal S},D_\infty)$.

By Lemma~18 in \citet{pmlr-v117-riou20a} $\rho$ being continuous on $({\ccal P},D_{\mathrm{L}})$ implies its continuity on $({\ccal P},D_\infty)$, and $\rho$ being continuous on $({\ccal P}_{\mathrm{c}}^{(B)}, D_\infty)$ implies its continuity on $({\ccal P}_{\mathrm{c}}^{(B)}, D_{\mathrm{L}})$. This conclusion is consistent with that in \citet{baudry2020thompson} whose B-CVTS algorithm assumes that the reward distributions are continuous.
\subsubsection{Tail Upper Bound}
Risk functionals $\rho$ that are continuous on ${\ccal P}_{\ccal S}$ satisfy a generalization of the tail upper bound developed by \citet{pmlr-v117-riou20a}.
\begin{lemma}\label{lem: upper_bound}
	Let $\rho : \ccal P \to \RR$ be a risk functional continuous on $({\ccal P}_{\ccal S},D_\infty)$ for some finite alphabet $\ccal S$ of size $M+1$, and $r \in \RR$. Fix $\alpha \in \NN^{M+1}$, $n= \sum_{i=0}^M \alpha_i$, and $p = \alpha/n$.
	Then for any random variable $L \sim \Dir(\alpha)$,
	\begin{align*}
	\PP(\sigma_{\rho,\ccal S}(L) \geq r) &\leq C_1n^{M/2} \exp(-n {\ccal K}_{\inf}^{{\rho}_{\ccal S}}({\frak D}_{\ccal S}(p),r));\\
	\PP(\sigma_{\rho,\ccal S}(L) \leq r) &\leq C_1n^{M/2} \exp(-n {\ccal K}_{\inf}^{{\rho}_{\ccal S}}({\frak D}_{\ccal S}(p),r)),
	\end{align*}
	where $C_1 := {\Gamma(M+1)}^{-1}{(2\pi)}^{-M/2}e^{1/12}$.
\end{lemma}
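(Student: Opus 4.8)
The plan is to pass from the random variable $L\sim\Dir(\alpha)$ to an integral of its Dirichlet density over a fixed closed region, and then to split the estimate into a Stirling computation for the normalising constant and a single integral bound that carries all of the geometry. First I would record that, since $\rho$ is continuous on $({\ccal P}_{\ccal S},D_\infty)$, the map $\sigma_{\rho,\ccal S}=\rho\circ{\frak D}_{\ccal S}$ is continuous on $\Delta^M$, so the superlevel set $E_r:=\{q\in\Delta^M:\sigma_{\rho,\ccal S}(q)\geq r\}$ is closed, hence compact. Because $\KL({\frak D}_{\ccal S}(p),{\frak D}_{\ccal S}(q))=\sum_i p_i\log(p_i/q_i)$ is the discrete divergence between the two multinomials, the rate in the statement is exactly $\kappa:={\ccal K}_{\inf}^{{\rho}_{\ccal S}}({\frak D}_{\ccal S}(p),r)=\inf_{q\in E_r}\sum_i p_i\log(p_i/q_i)$, and by lower semicontinuity of $\KL$ on the compact $E_r$ this infimum is attained (possibly at the value $+\infty$). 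If $\kappa=+\infty$ then $E_r$ lies in the Lebesgue-null face $\{q:q_i=0\text{ for some }i\}$ (recall $p_i=\alpha_i/n>0$ for all $i$ since $\alpha_i\geq 1$), so $\PP(L\in E_r)=0$ and the bound is trivial; hence I assume $\kappa<\infty$, whereby the minimiser $q^\ast$ has all coordinates strictly positive.

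Next I would write the probability as the Dirichlet integral
\[
\PP(\sigma_{\rho,\ccal S}(L)\geq r)=\frac{\Gamma(n)}{\prod_{i=0}^M\Gamma(\alpha_i)}\int_{E_r}\prod_{i=0}^M x_i^{\alpha_i-1}\,\mathrm{d}x,
\]
and factor the integrand through the divergence via $\prod_i x_i^{\alpha_i-1}=\big(\prod_i p_i^{np_i}\big)\,e^{-n\KL(p,x)}\prod_i x_i^{-1}$. The estimate then separates into two pieces. For the prefactor, Stirling's formula in the form $\Gamma(z)=\sqrt{2\pi}\,z^{z-1/2}e^{-z}e^{\mu(z)}$ with $0<\mu(z)<1/(12z)$ gives, after the exact cancellations $\prod_i e^{-\alpha_i}=e^{-n}$ and $n^{n-1/2}/\prod_i(np_i)^{np_i-1/2}=n^{M/2}\prod_i p_i^{-(np_i-1/2)}$,
\[
\frac{\Gamma(n)}{\prod_{i=0}^M\Gamma(\alpha_i)}\,\prod_{i=0}^M p_i^{\,np_i-1/2}\ \leq\ (2\pi)^{-M/2}e^{1/12}\,n^{M/2},
\]
where the constant $e^{1/12}$ arises because $e^{\mu(n)-\sum_i\mu(\alpha_i)}\leq e^{\mu(n)}\leq e^{1/12}$. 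Combining this with the target integral bound
\[
\int_{E_r}\prod_{i=0}^M x_i^{\alpha_i-1}\,\mathrm{d}x\ \leq\ \frac{e^{-n\kappa}}{\Gamma(M+1)}\prod_{i=0}^M p_i^{\,np_i-1/2},
\]
in which $\Gamma(M+1)^{-1}=1/M!$ is the Lebesgue volume of $\Delta^M$, yields $\PP(\sigma_{\rho,\ccal S}(L)\geq r)\leq C_1 n^{M/2}e^{-n\kappa}$ with the advertised $C_1$.

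The main obstacle is this last displayed integral bound, and the difficulty is entirely at the boundary $\partial\Delta^M$. One cannot simply bound $e^{-n\KL(p,x)}\leq e^{-n\kappa}$ on $E_r$ and integrate the remaining envelope, because the Jacobian factor $\prod_i x_i^{-1}$ produced by the shift between $\alpha_i-1$ and $np_i$ is non-integrable on $\partial\Delta^M$; equivalently, every device that pulls the full exponential rate $e^{-n\kappa}$ out of the integral leaves behind a divergent mass, and every device that keeps a fixed fraction of the exponent to tame the singularity reintroduces an uncontrolled factor $e^{c\kappa}$. What saves the estimate is that $\KL(p,x)\to+\infty$ as $x\to\partial\Delta^M$ strictly faster than $\prod_i x_i^{-1}$ blows up (indeed $\prod_i x_i^{\alpha_i-1}$ is genuinely integrable, since each $\alpha_i\geq 1$), so the excess boundary mass is negligible. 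I would therefore localise the estimate around the $\KL$-projection $q^\ast$ of $p$ onto $E_r$, treating the interior bulk by the exponential envelope and the thin boundary layer by the genuine integrability of the Dirichlet kernel, and verify that the two contributions together do not exceed $\Gamma(M+1)^{-1}e^{-n\kappa}\prod_i p_i^{np_i-1/2}$; this is where the bulk of the technical work lies. Finally, the lower-tail inequality follows by the identical argument applied to the closed sublevel set $\{q\in\Delta^M:\sigma_{\rho,\ccal S}(q)\leq r\}$, with ${\ccal K}_{\inf}$ read with the reversed constraint, so no new ideas are needed.
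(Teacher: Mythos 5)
Your setup is correct as far as it goes---the reduction to a Dirichlet integral, the factorisation $\prod_i x_i^{\alpha_i-1}=\bigl(\prod_i p_i^{np_i}\bigr)e^{-n\KL(p,x)}\prod_i x_i^{-1}$, the null-set treatment of the case $\kappa=+\infty$, and the Stirling computation for the normalising constant are all sound---but the argument stops exactly where the lemma begins. The displayed integral bound $\int_{E_r}\prod_i x_i^{\alpha_i-1}\,\mathrm{d}x\leq \Gamma(M+1)^{-1}e^{-n\kappa}\prod_i p_i^{np_i-1/2}$ is asserted, not proven: you yourself note that the naive envelope argument fails because of the non-integrable factor $\prod_i x_i^{-1}$, and you then defer the resolution (``this is where the bulk of the technical work lies''). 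Since everything before that point is bookkeeping, the proposal is a reduction of the lemma to an unproven claim, not a proof. Worse, the claim with that exact constant is false in general: take $M=1$, $\alpha=(1,1)$ (so $n=2$, $p=(1/2,1/2)$) and $r$ below $\min_{\Delta^1}\sigma_{\rho,\ccal S}$, so that $E_r=\Delta^1$ and $\kappa=0$; the left side is $\mathrm{Vol}(\Delta^1)=1$ while the right side is $\Gamma(2)^{-1}(1/2)^{1/2}(1/2)^{1/2}=1/2$. More generally, for $E_r=\Delta^M$ your claim is equivalent, via your own Stirling identity, to $M!\,(2\pi)^{M/2}e^{\sum_i\mu(\alpha_i)-\mu(n)}\leq n^{M/2}$, which holds only when $n$ is large relative to $M$; so any proof along your lines must restrict $n$ or reorganise the estimate. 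Finally, the sketched fix---localise around the $\KL$-projection $q^*$, treat the ``interior bulk'' by the envelope and the ``boundary layer'' by integrability---does not address the hard case: when the constraint set hugs $\partial\Delta^M$, the minimiser $q^*$ itself sits in the boundary layer, the dominant mass there carries the factor $\prod_i (q_i^*)^{-1}$ rather than $\prod_i p_i^{-1/2}$, and nothing in the sketch shows that the transverse exponential decay of $e^{-n(\KL(p,x)-\kappa)}$ compensates uniformly over all closed sets $E_r$. That uniform compensation is the entire content of the lemma.

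For contrast, the paper's proof avoids this computation altogether: it first notes that the infimum defining ${\ccal K}_{\inf}^{\rho_{\ccal S}}$ is attained, by lower-semicontinuity of $\KL(\cdot,\cdot)$ on a compact set, and then observes that Lemma~13 of \citet{pmlr-v117-riou20a}---which establishes precisely this Dirichlet tail bound---never uses convexity of the acceptance region, only closedness; hence that lemma applies verbatim to the closed sets $\inv{\sigma_{\rho,\ccal S}}([r,\infty))$ and $\inv{\sigma_{\rho,\ccal S}}((-\infty,r])$. If you want a self-contained argument, you must reproduce the actual mechanism of that lemma for taming the boundary singularity of $\prod_i x_i^{-1}$, rather than leaving it as deferred technical work.
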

We remark that Lemma~\ref{lem: upper_bound} generalises the upper bound in \citet[Lemma~13]{pmlr-v117-riou20a} to risk functionals that are ``sufficiently continuous''.
\begin{proposition}\label{cor: continuity_of_KL}
Let $\rho : {\ccal P} \to \RR$ be a continuous risk functional. Then the mapping $\ccal K_{\inf}^{\rho} : {\ccal P} \times \rho({\ccal C}) \to \RR$ is lower-semicontinuous in both of its arguments.
\end{proposition}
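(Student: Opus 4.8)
The plan is to prove joint lower-semicontinuity straight from its sequential characterization, using compactness of $(\ccal{C},d)$ to extract a convergent subsequence of near-minimizers and the joint lower-semicontinuity of $\KL$ as the decisive analytic input. Writing $f := \ccal{K}_{\inf}^{\rho}$, I would fix $(\mu,r) \in \ccal{P} \times \rho(\ccal{C})$, take an arbitrary sequence $(\mu_n,r_n) \to (\mu,r)$ in the product topology, and aim to show $f(\mu,r) \leq L$, where $L := \liminf_n f(\mu_n,r_n)$. The case $L = +\infty$ is vacuous, so I would assume $L < \infty$ and, after passing to a subsequence (relabeled), that $f(\mu_n,r_n) \to L$. (In each of the three cases the needed joint lower-semicontinuity of $\KL$ is available: it is recorded for the weak, i.e.\ $D_{\mathrm{L}}$, topology, and the $D_\infty$ topology is finer, so lower-semicontinuity is inherited.)

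For each large $n$ the value $f(\mu_n,r_n)$ is finite, so the feasible set $\{\eta \in \ccal{C} : \rho(\eta) \geq r_n\}$ is nonempty, and I would pick near-minimizers $\eta_n \in \ccal{C}$ satisfying $\rho(\eta_n) \geq r_n$ and $\KL(\mu_n,\eta_n) \leq f(\mu_n,r_n) + 1/n$. (Attainment of the infimum also holds: continuity of $\rho$ makes the feasible set a closed, hence compact, subset of $\ccal{C}$, on which $\eta \mapsto \KL(\mu_n,\eta)$ is lower-semicontinuous and therefore attains its minimum; but near-minimizers suffice and sidestep this.) Compactness of $(\ccal{C},d)$ then yields a subsequence $\eta_{n_k} \to \eta^{\ast} \in \ccal{C}$.

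The remaining two steps are to verify that $\eta^{\ast}$ is feasible for the limiting problem and that it witnesses the bound. For feasibility, continuity of $\rho$ gives $\rho(\eta_{n_k}) \to \rho(\eta^{\ast})$, and passing to the limit in $\rho(\eta_{n_k}) \geq r_{n_k}$ (with $r_{n_k} \to r$) yields $\rho(\eta^{\ast}) \geq r$. For the bound, since $\mu_{n_k} \to \mu$ and $\eta_{n_k} \to \eta^{\ast}$, joint lower-semicontinuity of $\KL$ gives $\KL(\mu,\eta^{\ast}) \leq \liminf_k \KL(\mu_{n_k},\eta_{n_k}) \leq \liminf_k \bigl(f(\mu_{n_k},r_{n_k}) + 1/n_k\bigr) = L$. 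Since $\eta^{\ast}$ is feasible, $f(\mu,r) \leq \KL(\mu,\eta^{\ast}) \leq L$, which is exactly the claimed inequality.

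I expect the main obstacle to be the interaction between the constraint and the limit rather than any single estimate: the whole argument hinges on passing the constraint $\rho(\eta_n)\geq r_n$ to the limit point $\eta^{\ast}$, which is precisely where continuity of $\rho$ (in fact upper-semicontinuity alone would do) is indispensable, and on the fact that compactness of $\ccal{C}$ supplies the convergent subsequence along which the joint lower-semicontinuity of $\KL$ can be invoked. The only bookkeeping nuisance is the convention $\inf \emptyset = +\infty$, which is disposed of by the reduction to the case $L < \infty$.
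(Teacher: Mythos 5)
Your proof is correct, but it takes a genuinely different route from the paper's, and it in fact establishes something slightly stronger. The paper never argues jointly: it splits the claim into two marginal statements and assembles them from three lemmas --- lower-semicontinuity of ${\ccal K}_{\inf}^{\rho}(\cdot,r)$ for fixed $r$ (Lemma~\ref{lem: lower-semicty_KLinf_first_arg}, via neighborhoods supplied by lower-semicontinuity of $\KL(\cdot,\eta)$), lower-semicontinuity of $r \mapsto \inf_{\rho(\eta)=r}\KL(\mu,\eta)$ for fixed $\mu$ (Lemma~\ref{lem: lower-semicty_KLinf_second_arg}, by viewing this map as a rate function through the contraction principle of large-deviations theory), and a lemma stating that infimizing a lower-semicontinuous function over $[r,\infty)\cap\rho(\ccal C)$ preserves lower-semicontinuity (Lemma~\ref{lem: lower-semicty_KLinf}). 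Your direct sequential argument --- near-minimizers, a convergent subsequence in the compact set $\ccal C$, the constraint passed to the limit by continuity (indeed only upper-semicontinuity) of $\rho$, and the joint lower-semicontinuity of $\KL$ recorded in Lemma~\ref{lem: KL_lowersemicty} --- proves \emph{joint} lower-semicontinuity, which implies both marginal statements but not conversely; it dispenses with the large-deviations machinery entirely; and it yields attainment of the infimum as a by-product, a fact the paper needs and re-derives separately at the start of its proof of Lemma~\ref{lem: upper_bound}. Your route is also more robust at exactly the delicate point: the paper's proof of Lemma~\ref{lem: lower-semicty_KLinf_first_arg} concludes, from $\varphi$ lying in $V_{\eta_*}$ for \emph{some} feasible $\eta_*$, a lower bound on the infimum over \emph{all} feasible $\eta$, a step that as written does not follow (one would need an intersection of the sets $V_\eta$, which need not be open, or else precisely the compactness-plus-joint-lower-semicontinuity argument you give). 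What the paper's modular decomposition offers in exchange is mainly organizational: its marginal statements are the form in which the result is invoked in the downstream regret analysis, whereas your single argument delivers both at once.
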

\subsubsection{Examples of Continuous Risk Functionals} We provide numerous examples of  risk functionals that satisfy the proposed notion of continuity.
\begin{definition}[Distorted Risk Functional]\label{def: distorted_risk_functional}
{\em Let $C=[0,D]$ and $X$ be a $C$-valued random variable sampled from a probability measure $\mu \in {\ccal P}$ and CDF $F_\mu$ its corresponding CDF. A conventional risk functional is said to be a \vocab{distorted risk functional} \citep{wang_1996, huang2021offpolicy} if there exists a non-decreasing function $g : [0,1] \to [0,1]$, called a \vocab{distortion function}, satisfying $g(0) = 0$ and $g(1) = 1$ such that
\begin{equation}\label{eqn: distorted_risk_functional_bounded}
	\varrho_g(X) = \rint{0}{D}{g(1-F_\mu(t))}{t}.
\end{equation}
We append the subscript $g$ to $\varrho$ and write $\varrho_g$ to emphasise the distorted function $g$ associated with $\rho$.
By definition, distorted risk functionals are law-invariant.
By Remark~\ref{rmk: well_defined_rho}, we can write $\rho_g(\mu) \equiv \varrho_g(X)$ thereafter and consider distorted risk functionals $\rho_g$ whose domain is ${\ccal P}$.}
\end{definition}
\begin{proposition}\label{prop: distorted_risk_cty}
	Suppose $g$ is continuous on $[0,1]$.
	Then the distorted risk functional $\rho_g : {\ccal P} \to \RR$ is continuous on $({\ccal P},D_\infty)$. Consequently, $\rho_g$ is continuous on $({\ccal P}_{\mathrm{c}}^{(B)},D_{\mathrm{L}})$.
\end{proposition}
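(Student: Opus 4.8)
The plan is to establish continuity directly from the integral representation \eqref{eqn: distorted_risk_functional_bounded}, exploiting that $g$, being continuous on the compact interval $[0,1]$, is automatically \emph{uniformly} continuous there. Fix $\mu,\eta \in {\ccal P}$. Since $g$ takes values in $[0,1]$ and $[0,D]$ has finite length, both $\rho_g(\mu)$ and $\rho_g(\eta)$ are finite, and I would begin with the elementary bound
\begin{equation*}
\parenl{\rho_g(\mu) - \rho_g(\eta)} = \parenl{\rint{0}{D}{\paren{g(1-F_\mu(t)) - g(1-F_\eta(t))}}{t}} \leq \rint{0}{D}{\parenl{g(1-F_\mu(t)) - g(1-F_\eta(t))}}{t}.
\end{equation*}
Measurability of the integrand is not an issue, since $t \mapsto F_\mu(t)$ is monotone and $g$ is continuous.

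Next I would invoke the uniform continuity of $g$. Given $\eeps > 0$, choose $\delta > 0$ such that $\parenl{a-b} < \delta$ implies $\parenl{g(a)-g(b)} < \eeps/D$ for all $a,b \in [0,1]$. If $D_\infty(\mu,\eta) < \delta$, then $\sup_{t \in C}\parenl{F_\mu(t)-F_\eta(t)} < \delta$, so for every $t$ the arguments $1-F_\mu(t)$ and $1-F_\eta(t)$ lie in $[0,1]$ and differ by less than $\delta$; hence the integrand above is bounded pointwise by $\eeps/D$. Integrating over $[0,D]$ yields $\parenl{\rho_g(\mu)-\rho_g(\eta)} \leq D\cdot(\eeps/D) = \eeps$. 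As $\delta$ depends on neither $\mu$ nor $\eta$, this in fact proves that $\rho_g$ is uniformly continuous on $({\ccal P},D_\infty)$, which is stronger than the asserted continuity.

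For the ``consequently'' clause, I would restrict $\rho_g$ to the subspace ${\ccal P}_{\mathrm{c}}^{(B)} \subseteq {\ccal P}$, on which it remains $D_\infty$-continuous. As recorded in the example treating $({\ccal P}_{\mathrm{c}}^{(B)}, D_{\mathrm{L}})$, the Arzel\`a--Ascoli theorem shows that $D_\infty$ and $D_{\mathrm{L}}$ induce the same topology on ${\ccal P}_{\mathrm{c}}^{(B)}$ (equivalently, the earlier remark notes that $D_\infty$-continuity on ${\ccal P}_{\mathrm{c}}^{(B)}$ implies $D_{\mathrm{L}}$-continuity). Either route gives continuity of $\rho_g$ on $({\ccal P}_{\mathrm{c}}^{(B)}, D_{\mathrm{L}})$. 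I do not expect a genuine obstacle here; the only points needing care are verifying that the integral is finite and that the \emph{supremum} appearing in the Kolmogorov--Smirnov metric passes to a \emph{uniform} pointwise bound on the arguments of $g$. It is exactly the compactness of $[0,1]$ --- promoting continuity of $g$ to uniform continuity --- that renders $\delta$ independent of the measures, and hence yields even uniform continuity of $\rho_g$.
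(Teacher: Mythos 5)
Your proof is correct and is essentially identical to the paper's: both invoke the uniform continuity of $g$ on the compact interval $[0,1]$ to get a $\delta$ with $|g(x)-g(y)|<\eeps/D$, observe that $D_\infty(\mu,\eta)<\delta$ bounds $|(1-F_\mu(t))-(1-F_\eta(t))|$ uniformly in $t$, and integrate over $[0,D]$ to conclude (uniform) continuity of $\rho_g$. Your explicit handling of the ``consequently'' clause via the topological equivalence of $D_\infty$ and $D_{\mathrm{L}}$ on ${\ccal P}_{\mathrm{c}}^{(B)}$ matches what the paper records in its main text.
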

\begin{example}
	{\em Table~\ref{table: common_drf} lists some commonly used distorted risk functionals and the properties that they satisfy.}
\end{example}
\begin{corollary}\label{cor: cts_drfs}
	On the space of rewards in $C$, the risk functionals expected value, $\mathrm{CVaR}_\alpha$, proportional hazard, and Lookback as defined in Table~\ref{table: common_drf} are continuous on $({\ccal P},D_\infty)$. 
\end{corollary}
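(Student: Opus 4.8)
The plan is to exhibit each of the four risk functionals as a distorted risk functional $\rho_g$ with the distortion function $g$ recorded in the third column of Table~\ref{table: common_drf}, verify that each such $g$ is continuous on $[0,1]$, and then invoke Proposition~\ref{prop: distorted_risk_cty} to obtain continuity on $({\ccal P}, D_\infty)$. Thus the entire argument reduces to a continuity check on four explicit scalar functions on $[0,1]$.

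First I would confirm that each functional is genuinely of the form (\ref{eqn: distorted_risk_functional_bounded}). For expectation this is immediate with $g(x) = x$, and for $\mathrm{CVaR}_\alpha$, proportional hazard, and Lookback the integral representations in the second column of Table~\ref{table: common_drf} coincide with the distorted form once one substitutes the stated $g$ together with $S_X(t) = 1 - F_\mu(t)$. This identification is essentially definitional, so I would treat it as bookkeeping rather than a substantive step.

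Next I would check continuity of each candidate distortion function on $[0,1]$. The cases $g(x) = x$ (expectation) and $g(x) = \min\{x/(1-\alpha),\, 1\}$ (CVaR) are continuous as a linear function and as the minimum of two continuous functions, respectively, while $g(x) = x^p$ (proportional hazard) is continuous on $[0,1]$ for $p > 0$. The only case needing care is the Lookback distortion $g(x) = x^q(1 - q\log x)$, which is manifestly continuous on $(0,1]$; the single issue is the behaviour as $x \to 0^+$. Here I would use that $x^q \log x \to 0$ as $x \to 0^+$ for $q > 0$, since polynomial decay dominates the logarithmic singularity, so $g(x) \to 0$ and the convention $g(0) = 0$ yields continuity at the origin. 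This limit computation is the one genuinely analytic point in the proof, and hence the main (though minor) obstacle.

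Finally, with each $g$ shown to be continuous on $[0,1]$, Proposition~\ref{prop: distorted_risk_cty} applies to each functional and gives continuity of the associated $\rho_g$ on $({\ccal P}, D_\infty)$, which is exactly the assertion of the corollary. I would additionally remark that VaR, whose distortion $g(x) = \II\{x \geq 1-\alpha\}$ is a step function with a jump at $x = 1-\alpha$, violates the hypothesis of Proposition~\ref{prop: distorted_risk_cty}, consistent with the $\xmark$ entry in Table~\ref{table: common_drf} and explaining why VaR is excluded from the conclusion.
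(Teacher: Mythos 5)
Your proposal is correct and takes essentially the same route as the paper, which treats this corollary as an immediate consequence of Proposition~\ref{prop: distorted_risk_cty} applied to the distortion functions in Table~\ref{table: common_drf}. Your explicit verification that each $g$ is continuous on $[0,1]$ --- in particular the limit $x^q \log x \to 0$ as $x \to 0^+$ for the Lookback distortion, which is the only nontrivial check --- fills in exactly the bookkeeping the paper leaves implicit.
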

Similar arguments can be used to show that the Cumulative Prospect Theory-Inspired (CPT) functionals \citep{huang2021offpolicy}, are also continuous on $({\ccal P},D_\infty)$. Nevertheless, we remark that $\mathrm{VaR}_\alpha$ (last row of Table~\ref{table: common_drf}) is not  continuous on $({\ccal P},D_\infty)$, and thus, does not necessarily enjoy the regret bounds from $\rho$-MTS.
\begin{example}\label{eg: EDPM_cty}
	{\em Table~\ref{table: common_EDPM} lists some commonly used  EDPMs, their distortion functions, and the (convexity) properties that they satisfy. By their formulation in \citet{pmlr-v75-cassel18a}, they are all continuous on $(\ccal P,D_\infty)$.}
\end{example}
\begin{remark}
\label{rmk: cts_combinations}	
{\em We observe that for scalars $\lambda_1,\dots,\lambda_n \in \RR$ and continuous risk functionals $\rho_1,\dots,\rho_n$ on $(\ccal P,d)$, the linear combination $\sum_{i=1}^n \lambda_i \rho_i$ is a continuous risk functional on $(\ccal P,d)$.
Furthermore, for any continuous function $\phi : \RR \to \RR$ and continuous risk functional $\rho$, the composition $\phi \circ \rho$ is also a continuous risk functional. This allows us to consider many \emph{combinations} of risk functionals.}
\end{remark}
\begin{example}[Continuity of Linear Combinations]
\label{eg: cty_of_linear_combi}
	{\em For instance, consider the risk functionals ${\mathrm{MV}}_\gamma$, $ {\mathrm{CVaR}_\alpha}$, ${\mathrm{Prop}}_{p}$, ${\mathrm{LB}}_{q}$ for fixed parameters $\gamma > 0$, $\alpha \in [0,1)$, $p \in (0,1)$, $q\in(0,1)$. By Example~\ref{eg: EDPM_cty} and Corollary~\ref{cor: cts_drfs}, these risk functionals are continuous on $({\ccal P},D_{\infty})$, and the risk functionals $\rho_1 := {\mathrm{MV}}_\gamma + {\mathrm{CVaR}_\alpha}$ and $\rho_2 := {\mathrm{Prop}}_{p} + {\mathrm{LB}}_{q}$ are continuous on $({\ccal P},D_{\infty})$, and consequently, are continuous on $({\ccal P}_{\mathrm{c}}^{(B)},D_{\mathrm{L}})$. Thus, innumerable risk functionals can be synthesised (as will be done in the section on numerical experiments) and our Thompson sampling-based algorithms are not only applicable, but also empirically competitive with the theoretical lower bound.}
\end{example}
\section{Dominant Risk Functionals}
We now introduce the notion of \vocab{dominant risk functionals}. These risk functions  admit  a crucial tail lower bound (stated in Lemma~\ref{lem: lower_bound} to follow).
Let $\rho : \ccal P \to \RR$ be a risk functional, $\ccal S$ a finite set with size $M+1$, and denote $\rho_{\ccal S} := \rho|_{{\ccal P}_{\ccal S}}$ and
${\ccal T}_{\rho,\ccal S}(r) := \inv{\rho_{\ccal S}}([r,\infty))$.
For any $p \in \Delta^M$, define ${\ccal T}_{\rho,\ccal S, p}^{(1)} := {\ccal T}_{\rho,\ccal S}({\sigma}_{\rho,\ccal S}(p))$.
	 For any $\ccal I \subseteq {[M]}_0$, define
	 $${\ccal T}_{\ccal S, p}^{(2)}(\ccal I) := \left\{{\frak D}_{\ccal S}(q) \in {\ccal P}_{\ccal S}: q_i \in \begin{cases} [0,p_i] & \mbox{for $i \in \ccal I$} \\ [p_i,1] & \mbox{for $i \notin \ccal I$} \end{cases}\right\}.$$
	 We have ${\ccal T}_{\ccal S,p}^{(2)}(\emptyset) = {\ccal T}_{\ccal S,p}^{(2)}({[M]}_0)=\{{\frak D}_{\ccal S}(p)\}$.
	\begin{definition}\label{def: dominant_risk_functional}
	{\em We say that a risk functional $\rho : \ccal P \to \RR$ is \vocab{dominant} if for any finite alphabet $\ccal S$ and $r \in \RR$, there exists $\mu_* = {\frak D}_{\ccal S}(p_*)$ with
	\begin{equation}\label{eqn: infimum_condition}
		{\ccal K}_{\inf}^{\rho}(\mu,r) = \KL(\mu,\mu_*)
	\end{equation}
	 and $\ccal I \subseteq {[M]}_0, \ccal I \neq \emptyset,{[M]}_0$, such that
	 \begin{equation} 
		{\ccal T}_{\rho,\ccal S, p_*}^{(1)} \supseteq {\ccal T}_{\ccal S, p_*}^{(2)}(\ccal I). \nonumber
	\end{equation}
	We remark that if $\rho$ is continuous on ${\ccal P}_{\ccal S}$, then it satisfies (\ref{eqn: infimum_condition}) by the Extreme Value Theorem.}
\end{definition}
 We illustrate the property of $\rho$ being dominant in Figure~\ref{fig: dom_rf}. The property states that there exists $p_* \in \Delta^M$ together with some region $\ccal A \subseteq \Delta^M$ such that for all $q \in \ccal A$, $\sigma_{\rho, \ccal S}(q) \geq \sigma_{\rho, \ccal S}(p_*)$. The bullet point represents ${\frak D}_{\ccal S}(p_*)$, and the shaded region represents ${\ccal T}_{\ccal S,p}^{(2)}(\{0,2\})$.  The various types of risk functionals  are classified in Figure~3 in the supplementary material. 
\begin{figure}
\begin{center}
	\begin{tikzpicture}[scale=1.8]
	\fill[color=lightgray] (1.23066,1.33523) -- (0.75,0.5) -- (0.51934,0.89952) -- (1,{sqrt(3)}) -- (1.23066,1.33523);
	\draw[dash pattern = on 2pt off 2pt] (0.46132,0) -- (1.23066,1.33523) node[anchor=south west] {\footnotesize $x_2 = p_2$};
	\draw[dash pattern = on 2pt off 2pt] (0.28868,0.5) node[anchor= east] {\footnotesize $x_1 = p_1$} -- (1.71132,.5);
	\draw[dash pattern = on 2pt off 2pt] (0.51934,0.89952) -- (1.03868,0) node[anchor=north] {\footnotesize $x_0 = p_0$};
	\draw[dash pattern = on 2pt off 2pt] (0.46132,0) -- (1.23066,1.33523);
	\draw (0,0) node[anchor=north east] {\footnotesize $(1,0,0)$} -- (2,0) node[anchor=north west] {\footnotesize $(0,0,1)$} -- (1,{sqrt(3)}) node[anchor=south] {\footnotesize $(0,1,0)$} -- (0,0); 
	\draw (0.75,0.5) node {\footnotesize $\bullet$};
	\draw (.875,1.11603) node {\footnotesize $\{0,2\}$};
	\draw (1.23066,0.77751) node {\footnotesize $\{0\}$};
	\draw (1.375,.25) node {\footnotesize $\{0,1\}$};
	\draw (.75,.16667) node {\footnotesize $\{1\}$};
	\draw (.375,.25) node {\footnotesize $\{1,2\}$};
	\draw (.51934,.63317) node {\footnotesize $\{2\}$};
\end{tikzpicture}
\end{center}
\caption{A illustration of a dominant $\rho$ (Definition~\ref{def: dominant_risk_functional}), where $x_i$'s denote the arguments of the Dirichlet distribution in~\eqref{eqn:dir}.}
\label{fig: dom_rf}
\end{figure}
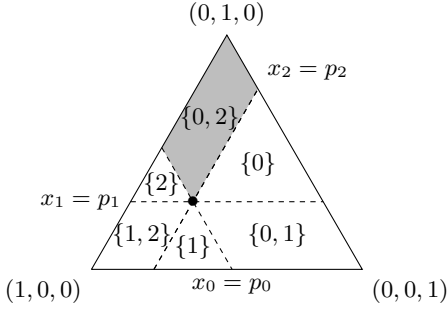
\subsubsection{Tail Lower Bound}
We show that dominant risk functionals $\rho$ satisfy a generalization of the tail lower bound developed by \citet{pmlr-v117-riou20a} and \citet{baudry2020thompson}.
\begin{lemma} \label{lem: lower_bound}
Fix a finite set $\ccal S$ with size $M+1$ and $r \in \RR$. Let $\rho : \ccal P \to \RR$ be a dominant risk functional such that $|\ccal I| = M$ (see Definition~\ref{def: dominant_risk_functional}). Fix $\alpha \in \NN^{M+1}$, $n= \sum_{i=0}^M \alpha_i$, and $p = \alpha/n$.
For $L \sim \Dir(\alpha)$ and large $n$,
	$$\PP(\sigma_{\rho,\ccal S}(L) \geq r) \geq C_2 n^{-\frac{M+1}{2}} \exp(-n {\ccal K}_{\inf}^{{\rho}_{\ccal S}}({\frak D}_{\ccal S}(p),r)),$$
	where $C_2 := \sqrt{2\pi}\paren{M/2.13}^{M/2}$.
\end{lemma}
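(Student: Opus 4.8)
The plan is to use the dominance hypothesis to replace the level-set event $\{\sigma_{\rho,\ccal S}(L)\geq r\}$ by a coordinate box of the simplex, on which the Dirichlet mass can be estimated directly, and then to lower bound that box probability by a localized Stirling computation anchored at the corner $p_*$.

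For the reduction, let $\mu_*={\frak D}_{\ccal S}(p_*)$ be the measure furnished by Definition~\ref{def: dominant_risk_functional}, so that $\KL(\mu,\mu_*)=\ccal K_{\inf}^{\rho_{\ccal S}}({\frak D}_{\ccal S}(p),r)$, $\rho(\mu_*)\geq r$, and ${\ccal T}_{\rho,\ccal S,p_*}^{(1)}\supseteq{\ccal T}_{\ccal S,p_*}^{(2)}(\ccal I)$ with $|\ccal I|=M$. Because $\rho(\mu_*)\geq r$, the pullback of ${\ccal T}_{\rho,\ccal S,p_*}^{(1)}$ under ${\frak D}_{\ccal S}$ is exactly $\{q\in\Delta^M:\sigma_{\rho,\ccal S}(q)\geq\sigma_{\rho,\ccal S}(p_*)\}\subseteq\{q:\sigma_{\rho,\ccal S}(q)\geq r\}$, so chaining the two inclusions yields $\PP(\sigma_{\rho,\ccal S}(L)\geq r)\geq\PP\paren{L\in{\ccal T}_{\ccal S,p_*}^{(2)}(\ccal I)}$. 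The point of requiring $|\ccal I|=M$ is that, after eliminating the single coordinate $j\notin\ccal I$ through $q_j=1-\sum_{i\in\ccal I}q_i$, the set ${\ccal T}_{\ccal S,p_*}^{(2)}(\ccal I)$ is the full-dimensional rectangle $\prod_{i\in\ccal I}[0,(p_*)_i]$ whose upper corner is exactly $p_*$: on the simplex the constraints $q_i\leq(p_*)_i$ automatically force $q_j\geq(p_*)_j$. This is precisely the geometry for which the Dirichlet integral is tractable.

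It then remains to prove $\PP\paren{L\in\prod_{i\in\ccal I}[0,(p_*)_i]}\geq C_2\,n^{-(M+1)/2}\exp(-n\KL(\mu,\mu_*))$. I would write this probability as the Dirichlet integral $\frac{\Gamma(n)}{\prod_i\Gamma(\alpha_i)}\int\prod_{i\in\ccal I}q_i^{\alpha_i-1}\paren{1-\sum_{i\in\ccal I}q_i}^{\alpha_j-1}\,\mathrm{d}q$ over that rectangle and bound it below by restricting to a subrectangle anchored at $p_*$, of relative width $1/\alpha_i$ (hence absolute width of order $1/n$) in each free coordinate. On such a subrectangle each factor $q_i^{\alpha_i-1}$ is at least $e^{-1}(p_*)_i^{\alpha_i-1}$ — a displacement of order $1/n$ against an exponent of order $n$ costs only an $O(1)$ multiplicative amount — while the $q_j$-factor only increases since $q_j\geq(p_*)_j$ there. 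Applying the refined two-sided Stirling bounds behind the constant in Lemma~\ref{lem: upper_bound} to $\Gamma(n)/\prod_i\Gamma(\alpha_i)$, and observing that $\frac{\Gamma(n)}{\prod_i\Gamma(\alpha_i)}\prod_i(p_*)_i^{\alpha_i-1}$ carries the factor $\exp\paren{-n\sum_i p_i\log\tfrac{p_i}{(p_*)_i}}=\exp(-n\KL(\mu,\mu_*))$, the volume $\asymp n^{-M}$ of the subrectangle combines with the $n^{M/2}$ from the normalizing constant to leave the claimed polynomial order together with the explicit constant $C_2=\sqrt{2\pi}\paren{M/2.13}^{M/2}$.

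The main obstacle is the explicit-constant bookkeeping in this last step. Since $p_*$ is a corner of the box and is in general not the mode of $\Dir(\alpha)$ — it is the $\KL$-projection of $p$ onto $\{\rho\geq r\}$ — the integrand is not stationary there and decays at rate $O(n)$ as one moves inward; this is what forces the $1/n$ (rather than $1/\sqrt n$) localization scale and degrades the exponent of $n$ to $-(M+1)/2$. One must also check that the residual $p$- and $p_*$-dependent prefactors produced by Stirling (the $\sqrt{\prod_i p_i}$ and $\prod_i(p_*)_i$-type terms, and the $\prod_{i\in\ccal I}(p_*)_i/p_i$ coming from the subrectangle volume) can be absorbed into the single universal constant $C_2$, uniformly over all admissible $\alpha$, once $n$ is large. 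This argument generalizes the tail lower bounds of \citet{pmlr-v117-riou20a} and \citet{baudry2020thompson} from the expectation and $\mathrm{CVaR}$ functionals to an arbitrary dominant $\rho$ with $|\ccal I|=M$, the only genuinely new ingredient being the dominance reduction above that makes the level set contain a clean box.
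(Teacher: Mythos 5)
Your proposal is correct and follows essentially the same route as the paper's proof: the dominance reduction $\PP(\sigma_{\rho,\ccal S}(L)\geq r)\geq\PP(L\in{\ccal T}_{\rho,\ccal S,p_*}^{(1)})\geq\PP(L\in{\ccal T}_{\ccal S,p_*}^{(2)}(\ccal I))$ is exactly the paper's first step, and the remaining Stirling bookkeeping that produces $C_2$ is, as in the paper, inherited from the computations in Lemma~2 of \citet{baudry2020thompson}. The only deviation is the middle step: the paper integrates the Dirichlet density exactly over the whole box, obtaining $\frac{\Gamma(n)}{\prod_{i=0}^M\Gamma(\alpha_i)}\prod_{i\in\ccal I}\frac{(p_i^*)^{\alpha_i}}{\alpha_i}$ times the $j$-th coordinate factor directly, whereas your $O(1/n)$-subrectangle localization recovers the same product only up to an extra $e^{-M}$ factor, which is unnecessary and would slightly degrade the stated constant $C_2$.
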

We remark that Lemma~\ref{lem: lower_bound} generalises the lower bound in \citet[Lemma~2]{baudry2020thompson} to dominant risk functionals.
\subsubsection{Examples of Dominant Risk Functionals}
We provide numerous examples of risk functionals that satisfy the proposed notion of dominance.
\begin{proposition}\label{prop: table_of_dominant}
	Let $\rho:\ccal P \to \RR$ be a risk functional satisfying (\ref{eqn: infimum_condition}). Suppose $\rho$ satisfies one of following two properties: (a) $\rho$ is a distorted risk functional; (b) for any finite alphabet $\ccal S$ with size $M+1$, $\sigma_{\rho,\ccal S} = g|_{\Delta^M}$ for some convex $g : \RR^{M+1} \to \RR$ with first-order partial derivatives.
	Then $\rho$ is dominant.
\end{proposition}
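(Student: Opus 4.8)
The plan is to reduce both cases to a single geometric statement: around the point $p_*$ supplied by the hypothesis (\ref{eqn: infimum_condition}), exhibit an index set $\ccal I$ with $\emptyset \neq \ccal I \neq {[M]}_0$ such that $\sigma_{\rho,\ccal S}(q) \geq \sigma_{\rho,\ccal S}(p_*)$ for every $q$ with ${\frak D}_{\ccal S}(q) \in {\ccal T}_{\ccal S,p_*}^{(2)}(\ccal I)$. Since ${\ccal T}_{\rho,\ccal S,p_*}^{(1)}$ is by definition the super-level set $\{{\frak D}_{\ccal S}(q) : \sigma_{\rho,\ccal S}(q) \geq \sigma_{\rho,\ccal S}(p_*)\}$, this is exactly the required containment ${\ccal T}_{\rho,\ccal S,p_*}^{(1)} \supseteq {\ccal T}_{\ccal S,p_*}^{(2)}(\ccal I)$. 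Note that I never need $p_*$ to be anything beyond a point of $\Delta^M$, so the minimizer furnished by (\ref{eqn: infimum_condition}) will do. The one structural fact I would use throughout is that every admissible direction $q - p_*$ lies in the tangent space of the simplex, i.e.\ $\sum_{i\in{[M]}_0}(q_i - p_{*,i}) = 0$; this coupling is what forbids treating the coordinates independently and is the crux of both cases.

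For case (a), first order $\ccal S$ so that $s_0 < \cdots < s_M$ and compute the distorted functional on ${\ccal P}_{\ccal S}$: writing $P_j(q) := \sum_{i \geq j} q_i$ for the upper-tail sums, the decumulative CDF of ${\frak D}_{\ccal S}(q)$ is piecewise constant with value $P_j(q)$ on $[s_{j-1}, s_j)$, so by \eqref{eqn: distorted_risk_functional_bounded} together with $g(0)=0,\ g(1)=1$ I obtain $\sigma_{\rho,\ccal S}(q) = s_0 + \sum_{j=1}^{M}(s_j - s_{j-1})\, g(P_j(q))$. I would then take $\ccal I$ to be an initial segment $\{0,\dots,k\}$ (for instance $k=M-1$, which also yields $|\ccal I|=M$ as required by Lemma~\ref{lem: lower_bound}). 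For $q$ with ${\frak D}_{\ccal S}(q)\in{\ccal T}_{\ccal S,p_*}^{(2)}(\ccal I)$ one checks $P_j(q) \geq P_j(p_*)$ for every $j$ by splitting into $j \leq k$ (rewrite $P_j(q)-P_j(p_*) = -\sum_{i<j}(q_i - p_{*,i})$ via the zero-sum identity, whose summands are $\geq 0$ since those indices lie in $\ccal I$) and $j \geq k+1$ (the summands of $P_j(q)-P_j(p_*) = \sum_{i\geq j}(q_i-p_{*,i})$ are $\geq 0$ since those indices lie outside $\ccal I$). This says ${\frak D}_{\ccal S}(q)$ first-order stochastically dominates $\mu_*$, and since $g$ is non-decreasing and $s_j > s_{j-1}$, termwise monotonicity yields $\sigma_{\rho,\ccal S}(q) \geq \sigma_{\rho,\ccal S}(p_*)$.

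For case (b), I would use the first-order inequality for the convex, differentiable $g$, namely $\sigma_{\rho,\ccal S}(q) = g(q) \geq g(p_*) + \nabla g(p_*)\cdot(q-p_*)$, so it suffices to make the linear term nonnegative on ${\ccal T}_{\ccal S,p_*}^{(2)}(\ccal I)$. Put $c := \min_i \partial_i g(p_*)$ and $\ccal I := \{i : \partial_i g(p_*) = c\}$. Using the zero-sum identity to subtract $c$ freely, $\nabla g(p_*)\cdot(q-p_*) = \sum_i (\partial_i g(p_*)-c)(q_i - p_{*,i})$, and every summand is $\geq 0$: for $i\in\ccal I$ the factor $\partial_i g(p_*)-c$ vanishes, while for $i \notin \ccal I$ both $\partial_i g(p_*)-c > 0$ and $q_i - p_{*,i}\geq 0$. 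Hence $g(q)\geq g(p_*)$, as needed. (If one instead wants $|\ccal I|=M$, take $\ccal I = {[M]}_0\setminus\{i^*\}$ for a maximizer $i^*$ of $\partial_i g(p_*)$ with $c=\partial_{i^*}g(p_*)$; the same sign check goes through.) The only degenerate situation is $\nabla g(p_*)$ being constant across coordinates, in which $\ccal I = {[M]}_0$ is disallowed; but then $\nabla g(p_*)\cdot(q-p_*)=0$ for all $q\in\Delta^M$, so the whole simplex is the super-level set and any admissible $\ccal I$ works.

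I expect the main obstacle to be the bookkeeping forced by the constraint $\sum_i (q_i - p_{*,i}) = 0$: in neither case can one argue coordinatewise, and the entire point is to reorganize the relevant sum---telescoping the tails across the cut point $k$ in (a), and shifting the gradient by the constant $c$ in (b)---so that the zero-sum identity converts mixed-sign contributions into a sum of nonnegative terms. A secondary, easily overlooked point is guaranteeing $\emptyset \neq \ccal I \neq {[M]}_0$, which is automatic for the initial segments in (a) and handled via the degenerate-gradient case in (b).
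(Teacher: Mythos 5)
Your proposal is correct. Case (a) is essentially the paper's own argument: the paper likewise derives the finite-support formula $\rho_g({\frak D}_{\ccal S}(q)) = \sum_{j=0}^M g\bigl(\sum_{i\geq j} q_i\bigr)\Delta s_j$ (Proposition~\ref{prop: drf_formula}) and takes the initial segment $\ccal I = {[M-1]}_0$, showing the upper-tail sums can only increase and invoking monotonicity of $g$ (Proposition~\ref{prop: drf_dominant_meta}). Case (b), however, takes a genuinely different route. The paper eliminates the simplex constraint by substituting $x_0 = 1-\sum_{i=1}^M x_i$, proves the reduced function $h_{\rho,\ccal S}$ is convex, chooses $\ccal I_p$ to be the set of coordinates with $\partial h_{\rho,\ccal S}/\partial x_i(p^+)<0$ (or $\{0\}$ if there are none), and runs a coordinate-by-coordinate Mean Value Theorem induction using monotonicity of each partial derivative in its own variable (Proposition~\ref{prop: convex_risk_functional}, Lemma~\ref{lem: convex_lemma}). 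You instead stay in $\RR^{M+1}$, apply the supporting-hyperplane inequality $g(q)\geq g(p_*)+\nabla g(p_*)\cdot(q-p_*)$ to $g$ itself, and use the zero-sum identity to recentre the gradient by its minimal (or maximal) coordinate value. Your version buys three things: it is shorter, resting only on the first-order characterization of convexity; it produces the $|\ccal I|=M$ variant (complement of an argmax coordinate) needed in Lemma~\ref{lem: lower_bound} and Theorem~\ref{thm: rho_mts_upper_bound} in one line, whereas the paper's $\ccal I_p$ has no cardinality guarantee (which is why the paper restricts the EDPM conclusion to $M=1$ in Remark~\ref{rmk:compare}); and it sidesteps a delicate point in the paper's induction, namely that after earlier coordinates have been moved from $p_j$ to $q_j$, monotonicity of $\partial h_{\rho,\ccal S}/\partial x_i$ in $x_i$ alone does not compare its value at the intermediate point with its value at $p^+$ --- joint convexity is what is really needed, and your argument invokes it explicitly. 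The trade-off is that the paper's intermediate Proposition~\ref{prop: convex_risk_functional} is nominally stated under the weaker hypothesis of coordinate-wise non-decreasing partials, while your argument needs joint convexity; since hypothesis (b) supplies joint convexity anyway, nothing is lost for the proposition as stated.
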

\begin{corollary}\label{cor: cts_dominant_examples}
	The risk functionals in Table~\ref{table: common_drf} that are continuous (indicated by $\checkmark$) and those in Table~\ref{table: common_EDPM} that are convex (indicated by $\checkmark$) are   continuous and dominant. 
\end{corollary}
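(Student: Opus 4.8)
The plan is to reduce the corollary to two already-available ingredients: the continuity statements from Corollary~\ref{cor: cts_drfs} (for Table~\ref{table: common_drf}) and Example~\ref{eg: EDPM_cty} (for Table~\ref{table: common_EDPM}), which dispense with the ``continuous'' half of the claim outright, and Proposition~\ref{prop: table_of_dominant}, which is the engine for the ``dominant'' half. Since every listed functional is already known to be continuous on $({\ccal P},D_\infty)$, its restriction to any ${\ccal P}_{\ccal S}$ is continuous, so by the Extreme Value Theorem the infimum defining ${\ccal K}_{\inf}^\rho$ is attained and condition~(\ref{eqn: infimum_condition}) holds automatically. Thus for each functional it remains only to verify that it meets hypothesis (a) or (b) of Proposition~\ref{prop: table_of_dominant}, after which dominance follows immediately.

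For the entries of Table~\ref{table: common_drf} flagged with $\checkmark$ (expectation, $\mathrm{CVaR}_\alpha$, proportional hazard, and Lookback), I would invoke hypothesis (a): each is a distorted risk functional in the sense of Definition~\ref{def: distorted_risk_functional}, as witnessed by the distortion functions $g(x)$ tabulated in the third column of Table~\ref{table: common_drf}. Hence property (a) applies verbatim and each is dominant.

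For the convex entries of Table~\ref{table: common_EDPM} (expectation, second moment, below-target semi-variance, entropic risk, negative variance, and mean-variance), I would invoke hypothesis (b), which requires exhibiting $\sigma_{\rho,{\ccal S}} = g|_{\Delta^M}$ for a convex $g : \RR^{M+1} \to \RR$ possessing first-order partial derivatives. Writing $p = (p_0,\dots,p_M)$ for the weights of ${\frak D}_{\ccal S}(p) = \sum_i p_i \delta_{s_i}$, each functional has an explicit closed form in $p$: the expectation $\sum_i p_i s_i$, the second moment $\sum_i p_i s_i^2$, and the below-target semi-variance $-\sum_i p_i (s_i-r)^2 \II\{s_i \leq r\}$ are all affine in $p$; the negative variance $-\sum_i p_i s_i^2 + (\sum_i p_i s_i)^2$ adds the square of an affine form and so is convex; the mean-variance is an affine combination of these convex pieces; and the entropic risk is $-\theta^{-1}\log(\sum_i p_i e^{-\theta s_i})$, a negated log of a strictly positive affine form and hence convex. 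Each of these extends to a convex $C^1$ function on $\RR^{M+1}$, so hypothesis (b) is met and dominance follows.

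The affine and polynomial cases are immediate, so the main obstacle is verifying hypothesis (b) for the genuinely nonlinear EDPM, namely entropic risk. One must confirm both convexity of $-\theta^{-1}\log(\sum_i p_i e^{-\theta s_i})$ (a standard consequence of the concavity of $\log$ composed with an affine map, valid since the argument is strictly positive on $\Delta^M$) and that it admits a convex, continuously differentiable extension off the simplex: the naive formula is only defined where $\sum_i p_i e^{-\theta s_i} > 0$, so some care is needed to produce a genuine $g$ defined on all of $\RR^{M+1}$, for instance by exploiting that $\Delta^M$ is compact and the functional is smooth and convex on an open neighbourhood of it, and extending by a standard convex-analytic construction that leaves the values on $\Delta^M$ unchanged. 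Once this extension is secured, smoothness of the remaining entries is clear and the corollary follows by collecting the two cases.
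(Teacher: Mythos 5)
Your proposal is correct and follows essentially the same route as the paper's (largely implicit) proof: continuity is quoted from Corollary~\ref{cor: cts_drfs} and Example~\ref{eg: EDPM_cty}, condition~(\ref{eqn: infimum_condition}) follows from continuity plus compactness exactly as remarked after Definition~\ref{def: dominant_risk_functional}, and dominance is delegated to Proposition~\ref{prop: table_of_dominant} --- part (a) for the checked entries of Table~\ref{table: common_drf}, part (b) for the convex entries of Table~\ref{table: common_EDPM}. Your convexity computations for the EDPMs are all correct; the paper never writes them out, simply citing the convexity column of the table.

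One comment on the step you single out as the main obstacle, namely the global convex $C^1$ extension of entropic risk. Your worry is legitimate and in fact exposes a point the paper glosses over: the naive formula $-\theta^{-1}\log\left(\sum_i y_i e^{-\theta s_i}\right)$ is undefined on part of $\RR^{M+1}$, and for $M \geq 2$ it can fail even at the intermediate points used in the paper's Mean Value Theorem argument (Lemma~\ref{lem: convex_lemma}), since those points may have a negative zeroth coordinate. The paper effectively sidesteps this by confining its EDPM claims to $M=1$ (see the caption of Table~\ref{table: common_EDPM} and Remark~\ref{rmk:compare}), in which case every point used in the proof keeps the argument of the logarithm strictly positive. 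Your proposed fix, however, is vaguer than it needs to be: a supremum of tangent planes over a compact convex neighbourhood is convex and agrees with the function there, but it is not obviously $C^1$ away from that set, so ``a standard convex-analytic construction'' does not finish the job as stated. A fully explicit repair: let $m := \min_i e^{-\theta s_i} > 0$, and let $\psi : \RR \to \RR$ equal $\log u$ for $u \geq m$ and its tangent line $\log m + (u-m)/m$ for $u < m$; then $\psi$ is concave and $C^1$ on all of $\RR$, so $g(y) := -\theta^{-1}\psi\left(\sum_i y_i e^{-\theta s_i}\right)$ is convex, $C^1$ on all of $\RR^{M+1}$, and agrees with the entropic risk on $\Delta^M$, where the argument lies in $[m,\infty)$. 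With this substitution your argument is complete, and it actually yields the Table~\ref{table: common_EDPM} half of the corollary for every $M$, not only the Bernoulli case the paper relies on downstream.
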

\section{Problem Formulation}
\label{problem}
Given a risk functional $\rho$ on a compact metric subspace $({\ccal C},d) \subset ({\ccal P},d)$ of probability measures and $K$ arms with probability measures ${(\nu_k)}_{k \in [K]} \subset {\ccal C}$, the learner's objective is to choose the \vocab{optimal arm} $k^* := \argmax_{k \in [K]} \rho(\nu_k)$ as many times as possible. All other arms $k \neq k^*$ are called \vocab{suboptimal}. Here we adopt the convention that the arm with higher $\rho(\nu_k)$ offers a higher reward. To adopt the cost perspective, consider the negation of the reward, and the objective as choosing the \emph{minimum}  $\rho(\nu_k)$ over all $k \in [K]$.

Akin to \citet{tamkin2020dist}, \citet{baudry2020thompson}, and \citet{chang2021riskconstrained}, we assess the performance of an algorithm $\pi$ using $\rho$, defined at time $n$, by the \vocab{$\rho$-risk regret}
\begin{align*}
{\ccal R}_\nu^{\rho}(\pi, n) &= \EE_\nu\parenb{\sum_{t=1}^n \paren{\max_{k \in [K]} \rho(\nu_k) - \rho(\nu_{A_t})}}\\
&= \EE_\nu \parenb{\sum_{t=1}^n \Delta_{A_t}^{\rho}} = \sum_{k=1}^K \EE_\nu[T_k(n)] \Delta_k^{\rho},
\end{align*}
where $\Delta_k^{\rho} := \rho(\nu_{k^*}) - \rho(\nu_k)$ is the difference between the expected reward of arm $k$ and that of the optimal arm $k^*$, and $T_k(n) = \sum_{t=1}^n \II (A_t=k)$ is the number of pulls of arm $k$ up to  and including time~$n$.

\section{Lower Bound}
\label{lower_bounds}
We establish an instance-dependent lower bound on the regret incurred by any \vocab{consistent} policy $\pi$, that is, $\lim_{n \to \infty} {\ccal R}_{\nu}^{\rho}(\pi,n)/n^a = 0$ for any $a > 0$.
\begin{theorem}\label{thm: rho_lower_bound}
Let ${\ccal Q} = {\ccal Q}_1 \times \dots \times {\ccal Q}_K$ be a set of bandit models $\nu = (\nu_1,\dots,\nu_K)$ where each $\nu_k$ belongs to the class of distributions ${\ccal Q}_k$. Let $\pi$ be any consistent policy. Suppose without loss of generality that $1$ is the optimal arm, i.e. $r_1^\rho = \max_{k \in [K]}r_k^\rho$. For any $\nu \in \ccal Q$ and suboptimal arm $k$,
$$\liminf_{n \to \infty} \frac{\EE_\nu[T_k(n)]}{\log n} \geq \frac{1}{{\ccal K}_{\inf}^{{\rho},{\ccal Q}_k} (\nu_k,r_1^\rho)}.$$
\end{theorem}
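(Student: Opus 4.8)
The plan is to run the classical change-of-measure argument of Lai and Robbins, in the streamlined form due to Garivier, Ménard, and Stoltz, specialised to the risk functional $\rho$. Fix a suboptimal arm $k$ and $\eeps > 0$. Since ${\ccal K}_{\inf}^{\rho,{\ccal Q}_k}(\nu_k,r_1^\rho)$ is by definition the infimum of $\KL(\nu_k,\eta)$ over $\{\eta \in {\ccal Q}_k : \rho(\eta) \geq r_1^\rho\}$, I would first select an alternative reward law $\nu_k' \in {\ccal Q}_k$ with $\rho(\nu_k') > r_1^\rho$ and $\KL(\nu_k,\nu_k') \leq {\ccal K}_{\inf}^{\rho,{\ccal Q}_k}(\nu_k,r_1^\rho) + \eeps$, and form the bandit model $\nu' := (\nu_1,\dots,\nu_{k-1},\nu_k',\nu_{k+1},\dots,\nu_K)$. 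Because ${\ccal Q} = {\ccal Q}_1 \times \dots \times {\ccal Q}_K$ is a product and $\nu_k' \in {\ccal Q}_k$, we have $\nu' \in {\ccal Q}$; and by construction arm $k$ is the unique optimal arm under $\nu'$, so every other arm---in particular arm $1$---is suboptimal there.

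The engine is the divergence decomposition paired with the data-processing inequality. Since $\nu$ and $\nu'$ differ only in the law of arm $k$, the relative entropy between the laws $\PP_\nu$ and $\PP_{\nu'}$ of the length-$n$ interaction trajectory factorises as $\KL(\PP_\nu,\PP_{\nu'}) = \EE_\nu[T_k(n)]\,\KL(\nu_k,\nu_k')$, all other arms contributing zero. Applying data-processing to any trajectory event $A$ and the binary relative entropy $\mathrm{kl}$ gives $\EE_\nu[T_k(n)]\,\KL(\nu_k,\nu_k') \geq \mathrm{kl}\!\big(\PP_\nu(A),\PP_{\nu'}(A)\big)$, and it remains to choose $A$ so that the right-hand side is of order $\log n$.

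I would take $A = \{T_k(n) \leq n/2\}$. Under $\nu$, arm $k$ is suboptimal, so consistency forces $\EE_\nu[T_k(n)] = o(n^a)$ for every $a > 0$, and Markov's inequality yields $\PP_\nu(A) \to 1$. Under $\nu'$, arm $k$ is optimal, so consistency applied to $\nu' \in {\ccal Q}$ makes each remaining arm pulled $o(n^a)$ times, whence $\EE_{\nu'}[n - T_k(n)] = o(n^a)$ and $\PP_{\nu'}(A) = o(n^{a-1})$ for every $a > 0$. Using $\mathrm{kl}(x,y) \geq x\log(1/y) - \log 2$, the displayed bound becomes $\EE_\nu[T_k(n)]\,\KL(\nu_k,\nu_k') \geq (1-o(1))(1-a)\log n - O(1)$; dividing by $\log n$, taking $\liminf_{n\to\infty}$, then letting $a \downarrow 0$ and $\eeps \downarrow 0$ gives $\liminf_{n\to\infty} \EE_\nu[T_k(n)]/\log n \geq 1/{\ccal K}_{\inf}^{\rho,{\ccal Q}_k}(\nu_k,r_1^\rho)$.

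I expect the only genuinely delicate point to be the first step: producing $\nu_k'$ with risk \emph{strictly} above $r_1^\rho$---needed so that arm $1$ is truly suboptimal under $\nu'$, which is what licenses invoking consistency at $\nu'$---while keeping $\KL(\nu_k,\nu_k')$ within $\eeps$ of the infimum, whose defining constraint is only $\rho(\eta) \geq r_1^\rho$. I would resolve this using the monotonicity of $r \mapsto {\ccal K}_{\inf}^{\rho,{\ccal Q}_k}(\nu_k,r)$: take near-minimisers for the strictly larger targets $r_1^\rho + \xi$ and let $\xi \downarrow 0$, appealing to right-continuity of this nondecreasing map (equivalently, the lower-semicontinuity furnished by Proposition~\ref{cor: continuity_of_KL}) to ensure the KL penalty converges back to ${\ccal K}_{\inf}^{\rho,{\ccal Q}_k}(\nu_k,r_1^\rho)$. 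Everything downstream is independent of $\rho$ and mirrors the vanilla-bandit lower bound.
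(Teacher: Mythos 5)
Your proposal follows essentially the same route as the paper's proof: a change of measure in which only the law of the suboptimal arm $k$ is replaced by an alternative $\nu_k'$ with $\rho(\nu_k') > r_1^\rho$, the divergence decomposition $\KL(\PP_\nu,\PP_{\nu'}) = \EE_\nu[T_k(n)]\,\KL(\nu_k,\nu_k')$, the Garivier--M\'enard--Stoltz fundamental inequality to pass to a binary relative entropy, and consistency invoked under both $\nu$ and $\nu'$. The paper instantiates that inequality with the $[0,1]$-valued statistic $T_k(n)/n$ rather than with your event $\{T_k(n)\leq n/2\}$, but these are interchangeable choices of test statistic, and your quantitative handling of the two consistency estimates (Markov under $\nu'$, the bound $\mathrm{kl}(x,y)\geq x\log(1/y)-\log 2$) is sound and simply unpacks the steps the paper cites as a black box.

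The one place where you go beyond the paper is also the one place where your reasoning contains an error. You correctly observe that the argument only works for alternatives satisfying the \emph{strict} inequality $\rho(\nu_k') > r_1^\rho$ (otherwise arm $1$ merely ties under $\nu'$ and consistency gives no control on $\PP_{\nu'}(T_k(n)\leq n/2)$), so that what the argument directly yields is $\liminf_n \EE_\nu[T_k(n)]/\log n \geq 1/\lim_{\xi\downarrow 0}{\ccal K}_{\inf}^{\rho,{\ccal Q}_k}(\nu_k,r_1^\rho+\xi)$; closing the gap to the stated constant requires right-continuity of the nondecreasing map $r\mapsto {\ccal K}_{\inf}^{\rho,{\ccal Q}_k}(\nu_k,r)$ at $r_1^\rho$. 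However, your parenthetical claim that this right-continuity is ``equivalently'' the lower-semicontinuity of Proposition~\ref{cor: continuity_of_KL} is false: for a nondecreasing function, lower semicontinuity is equivalent to \emph{left}-continuity (the left limit, which is always $\leq$ the value, must also be $\geq$ it), whereas the right limit always dominates the value, and forcing it back down to the value is an upper-semicontinuity statement that Proposition~\ref{cor: continuity_of_KL} does not provide. Right-continuity can genuinely fail --- e.g.\ if the level set $\{\rho\geq r_1^\rho\}$ contains a point that is KL-close to $\nu_k$ but isolated from $\{\rho> r_1^\rho\}$ --- and in that case only the weaker bound with the strict-constraint infimum survives. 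To be fair, the paper's own proof is no more careful here: it proves the bound for strict alternatives and then simply ``takes the infimum over $\rho(\nu_k')\geq r^*$.'' So your proposal matches the paper's proof in substance and you deserve credit for flagging the delicate point, but the specific justification you offer for resolving it does not work as stated.
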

The proof follows that of \citet{baudry2020thompson} by replacing $(\mathrm{CVaR}_\alpha, c^*)$ therein by $(\rho, r_1^\rho)$, who in turn adapted the proof in \citet{doi:10.1287/moor.2017.0928} for their lower bound on the CVaR regret on consistent policies, and thus we relegate it to the supplementary material for brevity.

\section{The $\rho$-MTS and $\rho$-NPTS Algorithms}
\label{algorithms}
We design  two Thompson sampling-based algorithms, which follow in the spirit of \citet{pmlr-v117-riou20a} and \citet{baudry2020thompson}, called $\rho$-Multinomial-TS ($\rho$-MTS) (resp.\ $\rho$-Nonparametric-TS ($\rho$-NPTS)), where each $\nu_k$ follows a multinomial distribution (resp. distribution with bounded support).
\subsection{$\rho$-Multinomial-TS ($\rho$-MTS)}
Denote the Dirichlet distribution of parameters $\alpha = (\alpha^0,\alpha^1,\dots,\alpha^M)$ by $\Dir(\alpha)$ with density function 
\begin{equation}
f_{\Dir(\alpha)}(x) = \frac{\Gamma(\sum_{i=1}^n \alpha^i)}{\prod_{i=1}^n \Gamma(\alpha^i)} \prod_{i=1}^n x_i^{\alpha^i-1}, \label{eqn:dir}
\end{equation}
 where $x \in \Delta^{M} $.
The first algorithm, $\rho$-MTS, generalises the index policy in \citet{baudry2020thompson} from $\mathrm{CVaR}_\alpha$ to $\rho$.
The conjugate of the multinomial distribution is precisely the Dirichlet distribution.
Hence, we generate samples from the Dirichet distribution, and demonstrate that $\rho$-MTS is optimal in the case where for each $k \in [k]$, $\nu_k$ follows a multinomial distribution with support $\mathcal{S} =\{s_0,s_1,\dots,s_M\}$ regarded as a subset of $C$, $|\mathcal{S}| = M+1$, $s_0<s_1<\ldots<s_M$ without loss of generality, and probability vector $p_k \in \Delta^M$.
\begin{algorithm}[t]
   \caption{$\rho$-MTS}
   \label{alg: rho_mts}
\begin{algorithmic}[1]
\STATE \textbf{Input:} Continuous risk functional $\rho$, horizon $n$, support $S = \{s_0,s_1,\dots,s_M\}$.
\STATE Set $\alpha_k^m := 1$ for $k \in [K]$, $m \in {[M]}_0$, denote $\alpha_k = (\alpha_k^0,\alpha_k^1,\dots,\alpha_k^M)$.
\FOR {$t \in [n]$}
	\FOR {$k \in [K]$}
		\STATE Sample $L_k^t \sim \Dir(\alpha_k)$.
		\STATE Compute $r_{k,t}^{\rho} = \rho(\frak D_S(L_k^t))$.
	\ENDFOR
	\IF {$t \in [K]$}
		\STATE Choose action $A_t =t$.
	\ELSE
		\STATE Choose action $A_t =  \argmax_{k \in [K]} r_{k,t}^{\rho}$.
	\ENDIF
	\STATE Observe reward $X_{A_t}$.
	\STATE Increment $a_{A_t}^m$ by $\II\{X_{A_t} = s_m\}$, $m \in {[M]}_0$.
\ENDFOR
\end{algorithmic}
\end{algorithm}
In particular, for each $k \in [K]$, we initialise arm $k$ with a distribution of $\Dir(1^{M+1})$, the uniform distribution over $\Delta^M$, where for any $d \in \NN$, we denoted $1^d := (1,\dots,1) \in \RR^d$.
After $t$ rounds, the posterior distribution of arm $k$ is given by $\Dir(1+T_k^0(t), \dots, 1+T_k^{M}(t))$, where $T_k^i(t)$ denotes the number of times arm $k$ was chosen and reward $s_i$ was received until time $t$.
Let $\nu_k := {\frak D}_{\ccal S}(p_k)$ denote the distribution of arm $k$, where $p_k = (p_k^0,p_k^1,\dots,p_k^M) \in {\Delta}^{M}$.
\subsection{$\rho$-Nonparametric-TS ($\rho$-NPTS)}
To generalise to the bandit setting where the $K$ arms have general distributions with supports in $C \subseteq [0,1]$, we propose the $\rho$-NPTS algorithm.
Unlike $\rho$-MTS that samples for each $k \in [K]$ a probability distribution over a fixed support $\{s_0,s_1,\dots,s_M\} \subset C$, $\rho$-NPTS samples for each $k \in [K]$ a probability vector $L_k^t \Fol \Dir(1^{N_k})$ over $(1,X_1^k,\dots,X_{N_k}^k)$, where $N_k$ is the number of times arm $k$ has been pulled so far. 
\begin{algorithm}[t]
   \caption{$\rho$-NPTS}
   \label{alg: rho_npts}
\begin{algorithmic}[1]
\STATE \textbf{Input:} Continuous risk functional $\rho$, horizon $n$, history of the $k$-th arm $S_k = (1)$, $k \in [K]$.
\STATE Set $S_k := (1)$ for $k \in [K]$, $N_k = 1$.
\FOR {$t \in [n]$}
	\FOR {$k \in [K]$}
		\STATE Sample $L_k^t \sim \Dir(1^{N_k})$.
		\STATE Compute $r_{k,t}^{\rho} = \rho({\frak D}_{S_k}(L_k^t))$.
	\ENDFOR
	\STATE Choose action $A_t = \argmax_{k \in [K]} r_{k,t}^{\rho}$.
	\STATE Observe reward $X_{A_t}$.
	\STATE Increment $N_{A_t}$ and update $S_{A_t} := (S_{A_t}, X_{A_t})$.
\ENDFOR
\end{algorithmic}
\end{algorithm}
Thus, the support of the sampled distribution for $\rho$-NPTS depends on the observed reward, and is not technically a posterior sample with respect to some fixed prior distribution.

\section{Regret Analysis of $\rho$-MTS}
\label{regret_analyses}
In this section we present our regret guarantee for $\rho$-MTS, and show that it matches the lower bound in Theorem~\ref{thm: rho_lower_bound} and thus is \vocab{asymptotically optimal}.
\begin{theorem}\label{thm: rho_mts_upper_bound}
	Fix a finite alphabet $\ccal S  = \{s_0,s_1,\dots,s_M\} \subset C \subseteq [0,1]$.
	Let $\nu = {(\nu_k)}_{k \in [K]} \subset ({\ccal P}_{\ccal S},D_\infty)$ be a $K$-armed bandit model. Let $\rho$ be a continuous and dominant risk functional on $(\ccal P,D_\infty)$ such that $|\ccal I| = M$ (see Definition~\ref{def: dominant_risk_functional}). Then the regret of $\rho$-MTS is given by
	\begin{equation*}\label{eqn: almost-optimal}
	{\ccal R}_\nu^{\rho}(\rho\text{-MTS}, n) \leq \sum_{k : \Delta_k^{\rho} > 0} \frac{\Delta_k^{\rho} \log n}{{\ccal K}_{\inf}^{\rho}(\nu_k,r_1^{\rho})} + o(\log n),
	\end{equation*}
	where $r_k^{\rho} = \rho(\nu_k)$ for each $k \in [K]$, and $r_1^{\rho} = \max_{k \in [K]} r_k^{\rho}$ without loss of generality.
\end{theorem}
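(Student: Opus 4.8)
The plan is to reduce everything, via the decomposition ${\ccal R}_\nu^\rho(\rho\text{-MTS},n)=\sum_{k}\EE_\nu[T_k(n)]\,\Delta_k^\rho$ recorded in the Problem Formulation, to proving for each suboptimal arm $k$ that $\EE_\nu[T_k(n)]\le \log n/{\ccal K}_{\inf}^\rho(\nu_k,r_1^\rho)+o(\log n)$. Fix such a $k$ and small $\eeps,\delta>0$, set the threshold $r:=r_1^\rho-\eeps$, and abbreviate by $\theta_{j,t}:=\sigma_{\rho,\ccal S}(L_j^t)=r_{j,t}^\rho$ the risk of the round-$t$ posterior sample of arm $j$, which is drawn from a Dirichlet whose parameters encode the current counts. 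Choosing the deterministic budget
\[
s_0:=\left\lceil \frac{(1+\delta)\log n}{{\ccal K}_{\inf}^\rho(\nu_k,r)}\right\rceil,
\]
I split a pull of arm $k$ first by whether $k$ has already been pulled $s_0$ times, and then, on the event $T_k(t-1)\ge s_0$, by whether the optimal arm's sample clears $r$. Since $A_t=k$ forces $\theta_{k,t}\ge\theta_{1,t}$, this yields
\[
\EE_\nu[T_k(n)]\le s_0+\underbrace{\EE_\nu\!\Big[\sum_t\II\{A_t=k,\,T_k(t-1)\ge s_0,\,\theta_{k,t}>r\}\Big]}_{\text{(I)}}+\underbrace{\EE_\nu\!\Big[\sum_t\II\{A_t=k,\,\theta_{1,t}\le r\}\Big]}_{\text{(II)}}.
\]

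For term (I) I condition on arm $k$ having $s\ge s_0$ observations with empirical vector $\hat p_{k,s}$. Lemma~\ref{lem: upper_bound} gives $\PP(\theta_{k,t}>r\mid s)\le C_1 s^{M/2}\exp(-s\,{\ccal K}_{\inf}^{\rho_{\ccal S}}({\frak D}_{\ccal S}(\hat p_{k,s}),r))$. Away from an exponentially rare event (multinomial concentration of $\hat p_{k,s}$ around $p_k$ in $d_\infty$), lower-semicontinuity of ${\ccal K}_{\inf}^\rho$ from Proposition~\ref{cor: continuity_of_KL} forces ${\ccal K}_{\inf}^{\rho_{\ccal S}}({\frak D}_{\ccal S}(\hat p_{k,s}),r)\ge {\ccal K}_{\inf}^\rho(\nu_k,r)-o(1)$, so $\EE[\text{(I)}]\le\sum_{s\ge s_0}C_1 s^{M/2}n^{-(1+\delta)(1-o(1))}+O(1)=o(1)$. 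Thus the entire $\log n$ contribution is carried by $s_0$, and (I) is a genuine lower-order remainder.

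Term (II) counts rounds where a suboptimal arm wins while the optimal arm's sample undershoots $r<r_1^\rho$; this is the anti-concentration step and is where dominance with $|\ccal I|=M$ is used, since it is exactly the hypothesis enabling the tail lower bound Lemma~\ref{lem: lower_bound}. I would run the renewal argument of Agrawal–Goyal: let the posterior of arm $1$ be frozen between its $j$th and $(j{+}1)$th pull, and let $q_j:=\PP(\theta_{1}>r\mid j\text{ observations})$; the number of intervening rounds on which $\theta_{1,t}\le r$ is stochastically geometric, so $\EE[\text{(II)}]\le\sum_{j\ge 0}\EE[1/q_j-1]$. Once $j$ is large enough that arm $1$'s empirical risk lies within $\eeps/2$ of $r_1^\rho$ (again by concentration on $({\ccal P}_{\ccal S},D_\infty)$ plus continuity of $\rho$), the lower tail of Lemma~\ref{lem: upper_bound} makes $1-q_j$ decay exponentially in $j$; for the finitely many small-$j$ terms, Lemma~\ref{lem: lower_bound} keeps $q_j$ bounded below by a non-exponentially-small quantity so that $\EE[1/q_j]$ stays finite. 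The sum therefore converges and $\EE[\text{(II)}]=O(1)=o(\log n)$.

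Collecting the three pieces gives $\EE_\nu[T_k(n)]\le (1+\delta)\log n/{\ccal K}_{\inf}^\rho(\nu_k,r_1^\rho-\eeps)+o(\log n)$; letting $\delta\downarrow0$ and then $\eeps\downarrow0$ and invoking that $r\mapsto{\ccal K}_{\inf}^\rho(\nu_k,r)$ is non-increasing and lower-semicontinuous (Proposition~\ref{cor: continuity_of_KL}), so that ${\ccal K}_{\inf}^\rho(\nu_k,r_1^\rho-\eeps)\to{\ccal K}_{\inf}^\rho(\nu_k,r_1^\rho)$, upgrades the constant to the claimed value. The main obstacle will be term (II): faithfully coupling the random number of pulls of the optimal arm to the anti-concentration estimate so that the undersampled rounds cost only $O(1)$, and in particular controlling the diffuse early regime before arm $1$ has concentrated, where the crude polynomial prefactor $C_2 n^{-(M+1)/2}$ of Lemma~\ref{lem: lower_bound} is all that is available.
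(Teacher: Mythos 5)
Your overall architecture coincides with the paper's own proof: your $s_0+(\mathrm{I})$ is the paper's term $A$ (Lemma~\ref{lem: upper_bounding_post_cv}), bounded the same way via the tail upper bound of Lemma~\ref{lem: upper_bound} together with lower-semicontinuity of ${\ccal K}_{\inf}^{\rho}$, and your term $(\mathrm{II})$ is the paper's $B_1$ inside Lemma~\ref{lem: upper_bounding_pre_cv}, for which the paper derives exactly your renewal bound $\sum_{j}\EE[1/q_j-1]$, written there as $\sum_{\ell}\EE\parenb{\PP(\sigma_{\rho,\ccal S}(L_1(\ell-1))\leq r_1^{\rho}-\eeps_1)/(1-\PP(\sigma_{\rho,\ccal S}(L_1(\ell-1))\leq r_1^{\rho}-\eeps_1))}$. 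The closing limit $\eeps\downarrow 0$ is also the paper's step (modulo your misstating the monotonicity of $r\mapsto{\ccal K}_{\inf}^{\rho}(\nu_k,r)$: it is non-decreasing, not non-increasing, though combined with lower-semicontinuity your conclusion stands).

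The genuine gap is in how you propose to finish term $(\mathrm{II})$. You claim that once $j$ is large, arm $1$'s empirical risk lies within $\eeps/2$ of $r_1^{\rho}$, so that only ``finitely many small-$j$ terms'' remain, on which Lemma~\ref{lem: lower_bound} keeps $q_j$ ``non-exponentially small.'' Both halves of this are incorrect. Concentration of $\widehat p_{1,j}$ is a high-probability statement at each fixed $j$, not a deterministic statement for large $j$: at \emph{every} $j$ there is an event of probability of order $e^{-cj}$ on which $\sigma_{\rho,\ccal S}(\widehat p_{1,j})\leq r_1^{\rho}-\eeps/2$, and on this event Lemma~\ref{lem: lower_bound} gives only
$$q_j \;\geq\; C_2\, j^{-(3M/2+1)}\exp\paren{-j\,{\ccal K}_{\inf}^{\rho_{\ccal S}}({\frak D}_{\ccal S}(\widehat p_{1,j}),r)},$$
whose exponent is strictly positive there, so $1/q_j$ is exponentially \emph{large} in $j$ — at every $j$, with exponentially small probability. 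Summability of $\sum_j \EE[1/q_j-1]$ is therefore an exponent race: the method-of-types probability of any bad empirical type $\widehat\mu$ is at most $\exp(-j\,\KL(\widehat\mu,\nu_1)) \leq \exp(-j\,{\ccal K}_{\inf}^{\rho}(\widehat\mu,r_1^{\rho}))$, and this must beat $\exp(+j\,{\ccal K}_{\inf}^{\rho}(\widehat\mu,r_1^{\rho}-\eeps))$ \emph{uniformly} over bad types; that is, one needs
$$\delta' \;:=\; \inf\sett{{\ccal K}_{\inf}^{\rho}(\mu,r_1^{\rho}) - {\ccal K}_{\inf}^{\rho}(\mu,r_1^{\rho}-\eeps) \,:\, \rho(\mu)\leq r_1^{\rho}-\eeps/2} \;>\; 0.$$
This strict positivity is not free: it is supplied by the paper's Proposition~\ref{prop: continuity_props} (strict monotonicity of ${\ccal K}_{\inf}^{\rho_{\ccal S}}({\frak D}_{\ccal S}(p),\cdot)$ above $\sigma_{\rho,\ccal S}(p)$, proved by a ``constraints are binding at the optimum'' compactness argument), and Cases 2 and 3 of the paper's proof of Lemma~\ref{lem: upper_bounding_pre_cv} are devoted precisely to this regime — the intermediate band where ${\ccal K}_{\inf}^{\rho}(\widehat\mu,r_1^{\rho}-\eeps)=0$, so $1/q_j$ is merely polynomial and is paired with the DKW-type bound of Lemma~\ref{lem: dkw_inequality}, and the far band where the $\delta'$-gap is essential. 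Your final paragraph correctly flags $(\mathrm{II})$ as the main obstacle, but the mechanism you sketch for it would not close the argument.
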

\begin{remark} \label{rmk:compare}
	{\em When $\rho = \EE[\cdot]$ and $\rho = {\mathrm{CVaR}}_\alpha$, $\rho $ satisfies $|\ccal I| = M$, and we recover the asymptotically optimal algorithms for multinomial distributions in \citet{pmlr-v117-riou20a} and \citet{baudry2020thompson} respectively.
	Furthermore, in the setting where $M=1$, $\rho = {\mathrm{MV}_\gamma}$  satisfies $|\ccal I| = M$, recovering the Bernoulli-MVTS algorithm proposed by \citet{zhu2020thompson}.
	Theorem~\ref{thm: rho_mts_upper_bound} improves their analysis significantly.
	First, we replace the term $\inv{(2\min\{{(p_1-p_i)}^2, {(1 - \gamma - p_1 - p_i)}^2\})}$ (where $\{p_i\}_{i=1}^K$ are the means of the Bernoulli distributions, and $\gamma$ is the risk tolerance of the mean-variance) that creates some slackness in their regret bound with the {\em exact} pre-constant ${\ccal K}_{\inf}^{\rho}(\nu_k,r_1)^{-1}$ in the $\log$ term.
	Second, we show this attains the  asymptotic lower bound in Theorem~\ref{thm: rho_lower_bound}.
	In general, the distorted risk functionals indicated by $\checkmark$ in Table~\ref{table: common_drf} are continuous and dominant for any $\ccal S$, and EPDMs in Table~\ref{table: common_EDPM} indicated by $\checkmark$ are continuous and dominant for $|\ccal S| = 2=:M+1$ (implying $|\ccal I|=1=M$), admitting asymptotically optimal $\rho$-MTS algorithms.}
\end{remark}

\begin{proof}[Proof Outline for Theorem~\ref{thm: rho_mts_upper_bound}]
	Fix $\eeps_1,\eeps_2>0$ and define the two events ${\ccal E}_1 := \{r_{k,t}^{\rho} \geq r_1^{\rho} - \eeps_1, D_\infty(\widehat{\nu}_k(t),\nu_k) \leq \eeps_2\}$ and ${\ccal E}_2 := {\ccal E}_1^c$,
where $(\widehat{\nu}_k(t),\nu_k) = ({\frak D}_{\ccal S}(\widehat{p}_k(t)),{\frak D}_{\ccal S}(p_k))$.
We upper bound $\EE[T_k(n)]$ by
\begin{align*}
\EE[T_k(n)] &\leq \underbrace{ \EE\!\parenb{\sum_{t = 1}^n \II(A_t=k, {\ccal E}_1)}}_{A} + \underbrace{\EE\!\parenb{\sum_{t = 1}^n \II(A_t=k, {\ccal E}_2)}}_B\\
&\leq \frac{\log n}{{\ccal K}_{\inf}^{\rho}(\nu_k, r_1^{\rho})-\eeps_3/2} + \ccal O(1),
\end{align*}
where $\eeps_3 \in (0,\min\{\eeps_1,\eeps_2\})$
by Lemmas~\ref{lem: upper_bounding_post_cv} and~\ref{lem: upper_bounding_pre_cv}, which are stated  below and proven in the supplementary material.
Taking $\eeps_1 \to 0^+,\eeps_2 \to 0^+$,
\begin{align*}
{\ccal R}_\nu^{\rho}(\text{$\rho$-MTS}, n)
&\leq \sum_{k : \Delta_k^{\rho} > 0} \frac{\Delta_k^{\rho} \log n}{{\ccal K}_{\inf}^{\rho}(\nu_k,r_1^{\rho})} + o(\log n).
\end{align*}
as desired.
\end{proof}
\begin{lemma}\label{lem: upper_bounding_post_cv}
	Suppose $\rho$ is continuous on $({\ccal P}_{\ccal S},D_\infty)$.
	For small $\eeps_1,\eeps_2>0$, $\eeps_3 \in (0,\max\{\eeps_1,\eeps_2\})$, 
	$$A \leq \frac{\log n}{{\ccal K}_{\inf}^{\rho}(\nu_k, r_1^{\rho})-\eeps_3/2} + \ccal O(1)\quad \mbox{as}\; \; n \to\infty.$$
\end{lemma}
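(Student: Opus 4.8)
The plan is to bound the expected number of \emph{post-convergence} pulls of the suboptimal arm $k$ by isolating, for each value of its pull count, the single event in which its Thompson sample overshoots the threshold $r_1^{\rho}-\eeps_1$. First I would discard the $\argmax$ selection rule via $\II(A_t=k,{\ccal E}_1)\leq \II\big(A_t=k,\ r_{k,t}^{\rho}\geq r_1^{\rho}-\eeps_1,\ D_\infty(\widehat{\nu}_k(t),\nu_k)\leq\eeps_2\big)$, and then reorganise the sum $\sum_{t=1}^n$ according to the pull count $s=T_k(t-1)$. Since each count value is left exactly once (a pull of $k$ increments the count), the contribution of each $s$ is a single indicator, so that $A\leq\sum_{s\geq 0}\PP(G_s)$, where $G_s$ is the event that arm $k$ is pulled from count $s$ to $s+1$ while ${\ccal E}_1$ holds. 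This \emph{single-transition} bookkeeping is what keeps the number of rounds arm $k$ dwells at a fixed count from entering the estimate.

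The second ingredient is a continuity reduction. On the good-empirical event $\{D_\infty(\widehat{\nu}_k,\nu_k)\leq\eeps_2\}$, the posterior mean ${\frak D}_{\ccal S}(\alpha_k/(M+1+s))$ lies within $\eeps_2+O(1/s)$ of $\nu_k$ in $D_\infty$. Invoking the lower-semicontinuity of ${\ccal K}_{\inf}^{\rho}$ in both of its arguments (Proposition~\ref{cor: continuity_of_KL}), for any target $\eeps_3>0$ I can choose $\eeps_1,\eeps_2$ small enough that ${\ccal K}_{\inf}^{\rho}(\widehat{\nu}_k,r_1^{\rho}-\eeps_1)\geq {\ccal K}_{\inf}^{\rho}(\nu_k,r_1^{\rho})-\eeps_3/2$; write $K:={\ccal K}_{\inf}^{\rho}(\nu_k,r_1^{\rho})$. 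Conditioning on the $s$ accumulated observations of arm $k$ (which are i.i.d.\ from $\nu_k$, so that $\widehat{\nu}_k$ concentrates) and applying the tail upper bound of Lemma~\ref{lem: upper_bound} with total Dirichlet mass $M+1+s$ then yields $\PP(G_s)\lesssim C_1(M+1+s)^{M/2}\exp(-(M+1+s)(K-\eeps_3/2))$.

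Finally I would split at the threshold $L_n:=\log n/(K-\eeps_3/2)$. For $s\leq L_n$ the trivial bound $\PP(G_s)\leq 1$ suffices and contributes at most $L_n+O(1)=\log n/(K-\eeps_3/2)+O(1)$; for $s>L_n$ the exponential factor makes $\sum_{s>L_n}C_1(M+1+s)^{M/2}\exp(-(M+1+s)(K-\eeps_3/2))$ a convergent tail of the series $\sum_s s^{M/2}e^{-(K-\eeps_3/2)s}$, hence $O(1)$ (indeed $o(1)$). Adding the two pieces gives the claimed $A\leq \log n/(K-\eeps_3/2)+O(1)$, which is exactly the statement of Lemma~\ref{lem: upper_bounding_post_cv}.

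The main obstacle is the third step: reducing $\PP(G_s)$ to a \emph{single}-Thompson-sample tail probability. The pull to count $s+1$ occurs at a data-dependent (stopping) time, and the sample that triggers the pull is conditioned on exceeding the competing arms' samples, which biases it upward; a naive union bound over the random number of rounds spent at count $s$ would reintroduce a factor $n$ that inflates the tail term to $O((\log n)^{M/2})$ and destroy the $O(1)$ remainder for $M\geq 2$. Controlling this cleanly requires the careful conditioning on arm $k$'s accumulated history developed in \citet{pmlr-v117-riou20a} and \citet{baudry2020thompson}, which is precisely where the continuity of $\rho$ on $({\ccal P}_{\ccal S},D_\infty)$ and the conditional form of Lemma~\ref{lem: upper_bound} are used. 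I would mirror their argument, substituting the generic continuous $\rho$ for $\EE$ and $\mathrm{CVaR}_\alpha$ throughout, with the continuity reduction of the second paragraph supplying the passage from the empirical rate ${\ccal K}_{\inf}^{\rho}(\widehat{\nu}_k,\cdot)$ to the true rate $K$.
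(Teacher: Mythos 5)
There is a genuine gap, and it sits exactly at the step you flag as the ``main obstacle'': the claimed inequality $\PP(G_s)\lesssim C_1(M+1+s)^{M/2}\exp(-(M+1+s)\kappa)$, with $\kappa:={\ccal K}_{\inf}^{\rho}(\nu_k,r_1^{\rho})-\eeps_3/2$, is false in general, and no conditioning on arm $k$'s accumulated history repairs it \emph{within your per-pull decomposition}. The event $G_s$ concerns the sample drawn at the \emph{random} round at which arm $k$ first wins the argmax while at count $s$; arm $k$ may dwell at count $s$ for a data-dependent number of rounds, drawing a fresh Dirichlet sample at each. In the typical regime where the optimal arm's samples concentrate above $r_1^{\rho}-\eeps_1$, any sample of arm $k$ that \emph{wins} automatically exceeds $r_1^{\rho}-\eeps_1$, so $\PP(G_s)$ is essentially the probability that arm $k$ reaches count $s+1$ within the horizon, which is of order $\min\{1,\,n\cdot C_1(M+1+s)^{M/2}e^{-(M+1+s)\kappa}\}$ rather than the single-sample tail. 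Thus the dwell time \emph{does} enter, contrary to your claim that single-transition bookkeeping keeps it out; your own fallback (union bound over dwell rounds) then leaves a remainder of order $\log\log n$ or $(\log n)^{M/2}$, which suffices for the $o(\log n)$ in Theorem~\ref{thm: rho_mts_upper_bound} but not for the $\ccal O(1)$ this lemma asserts. The references you invoke do not rescue this route either: their pull-time, selection-bias machinery (the $p/(1-p)$ trick indexed by pulls of the \emph{optimal} arm) is what the paper uses for the complementary term $B$ in Lemma~\ref{lem: upper_bounding_pre_cv}, not for $A$.

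The paper's proof of Lemma~\ref{lem: upper_bounding_post_cv} (following \citet{baudry2020thompson}) instead stays with the sum over rounds and never bounds a per-pull probability. It writes $A\leq T_0(n)+A'$, where $T_0(n)$ trivially accounts for all pulls made while $T_k(t-1)<T_0(n)$, and for the remaining rounds it \emph{drops} the indicator $\II(A_t=k)$ to get
\begin{align*}
A' \leq \sum_{t=1}^{n}\EE\Bigl[\II\bigl(T_k(t-1)\geq T_0(n),\, D_\infty(\widehat{\nu}_k(t),\nu_k)\leq\eeps_2\bigr)\,\PP\bigl(r_{k,t}^{\rho}\geq r_1^{\rho}-\eeps_1 \,\big|\, {\ccal F}_{t-1}\bigr)\Bigr].
\end{align*}
The conditional probability is now a genuine single-sample posterior tail, so Lemma~\ref{lem: upper_bound} together with the lower-semicontinuity of ${\ccal K}_{\inf}^{\rho}$ (and absorption of the polynomial factor into a constant) bounds it by $C_1'\exp\bigl(-(T_0(n)+M+1)({\ccal K}_{\inf}^{\rho}(\nu_k,r_1^{\rho})-\eeps_3/2)\bigr)$ uniformly over the surviving rounds. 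Summing over all $n$ rounds costs a factor $n$---the dwell time, now explicit---and the choice $T_0(n)=\log n/({\ccal K}_{\inf}^{\rho}(\nu_k,r_1^{\rho})-\eeps_3/2)-(M+1)$ makes $n$ times the exponential equal to $\ccal O(1)$, yielding the lemma with its $\ccal O(1)$ remainder. Your continuity reduction and threshold-splitting ingredients are sound, but they must be deployed inside this round-wise decomposition; the pull-wise one cannot deliver the stated bound.
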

\begin{lemma}\label{lem: upper_bounding_pre_cv}
	Suppose $\rho$ satisfies the conditions in Theorem~\ref{thm: rho_mts_upper_bound}.
	For small enough $\eeps>0$,
	$B \leq \ccal O(1)$ as $n\to\infty$.
\end{lemma}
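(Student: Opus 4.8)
The plan is to split $B$ according to the two ways the failure event $\ccal E_2 = \ccal E_1^c$ can occur, and bound each piece by a convergent series in the number of arm pulls. Since $\ccal E_2 \subseteq \{r_{k,t}^\rho < r_1^\rho - \eeps_1\} \cup \{D_\infty(\widehat\nu_k(t),\nu_k) > \eeps_2\}$, a union bound gives
\[
B \le \underbrace{\EE\Big[\sum_{t=1}^n \II(A_t = k,\, r_{k,t}^\rho < r_1^\rho - \eeps_1)\Big]}_{B_1} + \underbrace{\EE\Big[\sum_{t=1}^n \II(A_t = k,\, D_\infty(\widehat\nu_k(t),\nu_k) > \eeps_2)\Big]}_{B_2},
\]
and I would show $B_1 = \ccal O(1)$ and $B_2 = \ccal O(1)$ separately. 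Forced exploration on $t \le K$ contributes at most $K = \ccal O(1)$ and is ignored.

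The term $B_2$ — arm $k$'s own empirical law has not yet concentrated — is the easy one. I would reindex the sum by the number $s$ of times arm $k$ has been pulled, noting that each summand with $A_t = k$ advances this count by one, so at most one $t$ realizes each value of $s$; hence $B_2 \le \sum_{s \ge 0} \PP(D_\infty(\widehat\nu_{k,s},\nu_k) > \eeps_2)$, where $\widehat\nu_{k,s}$ is the empirical law after $s$ pulls. The embedding inequality $d_\infty(p,q) \le 2D_\infty({\frak D}_{\ccal S}(p),{\frak D}_{\ccal S}(q)) \le 2M d_\infty(p,q)$ from the $({\ccal P}_{\ccal S},D_\infty)$ example forces $d_\infty(\widehat p_{k,s},p_k) > \eeps_2/M$ on the simplex, and Hoeffding's inequality with a union bound over the $M+1$ coordinates yields $\PP(d_\infty(\widehat p_{k,s},p_k) > \eeps_2/M) \le 2(M+1)\exp(-2s\eeps_2^2/M^2)$; this geometric series sums to $\ccal O(1)$.

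The substantive term is $B_1$, and it is here that both tail bounds and the dominance hypothesis enter. Because $A_t = k$ forces $r_{1,t}^\rho \le r_{k,t}^\rho < r_1^\rho - \eeps_1$, the event lies inside $\{A_t \ne 1,\, r_{1,t}^\rho < r_1^\rho - \eeps_1\}$: the \emph{optimal} arm was sampled pessimistically. I would apply the Agrawal--Goyal/Honda--Takemura reindexing over the number $i$ of pulls of arm $1$ — during which its Dirichlet posterior is frozen and fresh samples $L_1^t$ are drawn — to reduce $B_1$ to $\sum_{i \ge 0}\EE[(1-p_i)/p_i]$, where $p_i := \PP(\sigma_{\rho,\ccal S}(L_1) \ge r_1^\rho - \eeps_1 \mid T_1 = i)$ is the posterior optimism probability. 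To show this series converges I would split at a constant cutoff $i_0$ (depending on $\eeps_1$). For $i \le i_0$ there are finitely many terms, and the tail lower bound of Lemma~\ref{lem: lower_bound} — available precisely because $\rho$ is dominant with $|\ccal I| = M$ — gives $p_i \ge C_2 i^{-(M+1)/2}\exp(-i\,{\ccal K}_{\inf}^\rho(\widehat\nu_1(t), r_1^\rho - \eeps_1))$, which is bounded below by a positive constant: the relevant ${\ccal K}_{\inf}^\rho$ is finite because $\widehat\nu_1(t)$ has full support and continuity of $\rho$ supplies a full-support competitor $\eta$ with $\rho(\eta) \ge r_1^\rho - \eeps_1$. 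Thus each of these terms is finite, contributing $\ccal O(1)$. For $i > i_0$, a DKW/Hoeffding argument (as in $B_2$) shows $\widehat\nu_1(t)$ has concentrated so that uniform continuity of $\rho$ on $({\ccal P}_{\ccal S},D_\infty)$ gives $\rho(\widehat\nu_1(t)) > r_1^\rho - \eeps_1/2$ off an event of probability $\le \exp(-ci)$; on the concentration event the lower-tail estimate of Lemma~\ref{lem: upper_bound} yields $1 - p_i \le C_1 i^{M/2}\exp(-ic')$ with $c' > 0$, so that $(1-p_i)/p_i \le 2(1-p_i)$ is summable, while the non-concentration event contributes a further summable tail. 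Combining the two regimes gives $B_1 = \ccal O(1)$, and with $B_2 = \ccal O(1)$ we conclude $B = \ccal O(1)$.

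The main obstacle I anticipate is the reindexing step producing $\sum_i \EE[(1-p_i)/p_i]$: one must argue carefully that whenever arm $1$'s fresh sample clears the threshold $r_1^\rho - \eeps_1$ it is actually \emph{selected}, which relies on the suboptimal samples having concentrated below this level — an event already charged to $B_2$ and to term $A$ — so that each pessimistic draw of arm $1$ can be charged against the geometric waiting time for arm $1$ to be replayed. Making the conditioning and the coupling between ``sample clears threshold'' and ``arm is pulled'' precise, uniformly over the random frozen posterior, is the delicate part; once that reduction is in place, the two tail bounds slot in mechanically to control the small-$i$ and large-$i$ regimes.
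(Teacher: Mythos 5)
Your overall architecture matches the paper's: the same split $B \leq B_1 + B_2$, the same Agrawal--Goyal/Baudry reindexing of $B_1$ over pulls of the optimal arm into $\sum_i \EE[(1-p_i)/p_i]$, and the same use of Lemma~\ref{lem: upper_bound} when the optimal arm's empirical measure has concentrated (the paper's Case 1). Your treatment of $B_2$ by direct Hoeffding/DKW reindexing is a valid substitute for the paper's citation of the Riou--Honda bound $KM(2M/\eeps_2+2/\eeps_2^2)$, and the coupling concern you flag at the end is the standard step that the paper, too, simply imports from Baudry et al.

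The genuine gap is the clause ``the non-concentration event contributes a further summable tail.'' On the event $\{\sigma_{\rho,\ccal S}(\widehat p_1) \leq r_1^\rho - \eeps_1/2\}$ the ratio $(1-p_i)/p_i$ is \emph{not} bounded by a constant: the only available lower bound on $p_i$ is the dominance bound of Lemma~\ref{lem: lower_bound}, which gives $p_i \geq C_2\, i^{-(M+1)/2}\exp(-i\,{\ccal K}_{\inf}^{\rho}(\widehat\nu_1, r_1^\rho-\eeps_1))$, so $1/p_i$ can grow exponentially in $i$ at a rate unrelated to the DKW exponent $c$; multiplying $\PP(\text{non-conc}) \leq e^{-ci}$ by this ratio does not yield a summable series. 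The paper closes this by splitting the non-concentration event into two further cases. In the intermediate zone $r_1^\rho - \eeps_1 \leq \sigma_{\rho,\ccal S}(\widehat p_1) \leq r_1^\rho - \eeps_1/2$, the empirical measure is itself feasible for the infimum, so ${\ccal K}_{\inf}^{\rho}(\widehat\nu_1, r_1^\rho-\eeps_1)=0$ and dominance gives the \emph{polynomial} ratio bound $C_2^{-1}(i+M+1)^{3M/2+1}$; only then does DKW suffice. In the low zone $\sigma_{\rho,\ccal S}(\widehat p_1) \leq r_1^\rho - \eeps_1$, one must compare two exponential rates: the probability of observing such an empirical measure decays like $\exp(-i\,\KL(\widehat\nu_1,\nu_1)) \leq \exp(-i\,{\ccal K}_{\inf}^{\rho}(\widehat\nu_1, r_1^\rho))$ (since $\nu_1$ is feasible at level $r_1^\rho$), while the ratio grows like $\exp(+i\,{\ccal K}_{\inf}^{\rho}(\widehat\nu_1, r_1^\rho-\eeps_1))$, and summability hinges on the uniform gap
\begin{equation*}
\delta' := \inf_{\mu}\left\{{\ccal K}_{\inf}^{\rho}(\mu,r_1^\rho)-{\ccal K}_{\inf}^{\rho}(\mu,r_1^\rho-\eeps_1)\right\} > 0,
\end{equation*}
which is precisely the strict monotonicity of ${\ccal K}_{\inf}^{\rho}$ in its second argument (Proposition~\ref{prop: continuity_props}), a structural fact your argument never invokes. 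Without this rate comparison the low zone cannot be dispatched, and it is exactly the place where dominance and the fine properties of ${\ccal K}_{\inf}^{\rho}$ must act together; a secondary quibble is that you invoke Lemma~\ref{lem: lower_bound} for the finitely many small-$i$ terms even though it is stated only for large $n$ (there one should instead argue $p_i>0$ directly from the continuity of $\sigma_{\rho,\ccal S}$ and the full support of the Dirichlet).
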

These lemmas, which are proved in the supplementary material, arise from the upper bound for continuous risk functionals (Lemma~\ref{lem: upper_bound}) and the lower bound for dominant risk functionals (Lemma~\ref{lem: lower_bound}). These concentration bounds generalise the conclusions of \citet{pmlr-v117-riou20a} and \citet{baudry2020thompson} to continuous and dominant risk functionals, canonical examples include $\EE[\cdot]$ and $\mathrm{CVaR}_\alpha$.
\section{Numerical Experiments}
\label{numerical_experiments}
We verify our theory via numerical experiments on $\rho$-NPTS for new risk measures that are linear combinations of existing ones. These risk measures illustrate the generality and versatility of the theory developed.
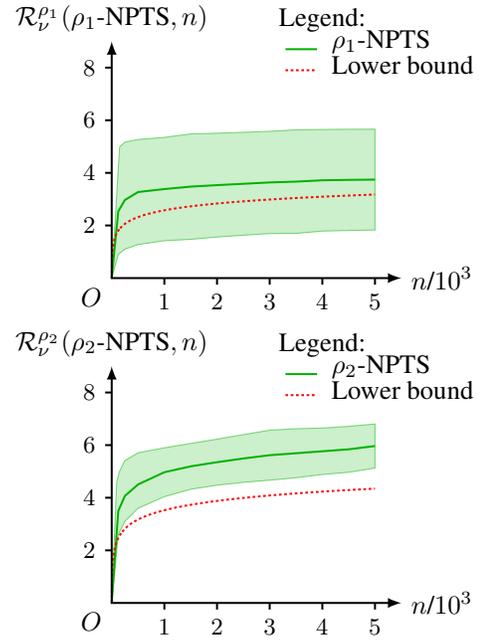
\begin{figure}[ht]
\centering
\begin{tikzpicture}[>=latex,scale=.7]
	\draw (3,8/2+.5) node[anchor=south west] {Legend:};
	\draw (4,7/2+.1+.5) node[anchor=south west] {$\rho_1$-NPTS};
	\draw[color=black!30!green, line width = .75pt] (3.3,7/2+.25+.1+.5) -- (3.9,7/2+.25+.1+.5);
	\draw (4,6/2+.2+.5) node[anchor=south west] {Lower bound};
	\draw[color=red, line width = .75pt, dash pattern = on 1pt off 1pt] (3.3,6/2+.25+.2+.5) -- (3.9,6/2+.25+.2+.5);
	
	\filldraw[color=black!30!green, fill opacity=0.2, draw opacity = 0.5] 
	(0,0/2)
	-- (.125,0.89953455464824/2)
	-- (.25,1.09953455464824/2)
	-- (.5,1.27415192312018/2)
	-- (1,1.41979512793847/2)
	-- (1.5,1.47330490434364/2)
	-- (2,1.55977244207752/2)
	-- (2.5,1.62766454980813/2)
	-- (3,1.69297962927133/2)
	-- (3.5,1.69845375192941/2)
	-- (4,1.78620046646474/2)
	-- (4.5,1.80920236155958/2)
	-- (5,1.82727252035333/2)
	-- (5,5.66043713265675/2)
	-- (4.5,5.65781813145049/2)
	-- (4,5.65001710768157/2)
	-- (3.5,5.63524130676181/2)
	-- (3,5.57864794941988/2)
	-- (2.5,5.54144051342800/2)
	-- (2,5.50681010570352/2)
	-- (1.5,5.48017972684609/2)
	-- (1,5.34841034779618/2)
	-- (0.5,5.26831936170429/2)
	-- (0.25,5.16831936170429/2)
	-- (0.15,4.98900660231708/2)
	-- (0,0/2);
	\draw[color=black!30!green, line width = .75pt] (0,0/2) --
	(0.125, 2.54/2)
	-- (0.25,2.9677553110588447/2) -- (.5,3.2712356424122317/2) -- (1,3.384102737867318/2) -- (1.5,3.4767423155948616/2) -- (2,3.533291273890519/2) -- (2.5,3.5845525316180624/2) -- (3,3.6358137893456046/2) -- (3.5,3.666847529345605/2) -- (4,3.718108787073147/2) -- (4.5,3.7335102465050327/2) -- (5,3.7438548265050327/2);
	\draw[color=red, samples=100, domain = 0:.15,line width = .75pt, dash pattern = on 1pt off 1pt] plot (\x,{8.39144115051507*ln(1000*\x+1)/45});
	\draw[color=red, samples=100, domain = .16:5,line width = .75pt, dash pattern = on 1pt off 1pt] plot (\x,{8.39144115051507*ln(1000*\x+1)/45});
	\draw[<->,line width = .75pt] (0,8/2+.5) node[anchor=south] {${\ccal R}_{\nu}^{\rho_1}(\text{$\rho_1$-NPTS}, n)$} -- (0,0/2) node[anchor=north east] {$O$} -- (5.5,0/2) node[anchor=west] {$n$/${10}^3$};
	\foreach \i in {1,2,...,5} {
		\draw[line width = .75pt] (\i,0) -- (\i,-.125) node[anchor=north]{\footnotesize $\i$};
	}
	\foreach \i in {2,4,6,8} {
		\draw[line width = .75pt] (-.125,\i/2) node[anchor=east]{\footnotesize $\i$} -- (0,\i/2) ;
	}
\end{tikzpicture}
\begin{tikzpicture}[>=latex,scale=.7]
	\draw (3,8/2+.5) node[anchor=south west] {Legend:};
	\draw (4,7/2+.1+.5) node[anchor=south west] {$\rho_2$-NPTS};
	\draw[color=black!30!green, line width = .75pt] (3.3,7/2+.25+.1+.5) -- (3.9,7/2+.25+.1+.5);
	\draw (4,6/2+.2+.5) node[anchor=south west] {Lower bound};
	\draw[color=red, line width = .75pt, dash pattern = on 1pt off 1pt] (3.3,6/2+.25+.2+.5) -- (3.9,6/2+.25+.2+.5);

	\filldraw[color=black!30!green, fill opacity=0.2, draw opacity = 0.5] 
	(0,0/2)
	-- (.125, 2.59953455464824/2)
	-- (.25,3.09953455464824/2)
	-- (.5,3.59953455464824/2)
	-- (1,4.03995591522903/2)
	-- (1.5,4.32234177463733/2)
	-- (2,4.47599816945615/2)
	-- (2.5,4.58493488805256/2)
	-- (3,4.66623537019264/2)
	-- (3.5,4.75971019973128/2)
	-- (4,4.88289605363840/2)
	-- (4.5,4.96700077021347/2)
	-- (5,5.12756567890668/2)
	-- (5,6.80206358219021/2)
	-- (4.5,6.71361125981083/2)
	-- (4,6.64844165179697/2)
	-- (3.5,6.62235318111516/2)
	-- (3,6.56655368606486/2)
	-- (2.5,6.39892194591649/2)
	-- (2,6.22562213420714/2)
	-- (1.5,6.06407772583945/2)
	-- (1,5.89821556756028/2)
	-- (0.5,5.70900660231708/2)
	-- (0.25,5.40900660231708/2)
	-- (0.15,5.00900660231708/2)
	-- (.095, 4.59953455464824/2)
	-- (0,0/2);
	\draw[color=black!30!green, line width = .75pt] (0,0) 
	-- (0.125, 3.5/2)
	-- (0.25,4.0677553110588447/2)
	-- (.5,4.5042705784826556/2)
	-- (1,4.969085741394648/2)
	-- (1.5,5.193209750238384/2)
	-- (2,5.350810151831642/2)
	-- (2.5,5.4919284169845195/2)
	-- (3,5.616394528128742/2)
	-- (3.5,5.691031690423209/2)
	-- (4,5.765668852717678/2)
	-- (4.5,5.8403060150121435/2)
	-- (5,5.964814630548434/2);
	\draw[color=red, samples=100, domain = 0:.2,line width = .75pt, dash pattern = on 1pt off 1pt] plot (\x,{6.888009706061486*ln(1000*\x+1)/27});
	\draw[color=red, samples=100, domain = 0.2:5,line width = .75pt, dash pattern = on 1pt off 1pt] plot (\x,{6.888009706061486*ln(1000*\x+1)/27});
	\draw[<->, line width = .75pt] (0,8/2+.5) node[anchor=south] {${\ccal R}_{\nu}^{\rho_2}(\text{$\rho_2$-NPTS}, n)$} -- (0,0/2) node[anchor=north east] {$O$} -- (5.5,0/2) node[anchor=west] {$n$/${10}^3$};
	\foreach \i in {1,2,...,5} {
		\draw[line width = .75pt] (\i,0) -- (\i,-.125) node[anchor=north]{\footnotesize $\i$};
	}
	\foreach \i in {2,4,6,8} {
		\draw[line width = .75pt] (-.125,\i/2) node[anchor=east]{\footnotesize $\i$} -- (0,\i/2) ;
	}
\end{tikzpicture}
\caption{Regrets with risks $\rho_1 = \mathrm{MV}_{0.5} + \mathrm{CVaR}_{0.95}$, $\rho_2 = \mathrm{Prop}_{0.7} + \mathrm{LB}_{0.6}$, and $n = 5,000$ over $50$ experiments.}
\label{fig: rho_1}
\end{figure}

We consider a $3$-arm bandit instance (i.e., $K=3$) with a horizon of $n=5,000$ time steps and over $50$  experiments, where the arms $1,2,3$ follow probability distributions $\mathrm{Beta}(1,3)$, $\mathrm{Beta}(3,3$), $\mathrm{Beta}(3,1)$ respectively. In particular, we have the means of each arm $i$ to equal $i/4$ for $i=1,2,3$. Define the risk functionals $\rho_1 := {\mathrm{MV}}_{0.5} + {\mathrm{CVaR}}_{0.95}$ and $\rho_2 := {\mathrm{Prop}}_{0.7} + {\mathrm{LB}}_{0.6}$ on $({\ccal P}_{\mathrm{c}}^{(B)},D_{\mathrm{L}})$, where we set $(\gamma,\alpha,p,q) = (0.5,0.95,0.7,0.6)$ as the parameters for the mean-variance, CVaR, Proportional risk hazard, and Lookback components respectively (see Table~\ref{table: common_drf}). By Example~\ref{eg: cty_of_linear_combi}, $\rho_j$ for $j=1,2$ are both continuous on $({\ccal P}_{\mathrm{c}}^{(B)},D_{\mathrm{L}})$. In Figure~\ref{fig: rho_1}, we plot the average empirical performance of $\rho_j$ respectively in green, together with their error bars denoting $1$ standard deviation. In both figures, we also plot the theoretical lower bound $\ell_{\rho_j}(n):=\sum_{k=1}^K (  \Delta_k^{\rho_j} \log n)/{\ccal K}_{\inf}^{\rho_j} (\nu_k,r_1^{{\rho_j}})$ (cf.\ Theorem~\ref{thm: rho_lower_bound}) in red and demonstrate that the regrets incurred by $\rho_j$-NPTS are competitive compared to the lower bounds, i.e., ${{\ccal R}_{\nu}^{\rho_j}(\text{$\rho_j$-NPTS},n)} \approx \ell_{\rho_j}(n)$ for $j = 1,2$ and large $n$.
The Java code to reproduce  the plots in Figure \ref{fig: rho_1} can be found at
tinyurl.com/unifyRhoTs.
\section{Conclusion}
\label{conclusion}
We posit the first unifying theory for Thompson sampling algorithms on risk-averse MABs.
We designed two Thompson sampling-based algorithms given any continuous and dominant risk functional, and prove for many of them asymptotic optimality in the case of multinomially and Bernoulli distributed bandits.
We generalise concentration bounds that utilise the continuity and dominance of the risk functional rather than its other properties.
There can be further analysis of $\rho$-NPTS, which we believe also solves the $\rho$-MAB, for $\rho$'s under appropriate conditions, with asymptotic optimality.
Further work can also adapt the techniques in \citet{baudry2021optimality}, who designed asymptotically optimal \vocab{Dirichlet sampling} algorithms for bandits whose rewards are unbounded but satisfy mild light-tailed conditions, to the risk-averse setting.

\section{Acknowledgements}
The authors would like to thank  the   reviewers of AAAI 2022 for their detailed and constructive comments, Qiuyu Zhu for initial discussions, and Emilie Kaufmann and Dorian Baudry for indispensable feedback on an earlier version of the manuscript. This research/project is supported by the National Research Foundation, Singapore under its AI Singapore Programme (AISG Award No: AISG2-RP-2020-018) and Singapore Ministry of Education AcRF Tier 1 grant (R-263-000-E80-114).
\newpage
\bibliographystyle{aaai22}
\bibliography{Distorted_Risk_TS}
\appendix

\onecolumn
\begin{center}
\textbf{\large Supplementary Material}
\end{center}

\section{Well-Definedness of Definition~\ref{def: risk functional}}
We recall that a \vocab{risk functional} is an $\RR$-valued mapping $\rho : {\ccal P} \to \RR$ on ${\ccal P}$. Let ${\frak R}$ denote the collection of risk functionals on ${\ccal P}$. A \vocab{conventional risk functional} $\varrho : {\ccal L}_\infty \to \RR$ is an $\RR$-valued mapping on ${\ccal L}_{\infty}$.
Let $X \sim \mu$ denote the random variable $X$ sampled from the probability measure $\mu$.
A conventional risk functional $\varrho : {\ccal L}_\infty \to \RR$ is said to be \vocab{law-invariant} \citep{huang2021offpolicy} if for any pair of $C$-valued random variables $X_i : (\Omega_i,{\ccal F}_i, \PP_i) \to (C,{\frak B}(C))$ sampled from probability measures $\mu_{i} := \PP_i \circ \inv{X_i} \in {\ccal P}$, $i=1,2$, $$\mu_{1} = \mu_{2} \To \varrho(X_1) = \varrho(X_2).$$
Let ${\frak L}$ denote the set of law-invariant conventional risk functionals. We would like to `translate' between the existing progress worked on law-invariant conventional risk functionals $\varrho : {\ccal L}_{\infty} \to \RR$ on the space of  bounded random variables into progress of risk functionals $\rho : {\ccal P} \to \RR$ on the space of probability measures on $C$. Succinctly, we would like to construct a well-defined bijection from ${\frak R}$ to ${\frak L}$.
Indeed, for any $\rho \in \frak R$, declare $\Phi(\rho) \in \frak L $ by $\Phi(\rho)(X) = \rho(\mu)$, where $X \sim \mu$. This is well defined, since for any $X,X' \sim \mu$, $$\Phi(\rho)(X) = \rho(\mu) = \Phi(\rho)(X').$$
Furthermore, for $\rho = \rho'$ and $X \sim \mu$, $$\Phi(\rho)(X) = \rho(\mu) = \rho'(\mu) = \Phi(\rho')(X) \To \Phi(\rho) = \Phi(\rho').$$
Hence, the mapping $\Phi: \frak R \to \frak L, \rho \mapsto \Phi(\rho)$ is well defined. Similarly define $\Psi(\varrho) \in \frak R$ by $\Psi(\varrho)(\mu) = \varrho(X)$, and the well-defined mapping $\Psi : \frak L \to \frak R, \varrho \mapsto \Psi(\varrho)$. Then for any $\rho \in \frak R$, $\mu \in \ccal P$, and $X \sim \mu$,
$$(\Psi \circ \Phi)(\rho)(\mu) =\Psi(\Phi(\rho))(\mu) = \Phi(\rho)(X) = \rho(\mu),$$
thus $(\Psi \circ \Phi)(\rho) = \rho \To \Psi \circ \Phi = \mathrm{id}_{\frak R}$. Similarly, $\Phi \circ \Psi = \mathrm{id}_{\frak L}$, and we have the required bijection $\Phi : \frak R \to \frak L$.

\section{Proofs of Properties of Continuous Risk Functionals}
We first state and prove some continuity properties of ${\ccal K}_{\inf}^\rho$, beginning with Lemma~\ref{lem: KL_lowersemicty} whose proof we omit for brevity.
\begin{lemma}[\citet{1055416}]\label{lem: KL_lowersemicty}
	The function $\KL(\cdot,\cdot) : {\ccal P}^2 \to \RR$ is lower-semicontinuous in both of its arguments in either setting $(\ccal C,d) = ({\ccal P}_{\ccal S},D_\infty),({\ccal P},D_{\mathrm{L}})$.
\end{lemma}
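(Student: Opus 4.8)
The plan is to prove lower-semicontinuity in both metric settings through the Donsker--Varadhan variational representation of the relative entropy, which writes $\KL$ as a pointwise supremum of functionals that are each \emph{continuous} in the relevant topology; since a pointwise supremum of continuous functions is automatically lower-semicontinuous, this yields the claim directly and, crucially, uniformly over all pairs of measures regardless of absolute continuity.

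First I would invoke the variational formula
$$\KL(\mu,\eta) = \sup_{f}\left\{\rint{C}{}{f}{\mu} - \log\rint{C}{}{e^f}{\eta}\right\},$$
where the supremum ranges over all bounded continuous $f : C \to \RR$. Because $f$ and $e^f$ are bounded and continuous on the compact space $C$, the maps $\mu \mapsto \int_C f\,\mathrm{d}\mu$ and $\eta \mapsto \int_C e^f\,\mathrm{d}\eta$ are continuous in the topology of weak convergence, and $\log$ is continuous and increasing on $(0,\infty)$; hence for each fixed $f$ the functional $(\mu,\eta) \mapsto \int_C f\,\mathrm{d}\mu - \log\int_C e^f\,\mathrm{d}\eta$ is jointly continuous in the weak topology. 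Since $D_{\mathrm{L}}$ metrizes weak convergence on ${\ccal P}$, each such functional is jointly continuous on $({\ccal P},D_{\mathrm{L}})^2$, and therefore their supremum $\KL$ is jointly lower-semicontinuous there. This settles the setting $(\ccal C,d) = ({\ccal P},D_{\mathrm{L}})$, with the $+\infty$ values arising when $\mu \not\ll \eta$ handled automatically, because the variational formula is valid for every pair of measures.

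For the finite-alphabet setting $({\ccal P}_{\ccal S},D_\infty)$, I would transport the problem to the simplex via the bilipschitz embedding ${\frak D}_{\ccal S}$: $D_\infty$-convergence on ${\ccal P}_{\ccal S}$ is equivalent to coordinatewise convergence of the probability vectors in $\Delta^M$, so it suffices to show that $(p,q) \mapsto \sum_{i \in {[M]}_0} p_i \log(p_i/q_i)$ is jointly lower-semicontinuous on $\Delta^M \times \Delta^M$, under the usual conventions $0\log 0 = 0$ and $p\log(p/0) = +\infty$ for $p>0$. Each summand is continuous on the interior and jumps up to $+\infty$ only on the relevant boundary faces, hence is lower-semicontinuous, and a finite sum of lower-semicontinuous functions is lower-semicontinuous; alternatively one may again specialise the variational representation, whose test functions collapse to finite vectors on a finite alphabet.

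The hard part will be the behaviour of $\KL$ at pairs where absolute continuity fails, i.e.\ where $\KL = +\infty$: a naive argument differentiating $\log(\mathrm{d}\mu/\mathrm{d}\eta)$ has no meaning there, and one must still verify that the value cannot drop below the $\liminf$ along an approaching sequence. The variational formula is exactly what circumvents this difficulty, since it is defined uniformly for all pairs and never requires the density $\mathrm{d}\mu/\mathrm{d}\eta$ to exist; this is why I would route both settings through it rather than attempt a direct density computation.
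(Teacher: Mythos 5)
Your proposal is correct, but there is nothing in the paper to compare it against line-by-line: the paper states this lemma with an attribution to \citet{1055416} and explicitly omits the proof (``whose proof we omit for brevity''), so your Donsker--Varadhan argument is a genuine, self-contained alternative to the paper's appeal to the literature. Your core argument is sound: the representation $\KL(\mu,\eta)=\sup_f\left\{\rint{C}{}{f}{\mu}-\log\rint{C}{}{e^f}{\eta}\right\}$ over bounded continuous $f$ holds for \emph{every} pair of measures (both sides are $+\infty$ when $\mu\not\ll\eta$), each functional inside the supremum is jointly continuous for weak convergence because $e^f$ is bounded continuous and bounded below by $e^{-\|f\|_\infty}>0$, and a pointwise supremum of continuous functions is lower-semicontinuous; since $D_{\mathrm{L}}$ metrizes weak convergence, this settles $({\ccal P},D_{\mathrm{L}})$. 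Two remarks. First, the finite-alphabet case is an immediate corollary of this rather than a separate computation: $D_\infty$-convergence implies weak convergence, and lower-semicontinuity with respect to a coarser topology is inherited by any finer one and by restriction to the subspace ${\ccal P}_{\ccal S}$, so your simplex argument, while fine, is not needed. Second, within that simplex argument, the justification ``continuous on the interior, jumps up to $+\infty$ on the boundary'' is loose at the corner $(p_i,q_i)=(0,0)$: the value there is $0$ by convention, yet nearby values $p\log(p/q)$ can be \emph{negative}, as low as $-q/e$; lower-semicontinuity survives only because $-q/e\to 0$, i.e., because $(x,y)\mapsto x\log(x/y)$ is jointly convex and lower-semicontinuous on $[0,\infty)^2$, and that one-line verification should be made explicit. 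On the comparison: your route buys a self-contained proof that handles the $+\infty$ values and the failure of absolute continuity cleanly and uniformly across both metrics, whereas the paper's citation buys brevity at the cost of opacity.
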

\begin{lemma}\label{lem: lower-semicty_KLinf_first_arg}
	For any fixed $r$, the mapping ${\ccal K}_{\inf}^\rho(\cdot,r) : \ccal P \to \RR$ is lower-semicontinuous.
\end{lemma}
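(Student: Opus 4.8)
The plan is to recognise $\ccal K_{\inf}^\rho(\cdot, r)$ as the pointwise infimum, over a \emph{fixed} constraint set, of the jointly lower-semicontinuous map $\KL$, and then to invoke the general principle that the infimum of a jointly lower-semicontinuous function over a \emph{compact} set is lower-semicontinuous in the remaining variable. Concretely, for fixed $r$ I would write
$$A_r := \{\eta \in \ccal C : \rho(\eta) \geq r\}, \qquad \ccal K_{\inf}^\rho(\mu, r) = \inf_{\eta \in A_r} \KL(\mu, \eta),$$
and emphasise the crucial fact that $A_r$ does \emph{not} depend on $\mu$.

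First I would establish that $A_r$ is compact. Since $\rho$ is continuous on the compact metric space $(\ccal C, d)$, the set $A_r = \rho^{-1}([r,\infty))$ is the preimage of a closed set, hence closed in $\ccal C$; and a closed subset of a compact space is compact. If $A_r = \emptyset$, then under the usual convention $\ccal K_{\inf}^\rho(\cdot, r) \equiv +\infty$, which is trivially lower-semicontinuous, so I may assume $A_r \neq \emptyset$ henceforth.

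Next I would verify lower-semicontinuity sequentially. Fix $\mu \in \ccal P$ and let $\mu_n \to \mu$; the goal is $\liminf_{n} \ccal K_{\inf}^\rho(\mu_n, r) \geq \ccal K_{\inf}^\rho(\mu, r)$. Passing to a subsequence I may assume $\ccal K_{\inf}^\rho(\mu_n, r)$ converges to this $\liminf$. For each $n$ I choose a near-minimiser $\eta_n \in A_r$ with $\KL(\mu_n, \eta_n) \leq \ccal K_{\inf}^\rho(\mu_n, r) + 1/n$, and by compactness of $A_r$ extract a further subsequence along which $\eta_n \to \eta^\ast \in A_r$. Applying the joint lower-semicontinuity of $\KL$ (Lemma~\ref{lem: KL_lowersemicty}) to the pair $(\mu_n, \eta_n) \to (\mu, \eta^\ast)$ gives
$$\KL(\mu, \eta^\ast) \leq \liminf_{n} \KL(\mu_n, \eta_n) \leq \liminf_{n} \ccal K_{\inf}^\rho(\mu_n, r).$$
Since $\eta^\ast \in A_r$, the left-hand side is at least $\ccal K_{\inf}^\rho(\mu, r)$, and chaining the inequalities yields the claim.

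The main obstacle — and the reason the statement is not purely formal — is that the infimum of lower-semicontinuous functions need \emph{not} be lower-semicontinuous in general; compactness of the index set $A_r$ is exactly what rescues the argument, since it guarantees that the near-minimisers $\eta_n$ subconverge to a point $\eta^\ast$ that still lies in $A_r$. This in turn rests on $A_r$ being closed, which is where the continuity of $\rho$ enters (upper-semicontinuity of $\rho$ would already suffice). I would also take care to package the two subsequence extractions so that the final inequality is against the original $\liminf$ rather than a sub-subsequential limit; this is routine once the initial reduction to a convergent subsequence is made.
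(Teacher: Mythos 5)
Your proof is correct, but it takes a genuinely different route from the paper's. The paper argues topologically using only the \emph{separate} lower-semicontinuity of $\KL(\cdot,\eta)$ for each fixed feasible $\eta$: it picks, for every $\eta$ with $\rho(\eta)\geq r$, a neighbourhood $V_\eta\ni\mu$ on which $y<\KL(\cdot,\eta)-\eeps/2$, sets $V=\bigcup_\eta V_\eta$, and then, for $\varphi\in V$, ``takes the infimum over $\eta_*$'' to conclude $y\leq{\ccal K}_{\inf}^\rho(\varphi,r)-\eeps/2$. That last step is exactly the pitfall you flag as the main obstacle: for a given $\varphi\in V$ the inequality $y<\KL(\varphi,\eta_*)-\eeps/2$ is only known for those $\eta_*$ whose neighbourhood $V_{\eta_*}$ happens to contain $\varphi$, not for \emph{all} feasible $\eta$, and a pointwise infimum of infinitely many lower-semicontinuous functions need not be lower-semicontinuous. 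Your argument supplies precisely the ingredients that make the statement true: closedness (hence compactness) of the constraint set $A_r=\{\eta\in{\ccal C}:\rho(\eta)\geq r\}$ --- which is where the continuity (indeed only upper-semicontinuity) of $\rho$ enters, an assumption the paper's proof never invokes --- together with the \emph{joint} lower-semicontinuity of $\KL$ (available in the paper via Example~2 and the citation behind Lemma~\ref{lem: KL_lowersemicty}), applied along near-minimisers $\eta_n$ that subconverge inside $A_r$. Your sequential formulation is legitimate since $({\ccal P},d)$ is a metric space, and your handling of the two subsequence extractions is sound. In short: the paper's neighbourhood argument, as written, glosses over the very obstruction you identify, and your compactness-plus-joint-lower-semicontinuity argument is the natural and rigorous way to close that gap.
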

\begin{proof}[Proof of Lemma~\ref{lem: lower-semicty_KLinf_first_arg}]
	Fix $r \in \RR$ and $y < {\ccal K}_{\inf}^\rho(\mu,r)$. By the definition of lower-semicontinuity, we want to find an open set $V \ni \mu$ such that for any $\varphi \in V$,
	$$y < {\ccal K}_{\inf}^\rho(\varphi,r).$$
	Fix $\eeps > 0$ such that $y < {\ccal K}_{\inf}^\rho(\mu,r) - \eeps$ and $\eta \in \ccal P$ with the property $\rho(\eta) \geq r$. By unraveling the definition of ${\ccal K}_{\inf}^\rho(\mu,r)$,
	$$y \leq \KL(\mu,\eta) - \eeps < \KL(\mu,\eta) - \frac{\eeps}{2}.$$
	By Lemma~\ref{lem: KL_lowersemicty}, $\KL(\cdot,\eta)$ is lower-semicontinuous at $\mu \in \ccal P$. Hence, there exists an open set $V_\eta \ni \mu$ such that for any $\gamma \in V_\eta$,
	$$y<\KL(\gamma,\eta)-\frac{\eeps}{2}.$$
	Define the open set $V = \bigcup_{\eta \in \ccal P}V_{\eta}$ and fix $\varphi \in V$. We have $\varphi \in V_{\eta_*}$ for some $\eta_* \in \ccal P$, and
	$$y < \KL(\varphi, {\eta_*}) - \frac{\eeps}{2}.$$
	Taking infimum over $\eta_* \in \ccal P$, $\rho(\eta_*) \geq r$,
	$$y \leq \inf_{\eta_* \in \ccal P}\{ \KL(\varphi, {\eta_*}) : \rho(\eta_*) \geq r\}- \frac{\eeps}{2} = {\ccal K}_{\inf}^\rho(\varphi,r)- \frac{\eeps}{2} < {\ccal K}_{\inf}^\rho(\varphi,r).$$
\end{proof}
\begin{lemma}\label{lem: lower-semicty_KLinf_second_arg}
	For any fixed $\mu$, the mapping $\KL_\mu^*(r ) = \inf_{\rho(\eta)= r } \KL(\mu,\eta)$ is lower-semicontinuous in $r$.
\end{lemma}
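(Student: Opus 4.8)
The plan is to run the standard ``direct method'' argument for lower-semicontinuity of a constrained value function, exploiting the compactness of the underlying domain ($\ccal C$, or $\ccal P$ under $D_{\mathrm{L}}$), the lower-semicontinuity of $\KL(\mu,\cdot)$ supplied by Lemma~\ref{lem: KL_lowersemicty}, and---crucially---the full continuity of $\rho$.

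First I would fix $r_0$ and a sequence $r_n \to r_0$ realizing $L := \liminf_{r \to r_0} \KL_\mu^*(r)$, so that $\KL_\mu^*(r_n) \to L$; the goal is then $\KL_\mu^*(r_0) \leq L$. If $L = +\infty$ the conclusion is immediate, so I would assume $L < \infty$, whence $\KL_\mu^*(r_n) < \infty$ for all large $n$ and in particular the constraint set $\{\eta : \rho(\eta) = r_n\}$ is nonempty. For each such $n$ I would select a near-minimizer $\eta_n$ with $\rho(\eta_n) = r_n$ and $\KL(\mu,\eta_n) \leq \KL_\mu^*(r_n) + 1/n$. (Alternatively, since $\{\rho = r_n\}$ is closed, hence compact, and $\KL(\mu,\cdot)$ is lower-semicontinuous, the infimum is attained; the near-minimizer formulation avoids having to invoke this.)

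Next, using compactness of the domain, I would extract a convergent subsequence $\eta_{n_k} \to \eta_*$. Here the continuity of $\rho$ does the essential work: $\rho(\eta_*) = \lim_k \rho(\eta_{n_k}) = \lim_k r_{n_k} = r_0$, so $\eta_*$ is \emph{feasible} for the problem defining $\KL_\mu^*(r_0)$. Invoking the lower-semicontinuity of $\KL(\mu,\cdot)$ along $\eta_{n_k} \to \eta_*$ then yields
\[
\KL_\mu^*(r_0) \leq \KL(\mu,\eta_*) \leq \liminf_k \KL(\mu,\eta_{n_k}) \leq \liminf_k \big(\KL_\mu^*(r_{n_k}) + 1/n_k\big) = L,
\]
which is precisely lower-semicontinuity at $r_0$.

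The argument is largely routine, and the two points needing care are exactly where the hypotheses are used. The empty-constraint case (where no $\eta$ has $\rho(\eta)=r$) is disposed of by the reduction to $L<\infty$, since empty infima equal $+\infty$ and only push the $\liminf$ upward. The genuinely essential ingredient is the preservation of the \emph{equality} constraint under passage to the limit, $\rho(\eta_*)=r_0$: this is where the full continuity of $\rho$---rather than mere lower- or upper-semicontinuity---is indispensable, since a one-sided semicontinuity of $\rho$ would in general fail to keep $\eta_*$ on the level set $\{\rho=r_0\}$.
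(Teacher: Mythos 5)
Your proof is correct, but it takes a genuinely different route from the paper's. The paper disposes of this lemma in one line by invoking the contraction principle of large deviations theory \citep{dembo2009large}: $\KL_\mu^*$ is recognised as the rate function of the pushforward, under the continuous map $\rho$, of a sequence of random measures satisfying an LDP with good rate function $\KL(\mu,\cdot)$, and rate functions produced by the contraction principle are lower-semicontinuous by construction. Your argument is, in effect, a self-contained unpacking of the proof of that principle specialised to the present setting: compactness of the ambient space $({\ccal P},D_{\mathrm{L}})$ or $({\ccal P}_{\ccal S},D_\infty)$ plays the role of goodness of the rate function, Lemma~\ref{lem: KL_lowersemicty} supplies the lower-semicontinuity of $\KL(\mu,\cdot)$, and continuity of $\rho$ preserves the equality constraint in the limit. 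What your version buys is transparency and self-containedness: it uses only ingredients the paper has already established, it avoids having to exhibit a ``suitably chosen sequence of measures'' whose LDP rate function is $\KL(\mu,\cdot)$ (a nontrivial step, since Sanov's theorem gives the rate $\KL(\cdot,\mu)$ with the arguments reversed, so one must appeal to posterior-type LDPs), and it isolates exactly where continuity of $\rho$ is indispensable---a hypothesis that is implicit in the paper's one-liner, because the contraction principle requires a continuous map, but is never surfaced there. One small point applies to both proofs: the lemma's statement does not announce continuity of $\rho$, yet both arguments need it; this is consistent with the lemma's role inside Proposition~\ref{cor: continuity_of_KL}, which does assume $\rho$ continuous, but it deserves to be flagged as a standing hypothesis.
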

\begin{proof}[Proof of Lemma~\ref{lem: lower-semicty_KLinf_second_arg}]
	By \citet[Contraction Principle]{dembo2009large}, $\KL_\mu^*$ is a rate function of a suitably chosen sequence of measures that satisfies a large deviations principle (LDP), and is thus lower semicontinuous.
\end{proof}
\begin{lemma}\label{lem: lower-semicty_KLinf}
	Let $K \subset \RR$ be a compact set. Let $h : K \to \RR$ be a lower-semicontinuous function. Then the function $g : K \to \RR$ defined by
	$g(r) = \inf_{r' \in [r,\infty) \cap K} h(r')$ is lower-semicontinuous.
\end{lemma}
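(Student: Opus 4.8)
The plan is to exploit the fact that on a compact set the sublevel sets of a lower-semicontinuous function are themselves compact, and then to push the only possible obstruction --- namely, ``$h$ takes a small value somewhere to the right of $r$'' --- away from any point where $g$ is large. First I would record two elementary preliminaries. The monotonicity $g(r_1) \le g(r_2)$ whenever $r_1 \le r_2$ follows since $[r_2,\infty)\cap K \subseteq [r_1,\infty)\cap K$ forces the infimum over the smaller set to dominate. Moreover $[r,\infty)\cap K$ is closed, hence a compact subset of $K$, for every $r \in K$, so the infimum defining $g(r)$ is taken over a nonempty compact set and is attained. Neither observation is the crux, but both make the subsequent neighbourhood argument clean.

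To verify lower-semicontinuity at a fixed $r_0 \in K$, I would fix $y < g(r_0)$ and seek an open neighbourhood $V \ni r_0$ in $K$ on which $g > y$. The key trick is to insert an intermediate threshold $y'$ with $y < y' < g(r_0)$ and to consider the sublevel set $A' := \{r' \in K : h(r') \le y'\}$. Since $h$ is lower-semicontinuous, $A'$ is closed in $K$, hence compact. Because $g(r_0) = \inf_{r' \in [r_0,\infty)\cap K} h(r') > y'$, we have $h(r') > y'$ for every $r' \in [r_0,\infty)\cap K$, so no such point lies in $A'$; that is, $A' \subseteq (-\infty, r_0)\cap K$.

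Now I would use compactness to locate the neighbourhood. If $A' = \emptyset$, then $h > y'$ on all of $K$, whence $g \ge y' > y$ everywhere and $V = K$ suffices. Otherwise $a' := \sup A'$ is attained in the compact set $A'$, so $a' < r_0$, and I take $V := (a',\infty)\cap K$, an open neighbourhood of $r_0$ in $K$. For any $r \in V$ and any $r' \in [r,\infty)\cap K$ we have $r' \ge r > a'$, so $r' \notin A'$ and therefore $h(r') > y'$; taking the infimum over all such $r'$ yields $g(r) \ge y' > y$. This establishes lower-semicontinuity at $r_0$, and since $r_0$ was arbitrary, on all of $K$.

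The main obstacle I anticipate is bookkeeping rather than depth: securing the \emph{strict} inequality $g(r) > y$. An infimum of values each exceeding $y$ only yields $\ge y$ in general, which is precisely why the strict separation through the intermediate threshold $y'$ is indispensable --- it converts the open condition $h > y'$ into the closed-sublevel estimate $g \ge y'$ and buys back the strict gap. The one further point requiring care is confirming that $\sup A'$ is genuinely attained, so that $a' < r_0$ strictly; this is exactly where compactness of the sublevel set, and hence compactness of $K$, is used.
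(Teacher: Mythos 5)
Your proof is correct, but it takes a genuinely different route from the paper's. The paper proves lower-semicontinuity via the closed-sublevel-set characterisation applied to $g$: it takes a sequence $(s_n)$ in $\{g \leq a\}$ converging to $s$, uses the Extreme Value Theorem to pick minimisers $t_n \in [s_n,\infty)\cap K$ with $g(s_n) = h(t_n)$, extracts a convergent subsequence $t_{n_k} \to t \geq s$, and invokes lower-semicontinuity of $h$ at $t$ to conclude $g(s) \leq h(t) \leq a$. You instead work with the pointwise neighbourhood characterisation of lower-semicontinuity and with sublevel sets of $h$ rather than $g$: the strict threshold $y'$ forces the compact set $A' = \{h \leq y'\}$ to lie strictly to the left of $r_0$, its supremum $a'$ is attained and hence $a' < r_0$, and $(a',\infty)\cap K$ is the required neighbourhood. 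The two arguments lean on compactness in different places: the paper needs both attainment of the infimum defining $g$ (EVT for lower-semicontinuous functions) and sequential compactness to extract the subsequence of minimisers, whereas you never need minimisers of $h$ at all --- only that a closed subset of $K$ is compact and that a nonempty compact subset of $\RR$ contains its supremum. Your insertion of the intermediate threshold $y'$ is exactly the right device to recover the strict inequality $g > y$ from an infimum, and your identification of attainment of $\sup A'$ as the crux (ruling out $a' = r_0$) is where the argument would break without compactness; both proofs are equally elementary, with yours arguably exposing more directly \emph{why} the claim can fail on non-compact domains.
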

\begin{proof}[Proof of Lemma~\ref{lem: lower-semicty_KLinf}]
	Fix $a \in \RR$. It suffices to prove that $A := \{x \in K : g(x) \leq a\}$ is closed. Let $(s_n)$ be a sequence in $A$ such that $s_n \to s$ for some $s \in K$. By the Extreme Value Theorem, there exists $t_n \in [s_n,\infty) \cap K$ such that $g(s_n) = h(t_n)$. Since $[r,\infty) \cap K$ is compact, there exists a convergent subsequence $(t_{n_k})$ of $(t_n)$ such that $t_{n_k} \to t$ for some $t \in K$. Since $h$ is lower-semicontinuous and $h(t_{n_k}) =g(s_{n_k}) \leq a$, we have $h(t) \leq a$.  Since $t_{n_k} \to t$ and $t_{n_k} \geq s_{n_k}$ for each $k$, $t = \lim_{k \to \infty} t_{n_k} \geq \lim_{k \to \infty} s_{n_k} = s$. By the definition of $g$, $g(s) \leq h(r')$ for any $r' \geq s$. In particular, $g(s) \leq h(t) \leq a$. Thus, $s \in A$, and $A$ is closed, as required.
\end{proof}
\begin{proof}[Proof of Proposition~\ref{cor: continuity_of_KL}]
	By Lemma~\ref{lem: lower-semicty_KLinf_first_arg} ${\ccal K}_{\inf}^{\rho}$ is lower-semicontinuous on its first argument. By Lemma~\ref{lem: lower-semicty_KLinf_second_arg}, $\KL_\mu^*( r) = \inf_{\rho(\eta)=r} \KL(\mu,\eta)$ is lower-semicontinuous in $r$. By Lemma~\ref{lem: lower-semicty_KLinf}, $\ccal K_{\inf}^\rho(\mu, r) = \inf_{r' \in [r ,\infty) \cap \rho(\ccal C)} \KL_\mu^*(r')$ is lower-semicontinuous in $r$ on the compact set $\rho(\ccal C)\subset \RR$.
\end{proof}
\begin{proposition}\label{prop: continuity_props}
	Let $\rho : \ccal P \to \RR$ be a risk functional continuous on $({\ccal P}_{\ccal S},D_\infty)$ for some finite alphabet $\ccal S$. The mapping ${\ccal K}_{\inf}^{\rho_{\ccal S}}$ is non-decreasing in its second argument. Furthermore, ${\ccal K}_{\inf}^{\rho_{\ccal S}}({\frak D}_{\ccal S}(p),\cdot)$ is strictly increasing on $(\sigma_{\rho,\ccal S}(p),s_M]$.
\end{proposition}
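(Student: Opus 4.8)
The plan is to treat the two assertions separately. Non-decreasingness is essentially definitional: fixing $\mu = {\frak D}_{\ccal S}(p)$, for $r \le r'$ the feasibility sets are nested, $\{\eta \in {\ccal P}_{\ccal S} : \rho(\eta) \geq r'\} \subseteq \{\eta \in {\ccal P}_{\ccal S} : \rho(\eta) \geq r\}$, so ${\ccal K}_{\inf}^{\rho_{\ccal S}}(\mu,r)$, being the infimum of the fixed map $\KL(\mu,\cdot)$ over the larger set, cannot exceed ${\ccal K}_{\inf}^{\rho_{\ccal S}}(\mu,r')$; this needs no continuity and holds for every $\mu$. The strict increase on $(\sigma_{\rho,\ccal S}(p), s_M]$ will rest on an interpolation argument exploiting strict convexity of $\KL(\mu,\cdot)$.

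Write $\rho(\mu) = \sigma_{\rho,\ccal S}(p)$ and fix $\rho(\mu) < r_1 < r_2 \le s_M$; given monotonicity it suffices to rule out equality. First I would secure a minimizer for the $r_2$-problem: the set $C_{r_2} := \{\eta \in {\ccal P}_{\ccal S} : \rho(\eta) \geq r_2\}$ is closed by continuity of $\rho$ and hence compact (inside the compact ${\ccal P}_{\ccal S}$), and it is nonempty because $\delta_{s_M}$ realizes the maximal risk value $s_M \geq r_2$. Since $\KL(\mu,\cdot)$ is lower-semicontinuous (Lemma~\ref{lem: KL_lowersemicty}), the infimum is attained at some $\eta_2 \in C_{r_2}$ with $\KL(\mu, \eta_2) = {\ccal K}_{\inf}^{\rho_{\ccal S}}(\mu, r_2)$.

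The core is then an interpolation along the segment $\eta_\lambda := (1-\lambda)\mu + \lambda \eta_2$, $\lambda \in [0,1]$. Since $\rho(\eta_0) = \rho(\mu) < r_1 < r_2 \le \rho(\eta_2) = \rho(\eta_1)$ and $\lambda \mapsto \rho(\eta_\lambda)$ is continuous, the intermediate value theorem produces $\lambda^* \in (0,1)$ with $\rho(\eta_{\lambda^*}) = r_1$; in particular $\eta_{\lambda^*}$ is feasible for $r_1$, so ${\ccal K}_{\inf}^{\rho_{\ccal S}}(\mu, r_1) \le \KL(\mu, \eta_{\lambda^*})$. Writing $g(\lambda) := \KL(\mu, \eta_\lambda)$ and letting $\mu_i, \eta_{2,i}, \eta_{\lambda,i}$ denote the masses at $s_i$, a direct computation gives $g''(\lambda) = \sum_{i : \mu_i > 0} \mu_i (\eta_{2,i} - \mu_i)^2 / \eta_{\lambda,i}^2 > 0$, the positivity holding because $\rho(\eta_2) > \rho(\mu)$ forces $\eta_2 \neq \mu$ and hence some coordinate $i$ with $\mu_i > 0$ and $\eta_{2,i} \neq \mu_i$ (if they agreed on all of $\mathrm{supp}(\mu)$, mass conservation would force $\eta_2 = \mu$). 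Thus $g$ is strictly convex; as $\KL \geq 0$ with $g(0) = \KL(\mu,\mu) = 0$, its minimum sits at $\lambda = 0$, so $g$ is strictly increasing on $[0,1]$. Since $\lambda^* < 1$, this yields $\KL(\mu, \eta_{\lambda^*}) = g(\lambda^*) < g(1) = {\ccal K}_{\inf}^{\rho_{\ccal S}}(\mu, r_2)$, and combining with the previous inequality gives ${\ccal K}_{\inf}^{\rho_{\ccal S}}(\mu, r_1) < {\ccal K}_{\inf}^{\rho_{\ccal S}}(\mu, r_2)$, the desired strict gap.

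The step I expect to be the main obstacle is the behaviour at, and feasibility up to, the endpoint $r = s_M$: there the minimizer may be pinned to the boundary of the simplex, e.g.\ $\eta_2 = \delta_{s_M}$, where $\KL(\mu, \eta_2) = +\infty$ and the finite computation above degenerates. I would dispatch this by noting that for $r_1 < s_M$ the competitor $(1-\eeps)\mu + \eeps\,\delta_{s_M}$ has finite KL and risk tending to $s_M$ as $\eeps \to 1$, so ${\ccal K}_{\inf}^{\rho_{\ccal S}}(\mu, r_1) < \infty$; the claim at the endpoint then reads ${\ccal K}_{\inf}^{\rho_{\ccal S}}(\mu, r_1) < {\ccal K}_{\inf}^{\rho_{\ccal S}}(\mu, s_M) = +\infty$, which is consistent with the lower-semicontinuity already recorded in Proposition~\ref{cor: continuity_of_KL}. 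A minor point is that $g$ is genuinely differentiable along the open segment, and this is automatic since $\eta_{\lambda,i} \geq (1-\lambda)\mu_i > 0$ on $\mathrm{supp}(\mu)$ for $\lambda < 1$.
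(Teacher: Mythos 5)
Your proof is correct in substance and takes a genuinely more self-contained route than the paper's. The paper reduces strict monotonicity to the claim that the constraint is binding at any optimum, and proves that claim by contradiction: if a minimizer $p_r^*$ had slack $\sigma_{\rho,\ccal S}(p_r^*)=r+\delta$, continuity would produce a neighbourhood of still-feasible points on which $\KL({\frak D}_{\ccal S}(p),\cdot)$ is no smaller than at the optimum, and the final contradiction (that moving from $p_r^*$ towards $p$ strictly decreases the divergence) is delegated to the arguments in \citet{baudry2020thompson}. You run the same geometric mechanism globally instead of locally: interpolate from $\mu$ to a minimizer $\eta_2$ of the $r_2$-problem, locate the level $r_1$ by the intermediate value theorem, and extract the strict gap from the explicit computation $g''(\lambda)=\sum_{i:\mu_i>0}\mu_i(\eta_{2,i}-\mu_i)^2/\eta_{\lambda,i}^2>0$. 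This buys a direct proof of the strict inequality, with attainment, differentiability, and the infinite-KL degeneracies made explicit---none of which the paper spells out. Two touch-ups are needed. First, ``strictly convex with minimum at $0$, hence strictly increasing on $[0,1]$'' is only justified on $[0,1)$, where $g$ is finite; at the right endpoint use convexity of $\KL(\mu,\cdot)$ on the closed segment: $g(\lambda^*)\leq\lambda^* g(1)$, and since $g(\lambda^*)>0$ and $\lambda^*<1$ this gives $g(\lambda^*)<g(1)$ whether or not $g(1)$ is finite. Second, your endpoint case split should be on whether ${\ccal K}_{\inf}^{\rho_{\ccal S}}(\mu,r_2)$ is finite, not on whether $r_2=s_M$: for $\rho=\mathrm{CVaR}_\alpha$ the feasible set at level $s_M$ contains full-support measures, so ${\ccal K}_{\inf}^{\rho_{\ccal S}}(\mu,s_M)$ need not be $+\infty$; when it is finite your main argument applies verbatim, and when it is $+\infty$ your finiteness bound at $r_1$ closes the case.

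The one step that is a genuine gap relative to the stated hypotheses is the nonemptiness of $C_{r_2}$ via ``$\delta_{s_M}$ realizes the maximal risk value $s_M$''. Continuity of $\rho$ alone does not give $\rho(\delta_{s_M})=s_M$: for the negative-variance or mean-variance EDPMs of Table~\ref{table: common_EDPM} one has $\rho(\delta_{s_M})\neq s_M$ in general. That identity is a property of distorted-type functionals (it follows from Proposition~\ref{prop: drf_formula}, since $g(1)=1$). In fairness, the proposition as stated already presupposes such a range property: if $\sup_{\eta\in{\ccal P}_{\ccal S}}\rho(\eta)<s_M$, the feasible sets are empty near $s_M$, ${\ccal K}_{\inf}^{\rho_{\ccal S}}({\frak D}_{\ccal S}(p),\cdot)$ is identically $+\infty$ there, and strict increase fails; the paper's own proof likewise assumes minimizers exist without comment. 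So you should state this as a standing assumption (e.g., $\sup_{\eta\in{\ccal P}_{\ccal S}}\rho(\eta)\geq s_M$) rather than derive it from continuity; with that proviso, the rest of your argument goes through.
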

\begin{proof}[Proof of Proposition~\ref{prop: continuity_props}]
Fix $p \in \Delta^M$. Then for any $r \leq s$,
$${\ccal T}_{\rho,\ccal S}(r) \supseteq {\ccal T}_{\rho,\ccal S}(s) \quad \To \quad {\ccal K}_{\inf}^{\rho_{\ccal S}}({\frak D}_{\ccal S}(p),r) \leq {\ccal K}_{\inf}^{\rho_{\ccal S}}({\frak D}_{\ccal S}(p),s),$$
and ${\ccal K}_{\inf}^{\rho_{\ccal S}}$ is non-decreasing in its second argument. To prove that  ${\ccal K}_{\inf}^{\rho_{\ccal S}}({\frak D}_{\ccal S}(p),\cdot)$ is strictly increasing on $(\sigma_{\rho,\ccal S}(p),s_M]$, we follow the argument from \citet{baudry2020thompson}. It suffices to prove that the constraints are binding in the optimum. Suppose otherwise, that there exists $r \in (\sigma_{\ccal S,\rho}(p),s_M)$ and $p_r^* \in \Delta^M$ such that
$${\ccal K}_{\inf}^{\rho_{\ccal S}}({\frak D}_{\ccal S}(p),r)= \KL({\frak D}_{\ccal S}(p),{\frak D}_{\ccal S}(p_r^*)) \quad \mbox{and} \quad \exists \delta > 0 : \sigma_{\rho,\ccal S}(p_r^*) = r+\delta.$$
By the continuity of $\sigma_{\rho,\ccal S}$, there exists $\eeps > 0$ such that for any $q \in \Delta^M$ with $d_\infty(q,p) < \eeps$,
$$|\sigma_{\rho,\ccal S}(p_r^*)-\sigma_{\rho,\ccal S}(p_r^*)|<\delta/2 \quad \To \quad \sigma_{\rho,\ccal S}(p_r^*) > r+\delta/2 > r$$
and
$$\KL({\frak D}_{\ccal S}(p),{\frak D}_{\ccal S}(q))\geq \KL({\frak D}_{\ccal S}(p),{\frak D}_{\ccal S}(p_r^*)).$$
By the arguments in \citet{baudry2020thompson}, this implies a contradiction.
\end{proof}
We next prove some of the bounds which are essential in the analysis of the proof of Theorem~\ref{thm: rho_mts_upper_bound}.
\begin{proposition}\label{prop: continuity_rho}
	Let $\rho : \ccal P \to \RR$ be a risk functional continuous on $({\ccal P}_{\ccal S},D_\infty)$ for some finite alphabet $\ccal S$. Fix $\alpha \in \NN^{M+1}, n = \sum_{i=0}^M \alpha_i$. Then there exists $N \in \NN$ such that for $n \geq N$,
	$$\parenl{\sigma_{\rho,\ccal S}\paren{\frac{\alpha}{n+M+1}}-\sigma_{\rho,\ccal S}\paren{\frac{\alpha-1}{n}}} < \eeps.$$
\end{proposition}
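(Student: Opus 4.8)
The plan is to reduce the claim to the uniform continuity of $\sigma_{\rho,\ccal S}$ on the compact simplex $\Delta^M$, after observing that the two arguments both collapse onto the common point $p = \alpha/n$ at rate $\ccal O(1/n)$. First I would record the two elementary $d_\infty$-estimates. Writing $p = \alpha/n$, a direct componentwise computation gives
\begin{equation*}
\parenl{\frac{\alpha_i}{n+M+1}-\frac{\alpha_i}{n}} = \frac{\alpha_i}{n}\cdot\frac{M+1}{n+M+1} \leq \frac{M+1}{n}, \qquad \parenl{\frac{\alpha_i-1}{n}-\frac{\alpha_i}{n}} = \frac{1}{n},
\end{equation*}
so that $d_\infty\paren{\alpha/(n+M+1),\,p} \leq (M+1)/n$ and $d_\infty\paren{(\alpha-1)/n,\,p} = 1/n$. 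The essential point is that these bounds are \emph{uniform} over every $\alpha \in \NN^{M+1}$ with $\sum_i \alpha_i = n$, since $\alpha_i/n \leq 1$ regardless of $\alpha$; this is what makes the quantifier ``for $n \geq N$'' meaningful. By the triangle inequality the two arguments then lie within $(M+2)/n$ of one another in $d_\infty$.

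Next I would invoke uniform continuity. Because $\rho$ is continuous on $({\ccal P}_{\ccal S},D_\infty)$, the map $\sigma_{\rho,\ccal S} = \rho\circ{\frak D}_{\ccal S}$ is continuous on $\Delta^M$ (as noted in the discussion following Definition~\ref{def: continuous_risk_functional}); since $\Delta^M$ is compact, $\sigma_{\rho,\ccal S}$ is in fact uniformly continuous there. Thus, given $\eeps>0$, there is a single $\delta>0$, independent of the base point, with $d_\infty(q,q')<\delta \To \parenl{\sigma_{\rho,\ccal S}(q)-\sigma_{\rho,\ccal S}(q')}<\eeps$. Choosing $N$ so that $(M+2)/N<\delta$ then yields the claim for all $n \geq N$, uniformly in $\alpha$. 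It is precisely the uniformity over the base point $p=\alpha/n$, which ranges over all of $\Delta^M$ as $\alpha$ varies, that forces the use of uniform rather than merely pointwise continuity, and compactness of $\Delta^M$ is what supplies it.

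The one technical wrinkle, and the step I expect to require the most care, is that the two arguments $\alpha/(n+M+1)$ and $(\alpha-1)/n$ sum to strictly less than $1$ and so do not literally lie in $\Delta^M$, the domain of $\sigma_{\rho,\ccal S}$. I would resolve this by interpreting each argument through its normalization onto $\Delta^M$: the vector $\alpha/(n+M+1)$ normalizes exactly to $p$, while $(\alpha-1)/n$ normalizes to $(\alpha-1)/(n-M-1)$, and the same bookkeeping shows $d_\infty\paren{(\alpha-1)/(n-M-1),\,p} \leq M/(n-M-1) = \ccal O(1/n)$ uniformly in $\alpha$ (valid once $n>M+1$, which holds for large $n$). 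After normalization both arguments are genuine points of $\Delta^M$ lying within $\ccal O(1/n)$ of $p$, so the uniform-continuity argument of the previous paragraph applies verbatim. If instead $\rho$ admits a continuous extension to the finite measures $\sum_i v_i\delta_{s_i}$ near $\ccal P$, one may reuse its modulus of continuity directly on the unnormalized vectors; either route reduces the proposition to the uniform $\ccal O(1/n)$ decay established at the outset.
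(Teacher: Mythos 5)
Your proof is correct and takes essentially the same route as the paper's: uniform continuity of $\sigma_{\rho,\ccal S}$ on the compact simplex $\Delta^M$ combined with an elementary $\ccal O(1/n)$ bound on the $d_\infty$ distance between the two arguments. Your ``technical wrinkle'' is really a typo in the statement --- the normalisation intended by the paper (and consistent with how the proposition is invoked in the proof of Lemma~\ref{lem: upper_bounding_pre_cv}) is $\sum_{i=0}^M \alpha_i = n+M+1$, so that both $\alpha/(n+M+1)$ and $(\alpha-1)/n$ already lie in $\Delta^M$; your renormalisation recovers exactly this intended statement, whereas the paper's proof simply works under that reading directly.
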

\begin{proof}[Proof of Proposition~\ref{prop: continuity_rho}]
	Since $\rho$ is continuous, $\sigma_{\rho,\ccal S}$ is uniformly continuous on $\Delta^M$, and there exists $\delta > 0$ such that for any $p,q \in \Delta^M$,
	$$d_\infty(p,q) < \delta \quad \To\quad |\sigma_{\rho,\ccal S}(p)-\sigma_{\rho,\ccal S}(q)|<\eeps.$$
	Choose $N$ such that $M/(N+M+1)<\delta$. Then for $n \geq N$,
	$$d_\infty\paren{\frac{\alpha}{n+M+1},\frac{\alpha-1}{n}} = \max_{i=0,\dots,M} \parenl{\frac{\alpha_i}{n+M+1} - \frac{\alpha_i-1}{n}} \leq \frac{M}{n+M+1} < \delta.$$
	Hence,
	$$\parenl{\sigma_{\rho,\ccal S}\paren{\frac{\alpha}{n+M+1}}-\sigma_{\rho,\ccal S}\paren{\frac{\alpha-1}{n}}}<\eeps.$$
\end{proof}
\begin{lemma}[DKW Inequality]\label{lem: dkw_inequality}
	Let $\rho: \ccal P \to \RR$ be a risk functional that is uniformly continuous on $\ccal Q \subseteq \ccal P$. Let $\seq{X_i}_{i=1}^\infty$ be a sequence of i.i.d. samples distributed according to some fixed $\mu \in \ccal Q$. Let $\widehat{\mu}_n := \frac{1}{n} \sum_{i=1}^n \delta_{X_i}$ denote the empirical distribution of the first $n$ samples. Then for any $\eeps > 0$, there exists $\delta_\eeps > 0$ such that
	$$\PP\paren{|\rho(\widehat{\mu}_n)-\rho(\mu)|>\eeps}\leq 2e^{-n\delta_\eeps^2/2}.$$
\end{lemma}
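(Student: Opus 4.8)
The plan is to reduce the claim to the classical Dvoretzky--Kiefer--Wolfowitz (DKW) inequality by using the uniform continuity of $\rho$ to convert a deviation of $\rho(\widehat{\mu}_n)$ from $\rho(\mu)$ into a deviation of the empirical CDF from the true CDF measured in the Kolmogorov--Smirnov metric $D_\infty$. Throughout I take the metric $d$ in the definition of uniform continuity to be $D_\infty$; this is without loss of generality, since by Lemma~18 of \citet{pmlr-v117-riou20a} continuity in $D_{\mathrm{L}}$ implies continuity in $D_\infty$, and $D_\infty$ is precisely the metric controlled by DKW.

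First I would fix $\eeps > 0$ and invoke the (uniform) continuity of $\rho$ at the point $\mu \in \ccal Q$ to produce $\delta_\eeps > 0$ such that, for every $\eta \in \ccal P$,
$$D_\infty(\eta, \mu) < \delta_\eeps \To |\rho(\eta) - \rho(\mu)| < \eeps.$$
Since the empirical measure $\widehat{\mu}_n = \frac{1}{n} \sum_{i=1}^n \delta_{X_i}$ is itself an element of $\ccal P$, I may apply this with $\eta = \widehat{\mu}_n$. Taking the contrapositive gives the event inclusion
$$\parens{|\rho(\widehat{\mu}_n) - \rho(\mu)| > \eeps} \subseteq \parens{D_\infty(\widehat{\mu}_n, \mu) \geq \delta_\eeps},$$
so that $\PP(|\rho(\widehat{\mu}_n) - \rho(\mu)| > \eeps) \leq \PP(D_\infty(\widehat{\mu}_n, \mu) \geq \delta_\eeps)$.

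Next I would observe that $D_\infty(\widehat{\mu}_n, \mu) = \sup_{t} |\widehat{F}_n(t) - F_\mu(t)|$ is exactly the Kolmogorov--Smirnov statistic, to which the classical DKW inequality (with Massart's optimal constant) applies: $\PP(D_\infty(\widehat{\mu}_n, \mu) > s) \leq 2 e^{-2ns^2}$ for every $s > 0$ and every underlying distribution. Applying this at $s = \delta_\eeps/2$ to dominate the closed event $\{\geq \delta_\eeps\}$ yields
$$\PP(|\rho(\widehat{\mu}_n) - \rho(\mu)| > \eeps) \leq \PP\paren{D_\infty(\widehat{\mu}_n, \mu) > \tfrac{\delta_\eeps}{2}} \leq 2 e^{-2n(\delta_\eeps/2)^2} = 2 e^{-n\delta_\eeps^2/2},$$
which is the desired bound; the factor $1/2$ in the exponent is exactly what the passage from the closed event $\{\geq \delta_\eeps\}$ to the open event $\{> \delta_\eeps/2\}$ costs.

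The argument carries essentially no analytic difficulty: the whole content is the reduction in the first step, and the only point demanding care is the bookkeeping between strict and non-strict inequalities (handled by shrinking $\delta_\eeps$ to $\delta_\eeps/2$) together with the recognition that $D_\infty$ is literally the quantity bounded by DKW. I would also note that although the lemma hypothesizes uniform continuity on $\ccal Q$, only continuity of $\rho$ at the single fixed point $\mu$ is actually used, so $\delta_\eeps$ may legitimately depend on $\mu$; the uniform version is convenient only downstream, where one wants a single $\delta_\eeps$ valid simultaneously across the arms of the bandit.
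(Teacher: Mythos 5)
Your proof is correct and follows essentially the same route as the paper's own argument: use the (uniform) continuity of $\rho$ to translate the event $\{|\rho(\widehat{\mu}_n)-\rho(\mu)|>\eeps\}$ into a Kolmogorov--Smirnov deviation event, then apply the classical DKW inequality at threshold $\delta_\eeps/2$ to obtain the constant $2e^{-n\delta_\eeps^2/2}$. Your closing remark that only continuity at the fixed $\mu$ is needed (with uniformity mattering only downstream) is also accurate under the paper's definition, in which $\eta$ ranges over all of $\ccal P$.
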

\begin{proof}[Proof of Lemma~\ref{lem: dkw_inequality}]
	Fix $\eeps > 0$. By the uniform continuity of $\rho$ on $\ccal Q$, there exists $\delta_\eeps > 0$ such that
	$$D_\infty(\mu,\eta)\leq \delta_\eeps/2 \quad \To \quad |\rho(\mu)-\rho(\eta)| < \eeps.$$
	Since $$\{D_\infty(\widehat \mu_n,\mu) \leq \delta_\eeps/2\} \subseteq \{|\rho(\widehat \mu_n) - \rho(\mu)| < \eeps\} \quad \iff \quad \{D_\infty(\widehat \mu_n,\mu) > \delta_\eeps/2\} \supseteq \{|\rho(\widehat \mu_n) - \rho(\mu)| \geq \eeps\},$$ by the DKW inequality, we have
	$$\PP(|\rho(\widehat \mu_n) - \rho(\mu)| > \eeps) \leq \PP(D_\infty(\widehat{\mu}_n,\mu) > \delta_{\eeps}/2) \leq 2e^{-n\delta_{\eeps}^2/2}.$$
\end{proof}
\begin{corollary}\label{cor: dkw_corollary}
	The conclusion of the previous lemma holds if $\rho : \ccal P \to \RR$ is continuous and $\ccal Q$ is compact. In particular, it is true for $\ccal Q={\ccal P}_{\ccal S}$ if $\rho$ is continuous with respect to $D_\infty$.
\end{corollary}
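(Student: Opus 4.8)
The plan is to reduce the corollary directly to Lemma~\ref{lem: dkw_inequality} by upgrading the hypothesis of continuity to uniform continuity. The only gap between the hypotheses assumed here and those demanded by the preceding lemma is that the lemma requires $\rho$ to be \emph{uniformly} continuous on $\ccal Q$, whereas the corollary assumes merely that $\rho$ is continuous and that $\ccal Q$ is compact. This gap is closed by the Heine--Cantor theorem: a continuous real-valued function on a compact metric space is automatically uniformly continuous. Indeed, this is exactly the remark recorded immediately after Definition~\ref{def: continuous_risk_functional}, applied to the compact metric subspace $(\ccal Q, d)$.

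Concretely, I would first observe that since $\rho : \ccal P \to \RR$ is continuous and $(\ccal Q, d)$ is a compact metric space, its restriction $\rho|_{\ccal Q}$ is uniformly continuous on $\ccal Q$. With uniform continuity now in hand, the hypotheses of Lemma~\ref{lem: dkw_inequality} are satisfied verbatim, so its conclusion follows at once: for every $\eeps > 0$ there exists $\delta_\eeps > 0$ with $\PP(|\rho(\widehat{\mu}_n) - \rho(\mu)| > \eeps) \leq 2 e^{-n \delta_\eeps^2 / 2}$ for i.i.d.\ samples drawn from any fixed $\mu \in \ccal Q$.

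For the ``in particular'' clause I would specialise to $\ccal Q = \ccal P_{\ccal S}$ with $d = D_\infty$. The example treating $(\ccal P_{\ccal S}, D_\infty)$ in the Preliminaries already establishes, via the embedding inequality $d_\infty(p,q) \leq 2 D_\infty(\frak D_{\ccal S}(p), \frak D_{\ccal S}(q)) \leq 2M\, d_\infty(p,q)$, that $(\ccal P_{\ccal S}, D_\infty)$ is a compact metric space. Hence the general statement applies: continuity of $\rho$ with respect to $D_\infty$ yields uniform continuity on $\ccal P_{\ccal S}$, and Lemma~\ref{lem: dkw_inequality} then delivers the DKW-type tail bound for samples drawn from any $\mu \in \ccal P_{\ccal S}$.

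I do not anticipate any substantive obstacle here; the entire content of the corollary is the standard compactness-to-uniform-continuity upgrade, after which the preceding lemma does all the work. If any single point merits a line of care, it is only the invocation of compactness of $(\ccal P_{\ccal S}, D_\infty)$, and this is already supplied by the embedding inequality cited above.
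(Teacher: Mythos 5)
Your proposal is correct and takes essentially the same route as the paper: the paper's own proof is a one-line invocation of the Heine--Cantor upgrade, noting that continuity of $\rho$ on the compact set ${\ccal P}_{\ccal S}$ implies uniform continuity on $\ccal Q = {\ccal P}_{\ccal S}$, after which Lemma~\ref{lem: dkw_inequality} applies verbatim. Your additional remark on the compactness of $({\ccal P}_{\ccal S}, D_\infty)$ via the embedding inequality is consistent with the paper's Preliminaries and fills in a detail the paper leaves implicit.
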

\begin{proof}[Proof of Corollary~\ref{cor: dkw_corollary}]
	Since $\rho$ is continuous on a compact set ${\ccal P}_{\ccal S}$, it is uniformly continuous on $\ccal Q={\ccal P}_{\ccal S}$.
\end{proof}
\begin{proof}[Proof of Lemma~\ref{lem: upper_bound}]
	By the lower-semicontinuity of $\KL(\cdot,\cdot)$ in both its arguments on a compact set, there exists a probability vector $p_* \in \Delta^M$ such that
	$$\KL({\frak D}_{\ccal S}(p),{\frak D}_{\ccal S}(p_*)) = {\ccal K}_{\inf}^{{\rho}_{\ccal S}}({\frak D}_{\ccal S}(p),r).$$
	A careful study of Lemma~13 in \citet{pmlr-v117-riou20a} reveals that convexity is not used in the proof. Hence, we can apply the result therein to the closed (and thus compact) sets $\inv{\sigma_{\rho,\ccal S}}([r,\infty))$ and $\inv{\sigma_{\rho,\ccal S}}((-\infty,r])$.
\end{proof}
\begin{proof}[Proof of Proposition~\ref{prop: distorted_risk_cty}]
	Fix $\eeps > 0$. By the continuity of $g$ on $[0,1]$, $g$ is uniformly continuous on $[0,1]$. Hence, there exists $\delta > 0$ such that for any $x,y \in [0,1]$,
	\begin{equation}\label{eqn: cty_of_g}
	\parenl{x-y} < \delta \To \parenl{g(x)-g(y)} < \frac{\eeps}{D}.
	\end{equation}
	For any $\mu,\eta \in {\ccal P}$ with CDFs $F_\mu, F_{\eta}$ respectively, suppose $\norm{F_\mu-F_{\eta}}_\infty = D_\infty(\mu,\eta) < \delta$. Then for any $t \in [0,D]$,
	\begin{align*}
	\parenl{1 - F_\mu(t) - (1 - F_{\eta}(t))}
	&\leq \norm{(1-F_\mu)-(1-F_{\eta})}_\infty = \norm{F_\mu-F_{\eta}}_\infty < \delta.
	\end{align*}
	By (\ref{eqn: distorted_risk_functional_bounded}) and (\ref{eqn: cty_of_g}),
	\begin{align*}
	\parenl{\rho_g(\mu) - \rho_g(\eta)}
	&= \parenl{\rint{0}{D}{g(1-F_\mu(t))}{t} - \rint{0}{D}{g(1-F_{\eta}(t))}{t}}\\
	&\leq \rint{0}{D}{\parenl{g(1-F_\mu(t)) - g(1-F_{\eta}(t))}}{t} \\
	&\leq \rint{0}{D}{\frac{\eeps}{D}}{t} \\
	&= \eeps,
	\end{align*}
	and $\rho_g$ is (uniformly) continuous on $({\ccal P},D_\infty)$.
\end{proof}
\section{Proofs of Properties of Dominant Risk Functionals}
\begin{proof}[Proof of Lemma~\ref{lem: lower_bound}]
	By the definition of dominant risk functionals, there exists $\mu_* = {\frak D}_{\ccal S}(p_*)$ with
	 $${\ccal K}_{\inf}^{\rho}(\mu,r) = \KL(\mu,\mu_*)$$
	 and $\ccal I \subseteq [M], \ccal I \neq \emptyset, {[M]}_0$, possibly depending on $p_*$, such that $${\ccal T}_{\rho,\ccal S, p_*}^{(1)} \supseteq {\ccal T}_{\ccal S, p_*}^{(2)}(\ccal I).$$
	 Thus,
	 $$\PP(\sigma_{\rho,\ccal S}(L) \geq r) \geq \PP(L \in {\ccal T}_{\rho,\ccal S,p_*}^{(1)}) \geq \PP(L \in {\ccal T}_{\ccal S,p_*}^{(2)}(\ccal I)).$$
	 Suppose furthermore that $|\ccal I| = M$. Let $j \notin \ccal I$.
	 Following the arguments in \citet{pmlr-v117-riou20a} and \citet{baudry2020thompson},
	 we can lower-bound the right-hand side by
	 $$\PP(L \in {\ccal T}_{\ccal S,p_*}^{(2)}(\ccal I))  \geq \frac{\Gamma(n)}{\prod_{i=0}^M \Gamma(\alpha_i)} \frac{\alpha_j}{p_j^*} \prod_{i \in \ccal I} \frac{{(p_i^*)}^{\alpha_i}}{\alpha_i} .$$
	 By the arguments in Lemma~2 of \citet{baudry2020thompson}, we can lower-bound the right-hand side by
	 $$\PP(L \in {\ccal T}_{\ccal S,p_*}^{(2)}(\ccal I)) \geq C_2 n^{-(M+1)/2} \exp(-n {\ccal K}_{\inf}^{{\rho}_{\ccal S}}({\frak D}_{\ccal S}(p),r)).$$
\end{proof}
\begin{proposition}\label{prop: drf_formula}
	For $\mu \in {\ccal P}_{\ccal S}$, let $\mu = {\frak D}_{\ccal S}(p)$ for some $p \in \Delta^M$. Then
	$$\rho_g(\mu) = \sum_{j=0}^M g\paren{\sum_{i=j}^M p_i} \Delta s_j,$$
	where $s_{-1} := 0$ for convenience, and $\Delta s_j = s_j - s_{j-1}$ for $j=0,\dots,M$.
\end{proposition}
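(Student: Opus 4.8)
The plan is to collapse the integral definition of $\rho_g$ into a finite sum by exploiting the piecewise-constant structure of the survival function of a finitely supported measure. Writing $X \sim \mu = {\frak D}_{\ccal S}(p) = \sum_{i=0}^M p_i \delta_{s_i}$, I would first recall the definition in~\eqref{eqn: distorted_risk_functional_bounded}, namely $\rho_g(\mu) = \rint{0}{D}{g(1-F_\mu(t))}{t}$, and set $S_\mu := 1 - F_\mu$ so that $\rho_g(\mu) = \rint{0}{D}{g(S_\mu(t))}{t}$. Since the atoms satisfy $s_0 < s_1 < \dots < s_M$, the CDF $F_\mu$ is a right-continuous step function that jumps by $p_i$ at each $s_i$, so $S_\mu$ is piecewise constant with jumps at the same points.

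Second, I would compute $S_\mu$ explicitly on each sub-interval. Adopting the convention $s_{-1} = 0$, for each $j \in {[M]}_0$ and every $t \in [s_{j-1}, s_j)$, the atoms lying strictly above $t$ are exactly $s_j, \dots, s_M$, whence $S_\mu(t) = \mu((t,\infty)) = \sum_{i=j}^M p_i$; moreover $S_\mu(t) = 0$ for all $t \in [s_M, D]$.

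Third, I would partition $[0,D] = \big(\bigcup_{j=0}^M [s_{j-1}, s_j)\big) \cup [s_M, D]$ and integrate termwise. On $[s_{j-1}, s_j)$ the integrand is the constant $g(\sum_{i=j}^M p_i)$, contributing $g(\sum_{i=j}^M p_i)(s_j - s_{j-1}) = g(\sum_{i=j}^M p_i)\,\Delta s_j$; on $[s_M, D]$ the integrand equals $g(0) = 0$ by the normalization of the distortion function, so that interval contributes nothing. Summing over $j$ yields the stated identity.

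This argument is essentially bookkeeping, so the closest thing to an obstacle is getting the boundary conventions right. I must check that the $j=0$ term is correctly produced by $s_{-1} = 0$, so that the leftmost interval $[0,s_0)$, on which $S_\mu \equiv 1$, contributes $g(1)\,s_0 = \Delta s_0$ through $g(1) = 1$, and that the tail past $s_M$ vanishes through $g(0) = 0$. These two normalization properties of $g$, together with the strict ordering of the $s_i$ that guarantees the intervals are disjoint and correctly indexed, are exactly what make the sum collapse to the claimed closed form.
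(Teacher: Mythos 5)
Your proof is correct and follows essentially the same route as the paper's: express the survival function of ${\frak D}_{\ccal S}(p)$ as a step function equal to $\sum_{i=j}^M p_i$ on $[s_{j-1},s_j)$, partition the integration domain accordingly, and integrate the constant pieces. The only (harmless) difference is that you explicitly dispose of the tail $[s_M,D]$ via $g(0)=0$, whereas the paper simply truncates the integral at $s_M$.
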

\begin{proof}[Proof of Proposition~\ref{prop: drf_formula}]
	Consider,
	\begin{align*}
		\rho_g({\frak D}_{\ccal S}(p))
		&= \rint{s_{-1}}{s_M}{g(1-F_\mu(t))}{t}
		= \rint{s_{-1}}{s_M}{g\paren{\sum_{i=0}^M p_i \II\{s_i > t\}}}{t}\\
		&= \sum_{j=0}^M\rint{s_{j-1}}{s_j}{g\paren{\sum_{i=0}^M p_i \II\{s_i > t\}}}{t}
		= \sum_{j=0}^M\rint{s_{j-1}}{s_j}{g\paren{\sum_{i=0}^M p_i \II\{s_i \geq  s_j\}}}{t}\\
		&= \sum_{j=0}^M\rint{s_{j-1}}{s_j}{g\paren{\sum_{i=j}^M p_i}}{t}
		= \sum_{j=0}^Mg\paren{\sum_{i=j}^M p_i} (s_j-s_{j-1})
		= \sum_{j=0}^M g\paren{\sum_{i=j}^M p_i} \Delta s_j.
	\end{align*}
\end{proof}
\begin{proposition}\label{prop: drf_dominant_meta}
	Let $\rho_g$ be a distorted risk functional and $p \in \Delta^M$. Then ${\ccal T}_{\rho,\ccal S, p}^{(1)} \supseteq {\ccal T}_{\ccal S, p}^{(2)}({[M-1]}_0)$.
\end{proposition}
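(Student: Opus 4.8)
The plan is to prove the set inclusion directly by taking an arbitrary element of the right-hand side and verifying membership in the left-hand side. So I would fix ${\frak D}_{\ccal S}(q) \in {\ccal T}_{\ccal S, p}^{(2)}({[M-1]}_0)$, which by definition means $q \in \Delta^M$ satisfies $q_i \leq p_i$ for every $i \in {[M-1]}_0$ and $q_M \geq p_M$. Since ${\ccal T}_{\rho,\ccal S, p}^{(1)} = \inv{\rho_{\ccal S}}([\sigma_{\rho,\ccal S}(p),\infty))$, membership amounts to the single inequality $\rho_g({\frak D}_{\ccal S}(q)) \geq \rho_g({\frak D}_{\ccal S}(p))$, equivalently $\sigma_{\rho,\ccal S}(q) \geq \sigma_{\rho,\ccal S}(p)$. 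The whole proof therefore reduces to establishing this one inequality from the box constraints on $q$.

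The key step is a tail-sum (survival-function) domination: I would claim that $\sum_{i=j}^M q_i \geq \sum_{i=j}^M p_i$ for every $j \in {[M]}_0$. For $j=0$ both sides equal $1$ since $p,q \in \Delta^M$. For $j \geq 1$, I would use the complementary representation $\sum_{i=j}^M q_i = 1 - \sum_{i=0}^{j-1}q_i$; each index $i \leq j-1 \leq M-1$ lies in $\ccal I = {[M-1]}_0$, so $q_i \leq p_i$, whence $\sum_{i=0}^{j-1} q_i \leq \sum_{i=0}^{j-1} p_i$ and the claim follows by taking complements. (Note that the stated constraint $q_M \geq p_M$ is in fact forced by the remaining constraints together with the simplex condition, and need not be invoked separately.) With this domination in hand, I would invoke Proposition~\ref{prop: drf_formula} to write
$$\rho_g({\frak D}_{\ccal S}(q)) = \sum_{j=0}^M g\paren{\sum_{i=j}^M q_i}\Delta s_j, \qquad \rho_g({\frak D}_{\ccal S}(p)) = \sum_{j=0}^M g\paren{\sum_{i=j}^M p_i}\Delta s_j.$$
Because the distortion function $g$ is non-decreasing and each weight $\Delta s_j = s_j - s_{j-1}$ is nonnegative (as $s_0 < \dots < s_M$), the termwise inequality $g\paren{\sum_{i=j}^M q_i} \geq g\paren{\sum_{i=j}^M p_i}$ survives multiplication by $\Delta s_j$ and summation over $j$, yielding $\rho_g({\frak D}_{\ccal S}(q)) \geq \rho_g({\frak D}_{\ccal S}(p))$, as required.

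I do not expect a genuine obstacle here: the conceptual content is simply that the box constraints defining ${\ccal T}_{\ccal S,p}^{(2)}({[M-1]}_0)$ are precisely the conditions under which ${\frak D}_{\ccal S}(q)$ first-order stochastically dominates ${\frak D}_{\ccal S}(p)$ (the tail sums are survival-function values), and that distorted risk functionals are monotone with respect to such dominance. The only point requiring minor care is to confirm the tail-sum inequality at \emph{every} threshold $j$ rather than merely at the endpoints, and to handle the boundary cases $j=0$ and $j=M$ against the simplex constraint.
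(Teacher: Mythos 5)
Your proposal is correct and takes essentially the same route as the paper's own proof: both establish the tail-sum domination $\sum_{i=j}^M q_i \geq \sum_{i=j}^M p_i$ by passing to complementary sums and using the box constraints $q_i \leq p_i$ for $i \in {[M-1]}_0$, then conclude via the representation in Proposition~\ref{prop: drf_formula} together with the monotonicity of $g$ and the nonnegativity of the increments $\Delta s_j$. Your added remarks---that the constraint $q_M \geq p_M$ is redundant given the simplex condition, and that the inclusion is really a statement about first-order stochastic dominance---are accurate but do not alter the argument.
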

\begin{proof}[Proof of Proposition~\ref{prop: drf_dominant_meta}]
	Fix any $\mu = {\frak D}_{\ccal S}(q) \in {\ccal T}_{\ccal S, p}^{(2)}({[M-1]}_0)$. 
	We observe that for any $q \in {\ccal T}_{\ccal S, p}^{(2)}({[M-1]}_0)$ and $j \in {[M]}_0$, for $i=0,\dots,j-1$
	$$q_i \leq p_i\ \mbox{for $i\in {[j-1]}_0$} \quad \To \quad \sum_{i=0}^{j-1} q_i \leq \sum_{i=0}^{j-1} p_i.$$
	Hence,
	$$\sum_{i=j}^M q_i = 1 - \sum_{i=0}^{j-1} q_i \geq 1 - \sum_{i=0}^{j-1} p_i = \sum_{i=j}^M p_i.$$ By the monotonicity of $g$ and Proposition~\ref{prop: drf_formula},
	$$\rho_g({\frak D}_{\ccal S}(q)) = \sum_{j=0}^M g\paren{\sum_{i=j}^M q_i} \Delta s_j \geq \sum_{j=0}^M g\paren{\sum_{i=j}^M p_i} \Delta s_j = \rho_g({\frak D}_{\ccal S}(p)).$$
	Hence, ${\frak D}_{\ccal S}(q) \in {\ccal T}_{\rho,\ccal S, \mu_*}^{(1)}$, and we have ${\ccal T}_{\rho,\ccal S, \mu_*}^{(1)}\supseteq {\ccal T}_{\ccal S, \mu_*}^{(2)}({[M-1]}_0)$.
\end{proof}
\begin{corollary}\label{cor: drf_dominant}
	Let $\rho_g$ be a distorted risk functional with the property that there exists $\mu_* \in {\ccal T}_{\rho}(r)$ with
	 $${\ccal K}_{\inf}^{\rho}(\mu,r) = \KL(\mu,\mu_*).$$ Then $\rho_g$ is dominant.
\end{corollary}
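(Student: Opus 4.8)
The plan is to invoke Proposition~\ref{prop: drf_dominant_meta}, which already supplies the nontrivial set containment, and then check that the index set it produces is an admissible choice of $\ccal I$ in Definition~\ref{def: dominant_risk_functional}. First I would fix an arbitrary finite alphabet $\ccal S$ of size $M+1$ and an arbitrary $r \in \RR$, since dominance is quantified over all such $\ccal S$ and $r$. By the hypothesis of the corollary there exists $\mu_* = {\frak D}_{\ccal S}(p_*) \in {\ccal P}_{\ccal S}$ satisfying the infimum condition ${\ccal K}_{\inf}^{\rho}(\mu,r) = \KL(\mu,\mu_*)$ (with $\rho(\mu_*) \geq r$ inherent in the constrained infimum), so the first requirement in Definition~\ref{def: dominant_risk_functional} is met outright.

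It then remains to produce an index set $\ccal I$ with $\emptyset \neq \ccal I \subsetneq {[M]}_0$ and the containment ${\ccal T}_{\rho,\ccal S, p_*}^{(1)} \supseteq {\ccal T}_{\ccal S, p_*}^{(2)}(\ccal I)$. Here I would apply Proposition~\ref{prop: drf_dominant_meta} with the choice $p = p_*$, which immediately yields ${\ccal T}_{\rho,\ccal S, p_*}^{(1)} \supseteq {\ccal T}_{\ccal S, p_*}^{(2)}({[M-1]}_0)$; setting $\ccal I := {[M-1]}_0 = \{0,1,\dots,M-1\}$ discharges the containment. The only point needing a line of verification is that this $\ccal I$ is a legitimate (nonempty, proper) index set: for $M \geq 1$ it contains $0$ so is nonempty, and it omits $M$ so it is a strict subset of ${[M]}_0$. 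Since $\ccal S$ and $r$ were arbitrary, $\rho_g$ is dominant.

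The substantive work has already been carried out in Proposition~\ref{prop: drf_dominant_meta}, whose proof rests on the closed-form expression for $\rho_g$ on ${\ccal P}_{\ccal S}$ from Proposition~\ref{prop: drf_formula} together with the monotonicity of the distortion function $g$; consequently this corollary is essentially a bookkeeping step that repackages that containment against the definition of dominance. I therefore do not anticipate a genuine obstacle. The one thing to watch is the degenerate edge case $M = 0$ (a single-point alphabet), where ${[M-1]}_0$ is not meaningful; but that case is vacuous, since all distributions on a one-point support coincide, so restricting attention to $M \geq 1$ loses nothing.
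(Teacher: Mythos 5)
Your proposal is correct and takes exactly the paper's route: the paper's proof is the one-liner ``the result follows from Proposition~\ref{prop: drf_dominant_meta} by setting $\ccal I={[M-1]}_0$,'' which is precisely your argument. Your additional bookkeeping---checking that ${[M-1]}_0$ is nonempty and proper, and noting the vacuous $M=0$ case---is left implicit in the paper but is a sound elaboration rather than a deviation.
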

\begin{proof}[Proof of Corollary~\ref{cor: drf_dominant}]
	The result follows from Proposition~\ref{prop: drf_dominant_meta} by setting $\ccal I={[M-1]}_0$.
\end{proof}
\begin{corollary}\label{cor: cts_drf_dominant}
	Let $\rho_g$ be a continuous distorted risk functional. Then $\rho_g$ is continuous and dominant.
\end{corollary}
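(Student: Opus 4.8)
The plan is to reduce everything to Corollary~\ref{cor: drf_dominant}, since that result already packages the structural content---the containment ${\ccal T}_{\rho,\ccal S, p}^{(1)} \supseteq {\ccal T}_{\ccal S, p}^{(2)}({[M-1]}_0)$ established in Proposition~\ref{prop: drf_dominant_meta}---into a clean sufficient condition for dominance. Its only hypothesis beyond being a distorted risk functional is the infimum-attainment condition (\ref{eqn: infimum_condition}): that for each finite alphabet $\ccal S$ and each $r$, there is some $\mu_* \in {\ccal T}_{\rho,\ccal S}(r)$ with ${\ccal K}_{\inf}^{\rho}(\mu,r) = \KL(\mu,\mu_*)$. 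Continuity of $\rho_g$ is given outright in the hypothesis, so the entire substance of the proof is verifying this attainment condition.

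To verify it, I would argue by compactness, exactly as flagged in the remark following Definition~\ref{def: dominant_risk_functional}. Since $\rho_g$ is continuous on $({\ccal P}_{\ccal S}, D_\infty)$, the induced map $\sigma_{\rho,\ccal S} = \rho_g \circ {\frak D}_{\ccal S}$ is continuous on the compact simplex $\Delta^M$; hence the feasible set ${\ccal T}_{\rho,\ccal S}(r) = \inv{\sigma_{\rho,\ccal S}}([r,\infty))$ (transported to ${\ccal P}_{\ccal S}$) is the preimage of a closed set and is therefore a closed, hence compact, subset of ${\ccal P}_{\ccal S}$. By Lemma~\ref{lem: KL_lowersemicty}, the map $\eta \mapsto \KL(\mu,\eta)$ is lower-semicontinuous. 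A lower-semicontinuous function attains its infimum on a nonempty compact set, so there exists $\mu_* \in {\ccal T}_{\rho,\ccal S}(r)$ achieving ${\ccal K}_{\inf}^{\rho}(\mu,r) = \KL(\mu,\mu_*)$, which is precisely (\ref{eqn: infimum_condition}).

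With attainment in hand, Corollary~\ref{cor: drf_dominant} immediately yields that $\rho_g$ is dominant, and combining this with the assumed continuity gives that $\rho_g$ is continuous and dominant, as claimed. The main thing to watch---rather than a genuine obstacle---is the nonemptiness of ${\ccal T}_{\rho,\ccal S}(r)$: for $r$ outside the range of $\sigma_{\rho,\ccal S}$ on $\Delta^M$ the feasible set is empty and ${\ccal K}_{\inf}^{\rho}(\mu,r) = +\infty$ by the usual convention, in which case the attainment claim (and hence the dominance requirement) is vacuous. I would therefore restrict attention to $r$ in the range $\sigma_{\rho,\ccal S}(\Delta^M)$, where continuity of $\sigma_{\rho,\ccal S}$ on the connected compact set $\Delta^M$ guarantees this range is a nonempty compact interval and the argument above goes through verbatim.
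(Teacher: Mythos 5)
Your proof is correct and follows essentially the same route as the paper: reduce to Corollary~\ref{cor: drf_dominant} by verifying the attainment condition (\ref{eqn: infimum_condition}), which the paper dispatches via the remark after Definition~\ref{def: dominant_risk_functional} that continuity on $({\ccal P}_{\ccal S},D_\infty)$ yields attainment by the Extreme Value Theorem. Your compactness-plus-lower-semicontinuity argument (and the treatment of the empty feasible set, where ${\ccal K}_{\inf}^{\rho}(\mu,r)=+\infty$) merely fills in details the paper leaves implicit.
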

\begin{proof}[Proof of Corollary~\ref{cor: cts_drf_dominant}]
	Since $\rho_g$ is continuous, there exists $\mu_* \in {\ccal T}_{\rho}(r)$ with
	 $${\ccal K}_{\inf}^{\rho}(\mu,r) = \KL(\mu,\mu_*).$$
	 The result follows from Corollary~\ref{cor: drf_dominant}.
\end{proof}
\begin{corollary}\label{cor: cts_df_dominant}
	Let $g$ be a continuous distortion function. Then the distortion risk functional $\rho_g$ is continuous and dominant.
\end{corollary}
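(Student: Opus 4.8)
The plan is simply to chain together the two immediately preceding results, since this corollary merely repackages them under a hypothesis on the distortion function $g$ rather than on $\rho_g$ directly. First I would note that, because $g$ is a distortion function, $\rho_g$ is by Definition~\ref{def: distorted_risk_functional} a distorted risk functional. I would then apply Proposition~\ref{prop: distorted_risk_cty}: since $g$ is continuous on $[0,1]$, the distorted risk functional $\rho_g : {\ccal P} \to \RR$ is continuous on $({\ccal P},D_\infty)$, and consequently on $({\ccal P}_{\mathrm{c}}^{(B)},D_{\mathrm{L}})$. This converts the hypothesis on $g$ into the statement that $\rho_g$ is itself a \emph{continuous} distorted risk functional. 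Finally I would invoke Corollary~\ref{cor: cts_drf_dominant}, whose hypothesis is exactly that $\rho_g$ be a continuous distorted risk functional, to conclude that $\rho_g$ is continuous and dominant.

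There is essentially no obstacle here: the substantive work lives upstream, in Proposition~\ref{prop: distorted_risk_cty} (where uniform continuity of $g$ is transferred through the integral representation~\eqref{eqn: distorted_risk_functional_bounded}) and in the chain Proposition~\ref{prop: drf_dominant_meta}$\to$Corollary~\ref{cor: drf_dominant}$\to$Corollary~\ref{cor: cts_drf_dominant} that establishes dominance. The one point I would verify explicitly is that the continuity delivered by Proposition~\ref{prop: distorted_risk_cty} is precisely what Corollary~\ref{cor: cts_drf_dominant} consumes: continuity of $\rho_g$ on the compact space ${\ccal P}_{\ccal S}$ secures, via the Extreme Value Theorem, a minimiser $\mu_* \in {\ccal T}_{\rho}(r)$ attaining ${\ccal K}_{\inf}^{\rho}(\mu,r)=\KL(\mu,\mu_*)$, which is exactly the ingredient that Corollary~\ref{cor: drf_dominant} (and hence Corollary~\ref{cor: cts_drf_dominant}) requires. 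Granting this matching of hypotheses, the proof collapses to a two-sentence citation of the two prior results.
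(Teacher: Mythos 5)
Your proposal is correct and matches the paper's own proof exactly: the paper likewise applies Proposition~\ref{prop: distorted_risk_cty} to convert continuity of $g$ into continuity of $\rho_g$, and then cites Corollary~\ref{cor: cts_drf_dominant} to conclude dominance. Your extra check---that continuity on the compact space ${\ccal P}_{\ccal S}$ furnishes, via the Extreme Value Theorem, the minimiser $\mu_*$ required by Corollary~\ref{cor: drf_dominant}---is precisely the mechanism the paper relies on inside Corollary~\ref{cor: cts_drf_dominant}.
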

\begin{proof}[Proof of Corollary~\ref{cor: cts_df_dominant}]
	By Proposition~\ref{prop: distorted_risk_cty}, $\rho_g$ is a continuous distorted risk functional. The result follows from Corollary~\ref{cor: cts_drf_dominant}.
\end{proof}
\begin{proposition}\label{prop: convex_risk_functional}
Let $\rho$ be a risk functional. Suppose there exists a function $h_{\rho,\ccal S} : \RR^M \to \RR$ with non-decreasing first-order partial derivatives such that
$$h_{\rho,\ccal S}(x) := \sigma_{\rho,\ccal S} \paren{ 1-\sum_{i=1}^M x_i, x} \quad \mbox{whenever} \quad \paren{ 1-\sum_{i=1}^M x_i, x} \in \Delta^M.$$
Then there exists $\ccal I \subseteq {[M]}_0, \ccal I \neq \emptyset, {[M]}_0$ depending on $p$ such that ${\ccal T}_{\rho,\ccal S, p}^{(1)} \supseteq {\ccal T}_{\ccal S, p}^{(2)}(\ccal I)$.
\end{proposition}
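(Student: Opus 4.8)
The plan is to move to the affine chart on $\Delta^M$ in which the hypothesis on $h_{\rho,\ccal S}$ is phrased, read off the correct $\ccal I$ from the first-order data of $h_{\rho,\ccal S}$ at $p$, and then promote a first-order inequality to a global one using the convexity supplied by the hypothesis.

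First I would set up the chart. Identify $q = (q_0,\dots,q_M) \in \Delta^M$ with $y := (q_1,\dots,q_M) \in \RR^M$, so that $q_0 = 1 - \sum_{i=1}^M y_i$ and $\sigma_{\rho,\ccal S}(q) = h_{\rho,\ccal S}(y)$; write $x := (p_1,\dots,p_M)$ for the representative of $p$. Under this identification the box ${\ccal T}_{\ccal S,p}^{(2)}(\ccal I)$ becomes a set of $y$'s, and the desired inclusion ${\ccal T}_{\rho,\ccal S,p}^{(1)} \supseteq {\ccal T}_{\ccal S,p}^{(2)}(\ccal I)$ is precisely the assertion that $h_{\rho,\ccal S}(y) \ge h_{\rho,\ccal S}(x)$ for every $q$ in the box, i.e.\ that $p$ minimises $\sigma_{\rho,\ccal S}$ over ${\ccal T}_{\ccal S,p}^{(2)}(\ccal I)$.

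Next I would extract the first-order data and choose $\ccal I$. Put $a_0 := 0$ and $a_j := \partial_j h_{\rho,\ccal S}(x)$ for $j \in [M]$, and for $q$ in the simplex write $d_i := q_i - p_i$, so $\sum_{i \in {[M]}_0} d_i = 0$. The affine chain rule gives $\nabla h_{\rho,\ccal S}(x)\cdot(y-x) = \sum_{j=1}^M a_j d_j = \sum_{i=0}^M a_i d_i$, the last step because $a_0 d_0 = 0$. I would then pick any $i^\ast \in \argmin_{i \in {[M]}_0} a_i$ and set $\ccal I := \{i^\ast\}$, which is nonempty and proper since $M \ge 1$. With $\theta := a_{i^\ast} = \min_i a_i$, the box constraints read $d_i \le 0$ for $i \in \ccal I$ and $d_i \ge 0$ for $i \notin \ccal I$; hence, using $\sum_i d_i = 0$, $\sum_{i} a_i d_i = \sum_i (a_i - \theta) d_i \ge 0$ term by term. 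This is exactly the statement that every feasible direction into the box is a non-decreasing direction for $\sigma_{\rho,\ccal S}$ at $p$.

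The hard part is to upgrade this first-order inequality to the global bound over the whole box, and this is where the hypothesis is essential: I read ``non-decreasing first-order partial derivatives'' as monotonicity of the gradient map $\nabla h_{\rho,\ccal S}$, equivalently convexity of $h_{\rho,\ccal S}$ (the property underlying the convex case (b) of Proposition~\ref{prop: table_of_dominant}). Convexity yields $h_{\rho,\ccal S}(y) \ge h_{\rho,\ccal S}(x) + \nabla h_{\rho,\ccal S}(x)\cdot(y-x)$ for all $y$, and combining with the previous step gives $h_{\rho,\ccal S}(y) \ge h_{\rho,\ccal S}(x) + \sum_i a_i d_i \ge h_{\rho,\ccal S}(x)$, i.e.\ $\sigma_{\rho,\ccal S}(q) \ge \sigma_{\rho,\ccal S}(p)$, for every $q \in {\ccal T}_{\ccal S,p}^{(2)}(\ccal I)$; thus ${\frak D}_{\ccal S}(q) \in {\ccal T}_{\rho,\ccal S,p}^{(1)}$, as required. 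I expect this promotion to be the main obstacle: the first-order condition only certifies that $p$ is a constrained critical point, and over the (non-convex) polytope ${\ccal T}_{\ccal S,p}^{(2)}(\ccal I)$ a critical point need not be a global minimiser, so convexity is what prevents $\sigma_{\rho,\ccal S}$ from dipping below $\sigma_{\rho,\ccal S}(p)$ in the interior of the box. A secondary, purely bookkeeping point is the asymmetry of the chart in the distinguished coordinate $0$, which the normalisation $a_0 = 0$ and the freedom to let $\ccal I$ contain $0$ take care of.
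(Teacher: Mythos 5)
Your proof is correct, and it reaches the conclusion by a genuinely different route from the paper's. The paper takes $\ccal I_p = \{i \in [M] : \del h_{\rho,\ccal S}/\del x_i(p^+) < 0\}$ (or $\{0\}$ if this set is empty) and proves ${\ccal T}_{\rho,\ccal S,p}^{(1)} \supseteq {\ccal T}_{\ccal S,p}^{(2)}(\ccal I_p)$ by a coordinate-by-coordinate Mean Value Theorem induction (Lemma~\ref{lem: convex_lemma}), invoking only the monotonicity of each $\del h_{\rho,\ccal S}/\del x_i$ in its own variable $x_i$; you instead take the singleton $\ccal I = \{i^*\}$ with $i^* \in \argmin_{i \in {[M]}_0} a_i$ (and $a_0 = 0$ for the implicit coordinate) and obtain the global bound in one stroke from the convexity gradient inequality combined with the observation that $\sum_i a_i d_i = \sum_i (a_i - \theta) d_i \geq 0$ term by term on the box. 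The trade-offs cut both ways: nominally the paper's hypothesis is weaker, since coordinate-wise monotone partials do not imply joint convexity, so your reading of the hypothesis as convexity narrows the stated scope; on the other hand, the paper's induction needs the sign of $\del h_{\rho,\ccal S}/\del x_k$ at points whose earlier coordinates have already been moved to $q_1,\dots,q_{k-1}$, and monotonicity of that partial in $x_k$ alone does not control this, so your global inequality is exactly what closes that step cleanly --- and in the only place the proposition is invoked (case (b) of Proposition~\ref{prop: table_of_dominant}, via Corollary~\ref{cor: convex_risk_functional}), $h_{\rho,\ccal S}$ is genuinely convex, so nothing is lost for the paper's applications. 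Two minor notes: the box ${\ccal T}_{\ccal S,p}^{(2)}(\ccal I)$ is in fact convex (the simplex intersected with a product of intervals), so the parenthetical ``(non-convex)'' in your closing paragraph is inaccurate, though immaterial to the argument; and your construction always yields $|\ccal I| = 1$, which meets the condition $|\ccal I| = M$ required downstream by Lemma~\ref{lem: lower_bound} and Theorem~\ref{thm: rho_mts_upper_bound} only when $M = 1$ --- the paper's choice has the same limitation, but your argument extends verbatim to $\ccal I = {[M]}_0 \setminus \{i^{\max}\}$ with $i^{\max} \in \argmax_{i} a_i$, since it only uses $\max_{i \in \ccal I} a_i \leq \min_{i \notin \ccal I} a_i$, and this variant recovers $|\ccal I| = M$.
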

\begin{proof}[Proof of Proposition~\ref{prop: convex_risk_functional}]
	For any $p \in \Delta^M$, denote $p^+ = (p_1,\dots,p_M)$ and
	define $\ccal I = {\ccal I}_p$ depending on $p$ by
	$${\ccal I}_p := \begin{cases} \left\{i \in [M] : \frac{\del h_{\rho,\ccal S}}{\del x_i}(p^+) < 0 \right\} & \mbox{if $\exists i \in [M]$ such that $\frac{\del h_{\rho,\ccal S}}{\del x_i}(p^+) < 0$,}\\
	\{0\} & \mbox{otherwise.} \end{cases}$$
	We now prove that ${\ccal T}_{\rho,\ccal S, p}^{(1)} \supseteq {\ccal T}_{\ccal S, p}^{(2)}({\ccal I})$. Take ${\frak D}_{\ccal S}(q) \in {\ccal T}_{\ccal S, p}^{(2)}({\ccal I})$. We note that $\sigma_{\rho,\ccal S}(q) = h_{\rho,\ccal S}(q^+)$.
	By Lemma~\ref{lem: convex_lemma} below, $$\sigma_{\rho,\ccal S}(q) = h_{\rho,\ccal S}(q^+) \geq h_{\rho,\ccal S}(p^+) = \sigma_{\rho,\ccal S}(p) \quad \To \quad {\ccal T}_{\rho,\ccal S, p}^{(1)} \supseteq {\ccal T}_{\ccal S, p}^{(2)}(\ccal I).$$
\end{proof}
\begin{lemma}\label{lem: convex_lemma}
	Following the notation in Proposition~\ref{prop: convex_risk_functional},
	we have for any $q \in {\ccal T}_{\ccal S, p}^{(2)}({\ccal I}_p)$,
	$$h_{\rho,\ccal S}(q^+) \geq h_{\rho,\ccal S}(p^+).$$
\end{lemma}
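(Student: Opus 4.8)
The plan is to reduce the whole statement to the first-order (tangent-plane) inequality for convex functions, followed by a sign-bookkeeping argument driven entirely by the definition of ${\ccal I}_p$. For a differentiable $h_{\rho,\ccal S}$, the hypothesis that its first-order partial derivatives are non-decreasing is exactly the statement that its gradient is a monotone map, i.e.\ that $h_{\rho,\ccal S}$ is convex on the relevant domain; equivalently, $h_{\rho,\ccal S}$ is the restriction (through the affine change of variables $x \mapsto (1-\sum_{i=1}^M x_i, x)$) of the convex function defining $\sigma_{\rho,\ccal S}$. This yields the first-order inequality $h_{\rho,\ccal S}(q^+) \geq h_{\rho,\ccal S}(p^+) + \nabla h_{\rho,\ccal S}(p^+)\cdot(q^+-p^+)$, so it suffices to show that $\nabla h_{\rho,\ccal S}(p^+)\cdot(q^+-p^+) = \sum_{i=1}^M \frac{\del h_{\rho,\ccal S}}{\del x_i}(p^+)\,(q_i-p_i) \geq 0$.

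First I would show that this sum is a sum of non-negative terms, checking the sign of each factor coordinate by coordinate. Recall that $q \in {\ccal T}_{\ccal S, p}^{(2)}({\ccal I}_p)$ means $q_i \leq p_i$ for $i \in {\ccal I}_p$ and $q_i \geq p_i$ for $i \notin {\ccal I}_p$. In the principal case ${\ccal I}_p = \{i \in [M] : \frac{\del h_{\rho,\ccal S}}{\del x_i}(p^+) < 0\}$, for $i \in {\ccal I}_p$ we have $\frac{\del h_{\rho,\ccal S}}{\del x_i}(p^+) < 0$ and $q_i - p_i \leq 0$, so their product is $\geq 0$; while for $i \in [M]\setminus{\ccal I}_p$ the very definition of ${\ccal I}_p$ forces $\frac{\del h_{\rho,\ccal S}}{\del x_i}(p^+) \geq 0$, and $q_i - p_i \geq 0$, again with product $\geq 0$. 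In the degenerate case ${\ccal I}_p = \{0\}$, every $i \in [M]$ has $\frac{\del h_{\rho,\ccal S}}{\del x_i}(p^+)\geq 0$ and lies outside ${\ccal I}_p$ (since $0 \notin [M]$), so $q_i \geq p_i$ and each product is $\geq 0$; the constraint on the $0$-th coordinate is irrelevant because $q_0$ does not enter $q^+$. In every case $\nabla h_{\rho,\ccal S}(p^+)\cdot(q^+-p^+)\geq 0$, and combining this with the first-order inequality gives $h_{\rho,\ccal S}(q^+) \geq h_{\rho,\ccal S}(p^+)$.

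The main point to get right is that the argument genuinely requires the full convexity (monotone gradient) of $h_{\rho,\ccal S}$, not merely that each partial derivative is non-decreasing in its coordinates: moving in a straight line from $p^+$ to $q^+$ mixes decreasing coordinates (on ${\ccal I}_p$) with increasing ones (off ${\ccal I}_p$), so a partial derivative that is negative at $p^+$ may change sign along the path, and only the global first-order convexity inequality—rather than any coordinatewise monotonicity bound—controls the sign of $h_{\rho,\ccal S}(q^+)-h_{\rho,\ccal S}(p^+)$. The remaining care is purely bookkeeping: ensuring that $p^+,q^+$ lie in the region where $h_{\rho,\ccal S}$ agrees with $\sigma_{\rho,\ccal S}$, so that the identity $\sigma_{\rho,\ccal S}(q)=h_{\rho,\ccal S}(q^+)$ invoked in Proposition~\ref{prop: convex_risk_functional} applies, and that the index $0$ is handled separately from $[M]$.
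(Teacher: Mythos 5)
Your tangent-plane argument is correct, but it is genuinely different from the paper's proof, and the difference matters. The paper proves the lemma coordinate-by-coordinate: it applies the Mean Value Theorem along each segment in which a single coordinate is moved from $p_i$ to $q_i$, and controls the sign of the relevant partial derivative using only the hypothesis that $\del h_{\rho,\ccal S}/\del x_i$ is non-decreasing in $x_i$. As you observe in your final paragraph, such an argument cannot work: once the first coordinate has been moved, the Mean Value Theorem is invoked at base points of the form $(q_1,c_2,p_3,\dots,p_M)$, and monotonicity of $\del h_{\rho,\ccal S}/\del x_2$ in $x_2$ alone cannot transfer the sign of $\del h_{\rho,\ccal S}/\del x_2(p^+)$ to such points. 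This gap is fatal, not cosmetic. Take $M=2$ and $h_{\rho,\ccal S}(x_1,x_2)=x_1x_2-0.2x_1-0.4x_2$, whose partial derivatives $x_2-0.2$ and $x_1-0.4$ are non-decreasing in \emph{every} coordinate; with $p=(0.5,0.4,0.1)$ one has $\del h_{\rho,\ccal S}/\del x_1(p^+)=-0.1<0$ and $\del h_{\rho,\ccal S}/\del x_2(p^+)=0$, hence ${\ccal I}_p=\{1\}$, and $q=(0.5,0.2,0.3)\in{\ccal T}_{\ccal S,p}^{(2)}({\ccal I}_p)$ gives $h_{\rho,\ccal S}(q^+)=-0.10<-0.08=h_{\rho,\ccal S}(p^+)$. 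So the lemma is false under the literal ``non-decreasing partial derivatives'' hypothesis, and the failure occurs exactly at the paper's second step, where $\del h_{\rho,\ccal S}/\del x_2(q_1,p_2)=-0.2<0$ even though $\del h_{\rho,\ccal S}/\del x_2(p^+)\geq 0$.

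The one inaccuracy in your write-up is the opening claim that non-decreasing partial derivatives are ``exactly'' gradient monotonicity, i.e.\ convexity: the example above has all second-order (including mixed) partial derivatives nonnegative yet is not convex, so the lemma's stated hypothesis is strictly weaker than what your proof uses. In other words, your argument proves the lemma under the \emph{additional} assumption that $h_{\rho,\ccal S}$ is convex; it does not prove the lemma as literally stated (nothing can, since it is false). That is nonetheless the right repair: in the only setting where the lemma is applied (Corollary~\ref{cor: convex_risk_functional}, and through it Proposition~\ref{prop: table_of_dominant}(b)), one has $h_{\rho,\ccal S}(x)=g\paren{1-\sum_{i=1}^M x_i,x}$ for convex $g$, so $h_{\rho,\ccal S}$ is convex and your first-order inequality, combined with the sign bookkeeping at $p^+$ (which you carry out correctly in both the principal case and the degenerate case ${\ccal I}_p=\{0\}$), goes through verbatim. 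You should simply state convexity of $h_{\rho,\ccal S}$ as the hypothesis rather than present it as a reformulation of the condition in Proposition~\ref{prop: convex_risk_functional}.
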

\begin{proof}[Proof of Lemma~\ref{lem: convex_lemma}]
	If $1 \in \ccal I_p$, then $q_1 \leq p_1$ and by the Mean Value Theorem, there exists $c_1 \in [q_1,p_1]$ such that
	$$h_{\rho,\ccal S}(p_1,\dots,p_M) - h_{\rho,\ccal S}(q_1,\dots,p_M) = \frac{\del h_{\rho,\ccal S}}{\del x_1} (c_1,\dots,p_M) \leq \frac{\del h_{\rho,\ccal S}}{\del x_1} (p_1,\dots,p_M) < 0,$$
	where the second last inequality holds because the partial derivatives $\frac{\del h_{\rho,\ccal S}}{\del x_i}$ are non-decreasing in $x_i$ for all $i$. Hence, $$h_{\rho,\ccal S}(q_1,\dots,p_M) \geq h_{\rho,\ccal S}(p_1,\dots,p_M).$$ If $1 \notin \ccal I_p$, we have $q_1 \geq p_1$, and by the Mean Value Theorem, there exists $c_1 \in [p_1,q_1]$ such that
	$$h_{\rho,\ccal S}(q_1,\dots,p_M)-h_{\rho,\ccal S}(p_1,\dots,p_M) = \frac{\del h_{\rho,\ccal S}}{\del x_1} (c_1,\dots,p_M) \geq \frac{\del h_{\rho,\ccal S}}{\del x_1} (p_1,\dots,p_M) \geq 0.$$
	Thus, the same conclusion holds.
	Proceeding coordinate-wise and using the same argument, we have $$h_{\rho,\ccal S}(p^+) = h_{\rho,\ccal S}(p_1,p_2,\dots,p_M) \leq h_{\rho,\ccal S}(q_1,p_2,\dots,p_M)\leq h_{\rho,\ccal S}(q_1,q_2,\dots,p_M) \leq \dots \leq h_{\rho,\ccal S}(q_1,q_2,\dots,q_M) = h_{\rho,\ccal S}(q^+).$$
\end{proof}
\begin{corollary}\label{cor: convex_risk_functional}
Let $\rho$ be a risk functional such that
$$\sigma_{\rho,\ccal S} = g|_{\Delta^M}$$
for some convex function $g : \RR^{M+1} \to \RR$ with first-order partial derivatives.
Fix $p \in \Delta^M$. Then there exists $\ccal I \subseteq {[M]}_0, \ccal I \neq \emptyset, {[M]}_0$ depending on $p$ such that ${\ccal T}_{\rho,\ccal S, p}^{(1)} \supseteq {\ccal T}_{\ccal S, p}^{(2)}(\ccal I)$.
\end{corollary}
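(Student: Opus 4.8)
The plan is to reduce the corollary to Proposition~\ref{prop: convex_risk_functional} by exhibiting the function $h_{\rho,\ccal S} : \RR^M \to \RR$ that the proposition requires. The natural candidate is the pullback of $g$ along the affine parametrisation of the simplex: define
\begin{equation*}
h_{\rho,\ccal S}(x) := g\paren{1-\textstyle\sum_{i=1}^M x_i,\ x_1,\dots,x_M}, \qquad x \in \RR^M.
\end{equation*}
Because $g$ is defined on all of $\RR^{M+1}$, this is well defined on all of $\RR^M$, and on the simplex it agrees with $\sigma_{\rho,\ccal S}$ since $\sigma_{\rho,\ccal S} = g|_{\Delta^M}$. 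Hence the defining identity $h_{\rho,\ccal S}(x) = \sigma_{\rho,\ccal S}(1-\sum_i x_i, x)$ demanded by Proposition~\ref{prop: convex_risk_functional} holds precisely whenever $(1-\sum_i x_i, x) \in \Delta^M$.

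Next I would verify the two regularity hypotheses of Proposition~\ref{prop: convex_risk_functional}: that $h_{\rho,\ccal S}$ possesses first-order partial derivatives, and that each such partial is non-decreasing in its own variable. Write $T : \RR^M \to \RR^{M+1}$ for the affine map $T(x) = (1-\sum_{i=1}^M x_i, x_1,\dots,x_M)$, so that $h_{\rho,\ccal S} = g \circ T$. Existence of the partials follows from the chain rule; denoting the coordinates of the argument of $g$ by $y_0,\dots,y_M$, one computes $\del h_{\rho,\ccal S}/\del x_i = \del g/\del y_i - \del g/\del y_0$ evaluated at $T(x)$. Moreover, since $g$ is convex and $T$ is affine, the composition $h_{\rho,\ccal S} = g \circ T$ is convex on $\RR^M$.

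The only point needing a short argument is the monotonicity of the partials, and this is exactly where convexity is used. Fix an index $i$ and hold the remaining coordinates fixed; the one-dimensional restriction $t \mapsto h_{\rho,\ccal S}(x + t e_i)$ is convex (being the restriction of a convex function to a line) and differentiable (its derivative is $\del h_{\rho,\ccal S}/\del x_i$, which exists by the previous step). A differentiable convex function of one variable has non-decreasing derivative, so $\del h_{\rho,\ccal S}/\del x_i$ is non-decreasing in $x_i$, as required. With both hypotheses in hand, Proposition~\ref{prop: convex_risk_functional} immediately produces the set $\ccal I \subseteq {[M]}_0$, $\ccal I \neq \emptyset, {[M]}_0$ (depending on $p$) with ${\ccal T}_{\rho,\ccal S, p}^{(1)} \supseteq {\ccal T}_{\ccal S, p}^{(2)}(\ccal I)$, which is the claim.

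I do not expect a serious obstacle here: the entire content is packaging $h_{\rho,\ccal S} = g \circ T$ and invoking the standard fact that a differentiable convex function has monotone partial derivatives. The one subtlety worth flagging is that Proposition~\ref{prop: convex_risk_functional} needs $h_{\rho,\ccal S}$ to be defined (with the monotonicity property) on all of $\RR^M$ rather than only on $\Delta^M$; this is precisely why it matters that $g$ is assumed convex on the ambient space $\RR^{M+1}$, since then $g \circ T$ inherits convexity on the whole of $\RR^M$ and the line-restriction argument goes through globally.
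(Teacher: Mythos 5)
Your proposal is correct and follows essentially the same route as the paper: both define $h_{\rho,\ccal S} = g \circ T$ with $T(x) = (1-\sum_{i=1}^M x_i, x)$, observe it agrees with $\sigma_{\rho,\ccal S}$ on $\Delta^M$, deduce convexity of $h_{\rho,\ccal S}$ from convexity of $g$ and affinity of $T$, and conclude via Proposition~\ref{prop: convex_risk_functional}. Your write-up is in fact slightly more careful than the paper's, since you make explicit both the chain-rule computation of the partials and the one-dimensional restriction argument showing that a differentiable convex function has non-decreasing partial derivatives, steps the paper leaves implicit.
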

\begin{proof}[Proof of Corollary~\ref{cor: convex_risk_functional}]
	Suppose $\rho$ is a risk functional such that
$$\sigma_{\rho,\ccal S} = g|_{\Delta^M}$$
for some convex function $g : \RR^{M+1} \to \RR$.
Define the function $h_{\rho,\ccal S} : \RR^M \to \RR$ by
$$h_{\rho,\ccal S}(x) = g\paren{1- \sum_{i=1}^M x_i, x}.$$
 Then whenever $\paren{ 1-\sum_{i=1}^M x_i, x} \in \Delta^M$, since $\sigma_{\rho,\ccal S} = g|_{\Delta^M}$,
$$h_{\rho,\ccal S}(x) = g\paren{1- \sum_{i=1}^M x_i, x} = \sigma_{\rho,\ccal S}\paren{1-\sum_{i=1}^M x_i, x}.$$
Furthermore, 
$h_{\rho,\ccal S}$ is convex,
since for any $t \in [0,1], x,y \in \RR^M$, by the convexity of $g$,
\begin{align*}
h_{\rho,\ccal S}(tx +(1-t)y) &= g\paren{1-\sum_{i=1}^M (t x_i+(1-t)y_i),tx+(1-t)y}\\
&= g\paren{t\paren{1-\sum_{i=1}^M x_i} + (1-t)\paren{1-\sum_{i=1}^M y_i},tx+(1-t)y}\\
&\leq tg\paren{1 - \sum_{i=1}^M x_i, x} + (1-t)g\paren{1 - \sum_{i=1}^M y_i, y}\\
&= th_{\rho,\ccal S}(x) + (1-t)h_{\rho,\ccal S}(y).
\end{align*}
Thus, $h_{\rho,\ccal S}$ is convex and has non-decreasing first-order partial derivatives, and the result follows from Proposition~\ref{prop: convex_risk_functional}. 
\end{proof}
\begin{proof}[Proof of Proposition~\ref{prop: table_of_dominant}]
	The result follows from Corollaries~\ref{cor: drf_dominant} and~\ref{cor: convex_risk_functional}.
\end{proof}
In Figure~\ref{fig: dom_rf_venn}, we illustrate how the different classes of risk functionals introduced related to one another. The labels $\mathrm{DRF}$, $\mathrm{Con}$, $\mathrm{Dom}$, and $\mathrm{Low}$ denote the classes of risk functionals that satisfy Proposition~\ref{prop: table_of_dominant}(a), Proposition~\ref{prop: table_of_dominant}(b), Definition~\ref{def: dominant_risk_functional}, and Lemma~\ref{lem: lower_bound} respectively.
\begin{figure}
\begin{center}
	\begin{tikzpicture}
		\draw (2,.75) arc (0:360:1);
		\draw (3,.75) arc (0:360:1);
		\draw (3.75,1) arc (0:360:2.25);
		\draw (4.5,1) arc (0:360:3);
		\draw (0,1.75) node {$\mathrm{DRF}$};
		\draw (3,1.75) node {$\mathrm{Con}$};
		\draw (3.25,3) node {$\mathrm{Dom}$};
		\draw (3.75,3) node[anchor=south west] {$\mathrm{Low}$};
	\end{tikzpicture}
\end{center}
\caption{A classification of the risk functionals that satisfy Lemma~\ref{lem: lower_bound}.}
\label{fig: dom_rf_venn}
\end{figure}
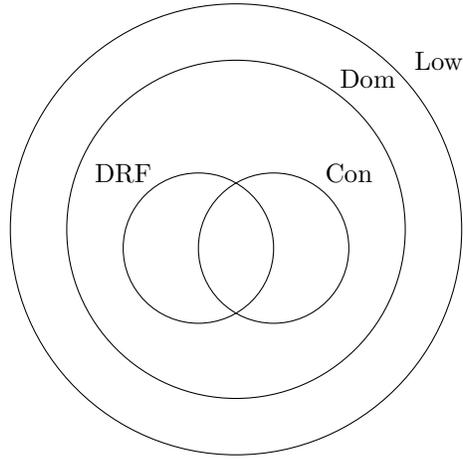
\section{Proof of Lower Bound}
\label{app: lower_bound}
\begin{proof}[Proof of Theorem~\ref{thm: rho_lower_bound}]
	The proof is almost identical to that of \citet[Theorem 3.1]{baudry2020thompson}, by replacing $(\mathrm{CVaR}_\alpha,c^*)$ with $(\rho,r^*)$, $r^* = \max_{k \in [K]}{\rho(\nu_k)}$, so we include it simply for the sake of completeness. Fix $\nu = (\nu_1,\dots,\nu_K) \in {\ccal Q}$, and let $k$ be a sub-optimal arm in $\nu$, that is, $$\rho(\nu_k) < \rho(\nu_{k^*}) =: r^*,$$ where $k^* \in \argmax_{k \in [K]} \rho(\nu_k)$. Suppose there exists $\nu_k' \in {\ccal Q}_k$ such that $$\rho(\nu_k') > r_{k^*}.$$ If this does not hold, ${\ccal K}_{\inf}^\rho (\nu_k, r^*) = +\infty$, and the lower bound holds trivially. Consider the alternative bandit model $\nu'$ in which $\nu_i' = \nu_i$ for all $i \neq k$. By the fundamental inequality (6) of \citet{doi:10.1287/moor.2017.0928}, we obtain that $$\EE_{\pi,\nu}[T_k(n)] \KL(\nu_k,\nu_k') \geq \mathrm{kl}\paren{\EE_{\pi,\nu}\parenb{\frac{T_k(n)}{n}},\EE_{\pi,\nu'}\parenb{\frac{T_k(n)}{n}}},$$ where $\mathrm{kl}(x,y) = x \log(x/y) + (1-x) \log ((1-x)/(1-y))$ denotes the binary relative entropy. By the arguments in \citet{doi:10.1287/moor.2017.0928}, we have $$\liminf_{n \to \infty} \frac{\mathrm{kl}\paren{\EE_{\pi,\nu}\parenb{\frac{T_k(n)}{n}},\EE_{\pi,\nu'}\parenb{\frac{T_k(n)}{n}}}}{\log n } \geq 1$$
	which implies that $$ \liminf_{n \to \infty} \frac{\EE_{\pi,\nu}[T_k(n)]}{\log n} \geq \frac{1}{\KL(\nu_k,\nu_k')}.$$
	Taking the infimum over $\nu_k' \in {\ccal Q}_k$ such that $\rho(\nu_k') \geq r^*$ yields the result, by the definition of ${\ccal K}_{\inf}^\rho$.
\end{proof}

\section{Proof of Upper Bound for $\rho$-MTS}
\label{app: rho_mts}
We begin by listing and recapitulating some notation for the proof of the regret bound. Let $(\ccal C,d)$ be a compact metric space. Let $C \subseteq [0,1]$ denote the common support for all probability measures $\nu_k \in \ccal C, k \in [K]$. Denote $r_k^{\rho} := \rho(\nu_k)$, and $\KL(\mu,\eta)$ the KL divergence between the probability measures $\mu,\eta \in {\ccal C}$. Suppose $(\ccal C,d) = ({\ccal P}_{\ccal S},D_\infty)$ for some finite alphabet $\ccal S=\{s_0,s_1,\dots,s_M\}\subset C$ and $|\ccal S|=M+1$.
\begin{itemize}
	\item Let $T_k^j(t) = \sum_{\ell = 1}^t \II\{A_{\ell} = k, X_k = j\}$ denote the number of times that the arm $k$ is chosen, and gives a reward $s_j$.
	\item Let $\nu_k = {\frak D}_{\ccal S}(p_k)$ denote the multinomial distribution of each arm $k$ characterised by its probability vector $p_k \in \Delta^M$.
	\item Let $\Dir(\alpha_k(t))$ denote a Dirichlet posterior distribution of arm $k$ given the observation after $t$ rounds, where $$ \alpha_k(t) := (1 + T_k^0(t),\dots,1+T_k^{M}(t))$$ characterises its distribution.
	Thus, we can denote the index policy of $\rho$-MTS at time $t$ by $$r_{k}^{\rho}(t) = \rho({\frak D}_{\ccal S}(L_k(t-1))),\quad \text{where}\quad L_k(t-1) \Fol \Dir(\alpha_k(t-1)).$$
	Finally, let $\widehat p_k(t):= \alpha_{k}(t-1)/(T_k(t-1) + M + 1)$ denote the mean of this Dirichlet distribution, and $\widehat \nu_k(t) := {\frak D}_{\ccal S}(\widehat p_k(t))$ the corresponding empirical distribution.
\end{itemize}
\begin{proof}[Proof of Lemma~\ref{lem: upper_bounding_post_cv}]
Following \citet{baudry2020thompson} but replacing $(c_{k,t}^\alpha, c_1^\alpha)$ with $(r_{k}^{\rho}(t), r_1^{\rho})$ therein, for any constant $T_0(n)$,
$$A \leq T_0(n) + A',$$
where
\begin{align*}
A' &\leq \sum_{t = 1}^n \EE\parenb{\II(T_k(t-1) \geq T_0(n), D_\infty({\frak D}_{\ccal S}(\widehat p_k(t)),{\frak D}_{\ccal S}(p_k)) \leq \eeps_2) \cdot \underbrace{\PP(\rho({\frak D}_{\ccal S}(L_k(t-1))) \geq r_1^{\rho} - \eeps_1 \mid {\ccal F}_{t-1})}_{(\dagger)}}.
\end{align*}
Since $\rho$ is continuous, by Lemma~\ref{lem: upper_bound},
$$(\dagger) \leq C_1{(T_k(t-1)+M_k+1)}^{M/2} \exp(-(T_k(t-1)+M+1){\ccal K}_{\inf}^{\rho_{\ccal S}}({\frak D}_{\ccal S}(\widehat{p}_k(t)), r_1^\rho - \eeps_1)).$$
Fix $\eeps_3 \in (0, {\ccal K}_{\inf}^{\rho_{\ccal S}}({\frak D}_{\ccal S}(\widehat{p}_k(t)),r_1^\rho))$.
Following the arguments in \citet{baudry2020thompson}, but using only the lower-semicontinuity of ${\ccal K}_{\inf}^{\rho_{\ccal S}}$ in both arguments, there exists a finite constant $C_1'>0$ such that
$$(\dagger) \leq C_1' \exp(-(T_0(n)+M+1)({\ccal K}_{\inf}^{\rho_{\ccal S}}(\nu_k,r_1^\rho) - \eeps_3/4)),$$
and thus
\begin{align*}
A &\leq T_0(n) + \sum_{t=1}^n C_1' \exp(-(T_0(n)+M+1)({\ccal K}_{\inf}^{\rho_{\ccal S}}(\nu_k,r_1^\rho) - \eeps_3/2))\\
&\leq T_0(n) +C_1' n\exp(-(T_0(n)+M+1)({\ccal K}_{\inf}^{\rho_{\ccal S}}(\nu_k,r_1^\rho) - \eeps_3/2)).
\end{align*}
Choosing
$$T_0(n)=\frac{\log n}{{\ccal K}_{\inf}^{\rho_{\ccal S}}(\nu_k,r_1^\rho) - \eeps_3/2}-(M+1)$$
yields the upper bound
$$A \leq \frac{\log n}{{\ccal K}_{\inf}^{\rho_{\ccal S}}(\nu_k,r_1^\rho) - \eeps_3/2} + \ccal O(1).$$
\end{proof}
\begin{proof}[Proof of Lemma~\ref{lem: upper_bounding_pre_cv}]
To upper bound $B$, we consider the following decomposition:
\begin{align*}
B \leq \underbrace{\EE \parenb{\sum_{t=1}^n \II(A_t=k, r_{k}^{\rho}(t-1) < r_1^{\rho} - \eeps_1)}}_{B_1} + \underbrace{\EE \parenb{\sum_{t=1}^n \II(A_t=k, D_\infty({\frak D}_{\ccal S}(\widehat{p}_k(t)),{\frak D}_{\ccal S}(\nu_k))}}_{B_2}.
\end{align*}
Since $B_2$ does not involve $\rho$, we can upper-bound it directly by the upper bound derived by \citet{pmlr-v117-riou20a} to get, for any $\eeps_2 > 0$, $$B_2 \leq KM\paren{\frac{2M}{\eeps_2} + \frac{2}{\eeps_2^2}}.$$
To upper-bound $B_1$, we follow the arguments in \citet{baudry2020thompson} but replacing $(c_{k,t}^\alpha, c_1^\alpha)$ with $(r_{k}^{\rho}(t), r_1^{\rho})$ to get
\begin{align*}
B_1
&\leq \sum_{\ell=1}^n \EE\parenb{\frac{\PP(\sigma_{\rho,\ccal S}(L_1(\ell-1)) \leq r_1^{\rho} - \eeps_1)}{1- \PP(\sigma_{\rho,\ccal S}(L_1(\ell-1)) \leq r_1^{\rho} - \eeps_1)}}.
\end{align*}
We now partition the right-hand side into the different possible values that $\alpha/(\ell+M+1)$ can take by upper bounding $B_1$ by the sum of three terms $B_{11}$, $B_{12}$, and  $B_{13}$, where
\begin{align*}
B_{11} &= \sum_{\ell=1}^n \EE\parenb{\frac{\PP(\sigma_{\rho,\ccal S}(L_1(\ell-1)) \leq r_1^{\rho} - \eeps_1)}{1- \PP(\sigma_{\rho,\ccal S}(L_1(\ell-1)) \leq r_1^{\rho} - \eeps_1)} \II\{\sigma_{\rho,\ccal S}(\alpha/(\ell+M+1)) \geq r_1^\rho - \eeps_1/2\}}\\
B_{12} &= \sum_{\ell=1}^n \EE\parenb{\frac{\PP(\sigma_{\rho,\ccal S}(L_1(\ell-1)) \leq r_1^{\rho} - \eeps_1)}{1- \PP(\sigma_{\rho,\ccal S}(L_1(\ell-1)) \leq r_1^{\rho} - \eeps_1)} \II\{r_1^\rho - \eeps_1 \leq \sigma_{\rho,\ccal S}(\alpha/(\ell+M+1)) \leq r_1^\rho - \eeps_1/2\}}\\
B_{13} &= \sum_{\ell=1}^n \EE\parenb{\frac{\PP(\sigma_{\rho,\ccal S}(L_1(\ell-1)) \leq r_1^{\rho} - \eeps_1)}{1- \PP(\sigma_{\rho,\ccal S}(L_1(\ell-1)) \leq r_1^{\rho} - \eeps_1)} \II\{\sigma_{\rho,\ccal S}(\alpha/(\ell+M+1)) \leq r_1^\rho - \eeps_1\}}.	
\end{align*}
We now upper bound each of these terms by  constants.
\subsubsection{\underline{Case 1: $\sigma_{\rho,\ccal S}(\alpha/(\ell+M+1)) \geq r_1^\rho - \eeps_1/2$}}
\phantom{-}\\
Since $\sigma_{\rho,\ccal S}$ is continuous, there exists $p_* \in \Delta^M$ such that
$$\KL({\frak D}_{\ccal S}(\alpha/(\ell+M+1)),{\frak D}_{\ccal S}(p_*))={\ccal K}_{\inf}^{\rho_{\ccal S}}({\frak D}_{\ccal S}(\alpha/(\ell+M+1)),r_1^\rho-\eeps_1).$$
By Lemma~\ref{lem: upper_bound}, under the condition $\sigma_{\rho,\ccal S}(\alpha/(\ell+M+1)) \geq r_1^\rho - \eeps_1/2$,
\begin{align*}
\PP(\sigma_{\rho,\ccal S}(L_1(\ell-1))
\leq r_1^{\rho} - \eeps_1)
&\leq C_1{(\ell+M+1)}^{M/2} \exp(-(\ell+M+1)\KL({\frak D}_{\ccal S}(\alpha/(\ell+M+1)),{\frak D}_{\ccal S}(p_*))).
\end{align*}
Define $$\delta = \inf \{\KL(\eta,\mu) : \mu \in \inv{\rho}((-\infty,r_1^{\rho}-\eeps_1]), \eta \in \inv{\rho}([r_1^{\rho}-\eeps/2,\infty))\}.$$
Following the computation in \citet{baudry2020thompson}, $\delta > 0$ and
there exists $\ell_1,\ell_\gamma \in \NN$ such that for $\ell > \max\{\ell_1,\ell_\gamma\}$,
\begin{align*}
B_{11} 
&\leq \max\{\ell_1,\ell_\gamma\} + \sum_{\ell=\max\{\ell_1,\ell_\gamma\}}^n \gamma C_1{(\ell+M+1)}^{M/2} \exp(-(\ell+M+1)\delta) \leq {\ccal O}(1).
\end{align*}
\subsubsection{\underline{Case 2: $r_1^\rho - \eeps_1 \leq \sigma_{\rho,\ccal S}(\alpha/(\ell+M+1)) \leq r_1^\rho - \eeps_1/2$}}
\phantom{-}\\
Under the condition $\sigma_{\rho,\ccal S}(\alpha/(\ell+M+1)) \geq r_1^\rho - \eeps_1$, $${\ccal K}_{\inf}^{{\rho}_{\ccal S}}({\frak D}_{\ccal S}(\alpha/(\ell+M+1)),r_1^\rho-\eeps_1)=0.$$
Since $\rho$ is assumed to be dominant, we can apply Lemma~\ref{lem: lower_bound} to get
\begin{align*}
\PP(\sigma_{\rho,\ccal S}(L_1(\ell-1))
\geq r_1^{\rho} - \eeps_1)
&\geq C_2 \cdot \frac{\exp(-(\ell+M+1) {\ccal K}_{\inf}^{{\rho}_{\ccal S}}({\frak D}_{\ccal S}(\alpha/(\ell+M+1)),r_1^\rho-\eeps_1))}{{(\ell+M+1)}^{\frac{3M}{2}+1}}\\
&\geq \frac{C_2}{{(\ell+M+1)}^{\frac{3M}{2}+1}}.
\end{align*}
Hence,
\begin{align*}
B_{12} 
&\leq \sum_{\ell=1}^n \EE\parenb{\frac{\PP(\sigma_{\rho,\ccal S}(L_1(\ell-1)) \leq r_1^{\rho} - \eeps_1)}{1- \PP(\sigma_{\rho,\ccal S}(L_1(\ell-1)) \leq r_1^{\rho} - \eeps_1)} \II\{r_1^\rho - \eeps_1 \leq \sigma_{\rho,\ccal S}(\alpha/(\ell+M+1)) \leq r_1^\rho - \eeps_1/2\}}\\
&\leq \sum_{\ell=1}^n \EE\parenb{1 \cdot C_2^{-1} {(\ell+M+1)}^{\frac{3M}{2}+1} \II\{r_1^\rho - \eeps_1 \leq \sigma_{\rho,\ccal S}(\alpha/(\ell+M+1)) \leq r_1^\rho - \eeps_1/2\}}\\
&\leq \sum_{\ell=1}^n C_2^{-1} {(n+M+1)}^{\frac{3M}{2}+1} \PP(\sigma_{\rho,\ccal S}(\alpha/(\ell+M+1)) \leq r_1^\rho - \eeps_1/2).
\end{align*}
Since $\rho$ is continuous on ${\ccal P}_{\ccal S}$, by the DKW inequality in Lemma~\ref{lem: dkw_inequality}, there exists $\delta_{\eeps_1/3} > 0$ such that for any $m \in \NN$,
$$\PP\paren{\parenl{\sigma_{\rho,\ccal S}\paren{\frac{\alpha-1}{m}}-r_1^\rho} > \eeps_1/3} \leq 2e^{-m\delta_{\eeps_1/3}^2/2}.$$
By Proposition~\ref{prop: continuity_rho}, there exists $\ell_2 \in \NN$ such that for $m \geq \ell_2$,
$$\parenl{\sigma_{\rho,\ccal S}\paren{\frac{\alpha}{m+M+1}}-\sigma_{\rho,\ccal S}\paren{\frac{\alpha-1}{m}}} < \eeps_1/6.$$
Hence, for $\ell \geq \ell_2$,
$$\PP(\sigma_{\rho,\ccal S}(\alpha/(\ell+M+1)) \leq r_1^\rho - \eeps_1/2) \leq \PP(\sigma_{\rho,\ccal S}((\alpha-1)/\ell) \leq r_1^\rho - \eeps_1/3) \leq 2e^{-\ell \delta_{\eeps_1/3}^2/2}.$$
Consequently,
$$B_{12} \leq 2C_2^{-1} \sum_{\ell=1}^n {(\ell+M+1)}^{\frac{3M}{2}+1} e^{-\ell \delta_{\eeps_1/3}^2/2} \leq {\ccal O}(1).$$

\subsubsection{\underline{Case 3: $\sigma_{\rho,\ccal S}(\alpha/(\ell+M+1)) \leq r_1^\rho - \eeps_1$}}
\phantom{-}\\
Since $\rho$ is dominant, we can apply Lemma~\ref{lem: lower_bound} to get
\begin{align*}
\PP(\sigma_{\rho,\ccal S}(L_1(\ell-1))
\geq r_1^{\rho} - \eeps_1)
&\geq C_2 \cdot \frac{\exp(-(\ell+M+1) {\ccal K}_{\inf}^{{\rho}_{\ccal S}}({\frak D}_{\ccal S}(\alpha/(\ell+M+1)),r_1^\rho-\eeps_1))}{{(\ell+M+1)}^{\frac{3M}{2}+1}}\\
&=: C_2 \frac{1}{{(\ell+M+1)}^{\frac{3M}{2}+1} f(\alpha,\ell,r_1^\rho - \eeps_1)}.
\end{align*}
Hence,
\begin{align*}
B_{13} 
&\leq \sum_{\ell=1}^n C_2^{-1} {(\ell+M+1)}^{\frac{3M}{2}+1} \EE[f(\alpha,\ell,r_1^\rho -\ \eeps_1) \cdot \II\{\sigma_{\rho,\ccal S}(\alpha/(\ell+M+1)) \leq r_1^\rho - \eeps_1\}]
\end{align*}
Using Lemma~\ref{prop: continuity_rho} and the continuity of the KL divergence, for any $\eeps'>0$, there exists $\ell_{\eeps'} \in \NN$ such that for $\ell \geq \ell_{\eeps'}$,
$$\ell \KL((\alpha-1)/\ell,p) \geq (\ell+M+1)(\KL(\alpha/(\ell+M+1))-\eeps').$$
Define
$$\ccal A := \{\alpha \in {\{1,\dots,\ell+1\}}^M : \sigma_{\rho, \ccal S}(\alpha/(\ell+M+1)) < r_1^\rho - \eeps_1\}.$$
 By the computations in \citet{baudry2020thompson},
\begin{align*}
&\EE[f(\alpha,\ell,r_1^\rho -\ \eeps_1) \cdot \II\{\sigma_{\rho,\ccal S}(\alpha/(\ell+M+1)) \leq r_1^\rho - \eeps_1\}]\\
&\leq \sum_{\alpha \in \ccal A} \exp\paren{-(\ell+M+1)({\ccal K}_{\inf}^{\rho_{\ccal S}}({\frak D}_{\ccal S}(\alpha/(\ell+M+1)),r_1^\rho)-{\ccal K}_{\inf}^{\rho_{\ccal S}}({\frak D}_{\ccal S}(\alpha/(\ell+M+1)),r_1^\rho-\eeps_1) - \eeps')}\\
&\leq \sum_{\alpha \in \ccal A} \exp\paren{-(\ell+M+1)(\delta' - \eeps')}.
\end{align*}
where by Proposition~\ref{prop: continuity_props} applied to the arguments in \citet{baudry2020thompson},
$$\delta' := \inf_{\mu \in \inv{\rho_{\ccal S}}((-\infty,r_1^\rho-\eeps_1])} \{{\ccal K}_{\inf}^{\rho_{\ccal S}}(\mu,r_1^\rho)-{\ccal K}_{\inf}^{\rho_{\ccal S}}(\mu,r_1^\rho-\eeps_1)\} > 0.$$
Finally, choosing $\eeps' = \delta'/2$ and using the fact that $|\ccal A| \leq {(\ell+1)}^{M+1}$ yields
$$B_{13} \leq \ell_{\delta'/2} + \sum_{\ell=\ell_{\delta'/2}+1}^n C_2^{-1} {(\ell+M+1)}^{\frac{3M}{2}+1} {(\ell+1)}^{M+1} \exp(-(\ell+M+1)\delta'/2) \leq \ccal O(1).$$
Consolidating,
$B_1 \leq B_{11} + B_{12} + B_{13} \leq \ccal O(1)$.
\end{proof}
\end{document}